\newcommand{\calB}{{\mathcal B}}
\newcommand{\calH}{{\mathcal H}}    
\newcommand{\calJ}{{\mathcal J}}
\newcommand{\calN}{{\mathcal N}}    
\newcommand{\calO}{{\mathcal O}}    
\newcommand{\calP}{{\mathcal P}}    
\newcommand{\calQ}{{\mathcal Q}}
\newcommand{\calZ}{{\mathcal Z}}    
\newcommand{\scrQ}{{\mathscr Q}}
\newcommand{\scrV}{{\mathscr V}}
\newcommand{\scrZ}{{\mathscr Z}}    
\newcommand{\bbE}{{\mathbb E}}
\newcommand{\bbP}{{\mathbb P}}    
\newcommand{\bbR}{{\mathbb R}}
\newcommand{\bbU}{{\mathbb U}}
\newcommand{\bbX}{{\mathbb X}}
\DeclareMathOperator*{\argmin}{arg\,min}
\newcommand{\rmd}{{\rm d}}
\newcommand{\sfc}{{\sf c}}
\newcommand{\sfV}{{\sf V}}
\newcommand{\Ee}{{\rm e}}   
\newcommand{\bbra}[1]{\ensuremath{[\![#1]\!]} }  
\newcommand{\ft}{T}
\newcommand{\fin}[1]{{\color{black}{#1}}} 
\newcommand{\rev}[1]{{\color{black}{#1}}} 
\newcommand{\sfQ}{{\sf Q}}
\newcommand{\wtilde}[1]{\widetilde{#1}}
\newcommand{\what}[1]{\widehat{#1}}
\newcommand{\uline}[1]{\underline{#1}}
\newcommand{\oline}[1]{\overline{#1}}
\newcommand{\risk}{{\eta}}
\newcommand{\renyi}{{R{\'{e}}nyi}~}
\theoremstyle{plain}
\newtheorem{theorem}{Theorem}
\newtheorem{proposition}[theorem]{Proposition}
\newtheorem{lemma}[theorem]{Lemma}
\newtheorem{corollary}[theorem]{Corollary}
\theoremstyle{definition}
\newtheorem{assumption}[theorem]{Assumption}
\theoremstyle{remark}
\title{Risk-Sensitive Control as Inference \\with \renyi Divergence}
\author{%
  Kaito Ito\\
  The University of Tokyo\\
  \texttt{kaito@g.ecc.u-tokyo.ac.jp} \\
  \And
  Kenji Kashima \\
  Kyoto University \\
  \texttt{kk@i.kyoto-u.ac.jp} \\
}
\begin{document}

\maketitle

\begin{abstract}
    This paper introduces the risk-sensitive control as inference (RCaI) that extends CaI by using \renyi divergence variational inference. RCaI is shown to be equivalent to log-probability regularized risk-sensitive control, which is an extension of the maximum entropy (MaxEnt) control.
    We also prove that the risk-sensitive optimal policy can be obtained by solving \rev{a} soft Bellman equation, which reveals several equivalences between RCaI, MaxEnt control, the optimal posterior for CaI, and linearly-solvable control.
     Moreover, based on RCaI, we derive the risk-sensitive reinforcement learning (RL) methods: the policy gradient and the soft actor-critic. As the risk-sensitivity parameter vanishes, we recover the risk-neutral CaI and RL, which means that RCaI is a unifying framework.
    Furthermore, we give another risk-sensitive generalization of the MaxEnt control using \renyi entropy regularization. We show that in both of our extensions, the optimal policies have the same structure even though the derivations are very different.
\end{abstract}

\section{Introduction}
Optimal control theory is a powerful framework for sequential decision making~\cite{Hernandez2012}.
In optimal control problems, one seeks to find a control policy that minimizes a given cost functional and typically assumes the full knowledge of the system's dynamics.
Optimal control with unknown or partially known dynamics is called reinforcement learning (RL)~\cite{Sutton2018}, which has been successfully applied to highly complex and uncertain systems, e.g., robotics~\cite{Haarnoja2019walk}, self-driving vehicles~\cite{Kiran2021}. However, solving optimal control and RL problems is still challenging, especially for continuous spaces.

Control as Inference (CaI), which connects optimal control and Bayesian inference, is a promising paradigm for overcoming the challenges of RL~\cite{Levine2018tutorial}.
In CaI, the optimality of a state and control trajectory is defined by introducing optimality variables rather than explicit costs. Consequently, an optimal control problem can be formulated as a probabilistic inference problem.
In particular, maximum entropy (MaxEnt) control~\cite{Ziebart2010,Haarnoja2018} is equivalent to a variational inference problem using the Kullback--Leibler (KL) divergence. MaxEnt control has entropy regularization of a control policy, and as a result, the optimal policy is stochastic. Several works have revealed the advantages of the regularization such as robustness against disturbances~\cite{Eysenbach2021}, natural exploration induced by the stochasticity~\cite{Haarnoja2018,Haarnoja2018soft}, fast convergence of the MaxEnt policy gradient method~\cite{Mei2020}.

On the other hand, the KL divergence is not the only option available for variational inference.
In \cite{Li2016}, the variational inference was extended to \renyi $ \alpha $-divergence~\cite{Renyi1961}, which is a rich family of divergences including the KL divergence. Similar to the traditional variational inference, this extension optimizes a lower bound of the evidence, which is called the variational \renyi bound. The parameter $ \alpha $ of \renyi divergence controls the balance between mass-covering and zero-forcing effects for approximate inference~\cite{Zhang2018}.
However, if we use \renyi divergence for CaI, it remains unclear how $ \alpha $ affects the optimal policy, and a natural question arises: what objective does CaI using \renyi divergence optimize?

\paragraph{Contributions}
The contributions of this work are as follows:
\begin{enumerate}
    \item We reveal that CaI with \renyi divergence solves a log-probability (LP) regularized risk-sensitive control problem with exponential utility~\cite{Whittle1990} (Theorem~\ref{thm:Rcai_equivalence}). 
    The order parameter $ \alpha $ of \renyi divergence plays a role of the risk-sensitivity parameter, which determines whether the resulting policy is risk-averse or risk-seeking. 
    Based on the result, we refer to CaI using \renyi divergence as {\em risk-sensitive} CaI (RCaI). Since \renyi divergence includes the KL divergence, RCaI is a unifying framework of CaI. Additionally, we show that the risk-sensitive optimal policy takes the form of the Gibbs distribution whose energy is given by the Q-function, which can be obtained by solving a soft Bellman equation (Theorem~\ref{thm:RCaI_opt}).
    Furthermore, this reveals several equivalence results between RCaI, MaxEnt control, the optimal posterior for CaI, and linearly-solvable control~\cite{Todorov2006linearly,Dvijotham2011}.
    \item Based on RCaI, we derive risk-sensitive RL methods. First, we provide a policy gradient method~\cite{Williams1992,Nass2019,Noorani2021} for the regularized risk-sensitive RL (Proposition~\ref{prop:policy_gradient}). Next, we derive the risk-sensitive counterpart of the soft actor-critic algorithm~\cite{Haarnoja2018} through the maximization of the variational \renyi bound (Subsection~\ref{subsec:actor_critic}). As the risk-sensitivity parameter vanishes, the proposed methods converge to REINFORCE~\cite{Noorani2021} with entropy regularization and risk-neutral soft actor-critic~\cite{Haarnoja2018}, respectively.
    \fin{One of their advantages over other risk-sensitive approaches, including distributional RL~\cite{Duan2022,Choi2021}, is that they require only minor modifications to the standard REINFORCE and soft actor-critic.}
    The behavior of the risk-sensitive soft actor-critic is examined via an experiment.
    \item Although the risk-sensitive control induced by RCaI has LP regularization of the policy, it is not entropy, unlike the MaxEnt control with the Shannon entropy regularization. To bridge this gap, we provide another risk-sensitive generalization of the MaxEnt control using \renyi entropy regularization. We prove that the resulting optimal policy and the Bellman equation have the same structure as the LP regularized risk-sensitive control~(Theorem~\ref{thm:opt_policy_renyi_reg}). The derivation differs significantly from that for the LP regularization, and for the analysis, we establish the duality between exponential integrals and \renyi entropy~(Lemma~\ref{lem:duality_unbounded_informal}).
\end{enumerate}
The established relations between several control problems in this paper are summarized in Fig.~\ref{fig:relation}.

\begin{wrapfigure}{r}{0.55\textwidth}
    \centering
    \includegraphics[width=0.52\textwidth]{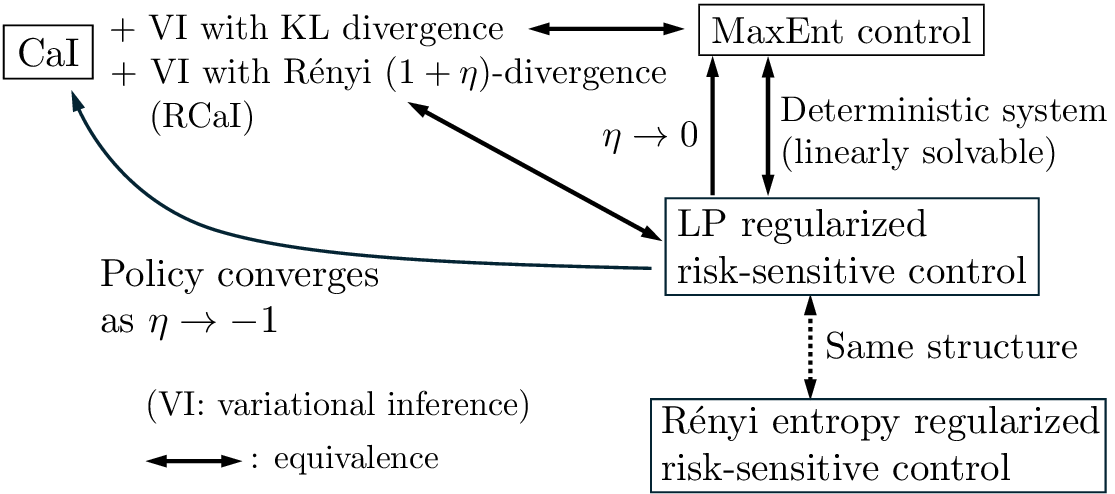}
    \caption{\small Relations of control problems.}
    \label{fig:relation}
\end{wrapfigure}

\paragraph{Related work}
The duality between control and inference has been extensively studied~\cite{Kappen2005,Todorov2006linearly,Todorov2008,Kappen2012,Rawlik2012,Toussaint2009}.
Inspired by CaI, \cite{Okada2020,Lambert2021} \rev{reformulated} model predictive control (MPC) as a \rev{variational} inference problem.
In \cite{Wang2021}, variational inference MPC using Tsallis divergence, which is equivalent to \renyi divergence, was proposed. The difference between our results and theirs is that variational inference MPC infers {\em feedforward} optimal control while RCaI infers feedback optimal control. Consequently, the equivalence of risk-sensitive control and Tsallis variational inference MPC is not derived, unlike RCaI.
\fin{The work \cite{Chow2020} proposed an EM-style algorithm for RL based on CaI, where the resulting policy is risk-seeking. However, risk-averse policies cannot be derived from CaI by this approach. Our framework provides the equivalence between CaI and risk-sensitive control both for risk-seeking and risk-averse cases.}

\fin{Risk-averse policies are known to yield robust control~\cite{Campi1996,Petersen2000minimax}, and risk-seeking policies are useful for balancing exploration and exploitation for RL~\cite{o2021variational}.}
\fin{Because of these merits,} many efforts have been devoted to risk-sensitive RL~\cite{Borkar2002,Noorani2021,Fei2021,Garcia2015}.
In \cite{Enders2024}, risk-sensitive RL with Shannon entropy regularization was investigated. However, their theoretical results are valid only for almost risk-neutral cases. Our results imply that LP and \renyi entropy regularization are suitable for the risk-sensitive RL.

In~\cite{Dvijotham2011}, risk-sensitive control whose control cost is defined by \renyi divergence was investigated, and it was shown that the associated Bellman equation can be linearized. However, \rev{it} is assumed that the transition distribution can be controlled as desired, which is not satisfied in general as pointed out in \cite{Ito2022kullback}. On the other hand, our result shows that when the dynamics is deterministic, LP and \renyi entropy regularized risk-sensitive control problems are linearly solvable without the \rev{full} controllability assumption of the transition distribution.

\paragraph{Notation}
For simplicity, by abuse of notation, we write the density (or probability mass) functions of random variables $ x,y $ as $ p(x), p(y) $, and the expectation with respect to $ p(x) $ is denoted by $ \bbE_{p(x)} $.
For a set $ S $, the set of all densities on $ S $ is denoted by $ \calP(S) $.
\renyi entropy and divergence with parameter $ \alpha > 0 $, $ \alpha \neq 1 $ are defined as $ \calH_\alpha (p) := \frac{1}{\alpha (1-\alpha)} \log \bigl[ \int_{\{u : p(u) >0\} } p (u)^\alpha \rmd u \bigr] $, $D_\alpha (p_1 \| p_2 ) := \frac{1}{\alpha - 1} \log \bigl[ \int_{\{u : p_1(u) p_2(u) > 0\}} p_1(u)^{\alpha} p_2 (u)^{1-\alpha}   \rmd u  \bigr] $. For the factor $ \frac{1}{\alpha (1-\alpha)} $ of $ \calH_\alpha $, we follow \cite{Liese1987,Atar2015} because this choice is convenient for the analysis in Subsection~\ref{subsec:derivation} rather than another common choice $ 1/(1 - \alpha) $.
We formally extend the definition of $ \calH_\alpha $ to $ \alpha < 0 $.
Denote the Shannon entropy and KL divergence by $ \calH_1(p) $, $ D_1 (p_1\|p_2) $, respectively because $ \lim_{\alpha \rightarrow 1} \calH_\alpha (p) = \calH_1(p) $, $ \lim_{\alpha \rightarrow 1} D_\alpha (p_1\|p_2) = D_1 (p_1 \| p_2) $.
For further properties of the \renyi entropy and divergence, see e.g.,~\cite{Van2014}.
The set of integers $ \{k,k+1,\ldots,s\} $, $ k < s $ is denoted by $ \bbra{k,s} $.
A sequence $ \{x_k,x_{k+1},\ldots,x_s\} $ is denoted by $ x_{k:s} $.
The set of non-negative real numbers is denoted by $ \bbR_{\ge 0} $.

\section{Brief introduction to control as inference}\label{sec:CaI}
\begin{wrapfigure}{r}{0.33\textwidth}
    \centering
    \includegraphics[width=0.26\textwidth]{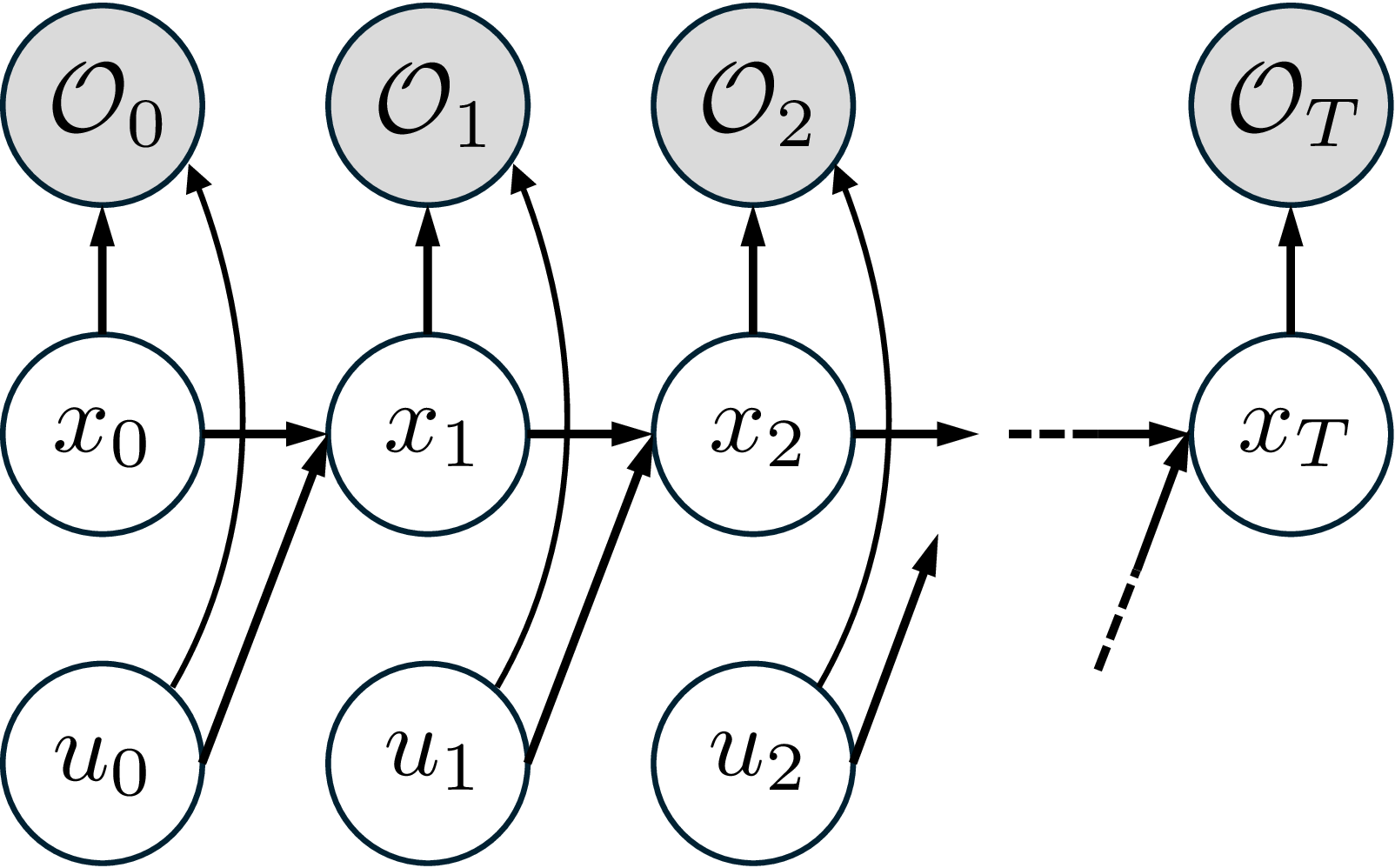}
    \caption{\small Graphical model for CaI.}
    \label{fig:graphical_model_cai}
\end{wrapfigure}
First, we briefly introduce the framework of CaI. For the detailed derivation, see~Appendix~\ref{app:CaI} and \cite{Levine2018tutorial}.
Throughout the paper, $ x_t $ and $ u_t $ denote $ \bbX $-valued state and $ \bbU $-valued control variables at time $ t $, respectively, where $ \bbX \subseteq \bbR^{n_x} $, $ \bbU \subseteq \bbR^{n_u} $, and $ \mu_L(\bbU) > 0 $. Here, $ \mu_L $ denotes the Lebesgue measure on $ \bbR^{n_u} $.
\rev{The initial distribution is $ p(x_0) $}, and the transition density is denoted by $ p(x_{t+1}|x_t,u_t) $, which depends only on the current state and control input. Let \rev{$ \ft > 0 $} be a finite time horizon.
CaI connects control and probabilistic inference problems by introducing {\em optimality variables} $ \calO_t \in \{0,1\} $ as in Fig.~\ref{fig:graphical_model_cai}. For $ c_t : \bbX\times \bbU \rightarrow \bbR_{\ge 0} $, $ c_\ft : \bbX \rightarrow \bbR_{\ge 0} $, which will serve as cost functions, the distribution of $ \calO_t $ is given by $ p (\calO_t = 1 | x_t, u_t)  = \exp(-c_t(x_t,u_t)), \ t \in \bbra{0,\ft-1} $ and $ p (\calO_\ft = 1 | x_\ft)  = \exp(-{c_\ft}(x_\ft)) $.
\rev{If $ \calO_t = 1 $, then $ (x_t,u_t) $ at time $ t $ is said to be ``optimal.''} \rev{The control posterior} $ p(u_t | x_t , \calO_{t:\ft} = 1) $ is called the optimal policy.
Let the prior of $ u_t $ be uniform: $ p(u_t) = 1/\mu_L(\bbU), \forall u_t \in \bbU  $. Although this choice is common for CaI, the arguments in this paper may be extended to non-uniform priors.
Then, for the graphical model in Fig.~\ref{fig:graphical_model_cai}, the distribution of the optimal state and control input trajectory \rev{$ \tau := (x_{0:\ft}, u_{0:\ft-1}) $} satisfies
\begin{align}
	p(\tau | \calO_{0:\ft} = 1) 
	&\propto \left[ p(x_0) \prod_{t=0}^{\ft-1} p(x_{t+1} | x_t,u_t)  \right]  \left[p(\calO_\ft =1 | x_\ft)\prod_{t=0}^{\ft-1} p(\calO_t = 1 | x_t,u_t)\right] \nonumber\\
    &= \left[ p(x_0) \prod_{t=0}^{\ft-1} p(x_{t+1} | x_t,u_t) \right] \exp\left( -{c_\ft}(x_\ft) - \sum_{t=0}^{\ft-1} c_t(x_t,u_t) \right) . \label{eq:opt_trajctory}
\end{align}
For notational simplicity, we will drop $ = 1 $ for $ \calO_t $ in the remainder of this paper.

The optimal policy $ p(u_t|x_t, \calO_{t:\ft}) $ can be computed in a recursive manner. To this end, define
\begin{align}
	\sfQ_t (x_t,u_t) := - \log \frac{p(\calO_{t:\ft} | x_t, u_t)}{\mu_L(\bbU)}, ~~ \sfV_t (x_t) := - \log p(\calO_{t:\ft} |x_t) , \label{eq:V_CaI}
\end{align}
which play a role of value functions. Then, the following result holds.
\begin{proposition}\label{prop:cai}
	Assume that $ \mu_L (\bbU) < \infty $ and let $ \mathsf{c}_t (x_t,u_t) := c_t (x_t,u_t) + \log \mu_L (\bbU) $.
    Assume further the existence of density functions $ p(x_0) $ and $ p(x_{t+1} | x_t, u_t) $ for any $ t\in \bbra{0,\ft-1} $\footnote{When considering discrete variables $ x_t,u_t $, the assumption $ \mu_L(\bbU) < \infty $ is replaced by the finiteness of the set $ \bbU $, and the existence of the densities is not required.}.
    Then, it holds that
	\begin{align}
		p(u_t | x_t, \rev{\calO_{t:\ft} = 1}) = \exp\left(-\sfQ_t (x_t,u_t) + \sfV_t (x_t)\right) , ~~ \forall x_t\in \bbX, \ \forall u_t\in \bbU , \label{eq:optimal_posterior}
	\end{align}
	where
	\begin{align}
		\sfV_t (x_t) &= - \log \left[ \int_\bbU \exp(-\sfQ_t (x_t,u_t)) \rmd u_t\right], \ \forall t\in \bbra{0,\ft-1}, ~~ \sfV_\ft (x_\ft) = {c_\ft}(x_\ft), \label{eq:V_opt_posterior}\\
		\sfQ_t (x_t,u_t) &= \sfc_t (x_t,u_t)  - \log \bbE_{p(x_{t+1} | x_t,u_t)}\left[ \exp(-\sfV_{t+1} (x_{t+1})) \right],\ \forall t\in \bbra{0,\ft-1} . \label{eq:Q_opt_posterior}
	\end{align}
    \hfill $ \diamondsuit $
\end{proposition}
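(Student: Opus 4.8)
The plan is to read all three assertions straight off Bayes' rule and the conditional-independence structure encoded in the graphical model of Fig.~\ref{fig:graphical_model_cai}, handling \eqref{eq:optimal_posterior} and \eqref{eq:V_opt_posterior} first and \eqref{eq:Q_opt_posterior} last. In parallel I would run a short backward induction on $ t $ starting from $ t = \ft $ to guarantee that every density, integral, and expectation appearing below is finite and strictly positive, so that all the logarithms make sense; this uses $ \mu_L(\bbU) < \infty $, the assumed existence of $ p(x_0) $ and $ p(x_{t+1}\mid x_t,u_t) $, and the fact that $ c_t, c_\ft $ are real-valued and non-negative (which forces $ \int_\bbU \exp(-\sfc_t(x_t,u_t))\,\rmd u_t \le 1 $ and hence $ \sfV_t \ge 0 $ for all $ t $ by induction).

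For \eqref{eq:optimal_posterior}: Bayes' rule gives $ p(u_t \mid x_t, \calO_{t:\ft}) = p(\calO_{t:\ft}\mid x_t,u_t)\, p(u_t\mid x_t)\, /\, p(\calO_{t:\ft}\mid x_t) $. Since the prior on $ u_t $ is uniform and independent of $ x_t $, $ p(u_t\mid x_t) = p(u_t) = 1/\mu_L(\bbU) $, so the numerator equals $ p(\calO_{t:\ft}\mid x_t,u_t)/\mu_L(\bbU) = \exp(-\sfQ_t(x_t,u_t)) $ and the denominator equals $ \exp(-\sfV_t(x_t)) $ by the definitions \eqref{eq:V_CaI}; combining yields $ p(u_t\mid x_t,\calO_{t:\ft}) = \exp(-\sfQ_t(x_t,u_t)+\sfV_t(x_t)) $. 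For \eqref{eq:V_opt_posterior}, marginalizing that same numerator over $ u_t $ against the uniform prior gives $ p(\calO_{t:\ft}\mid x_t) = \int_\bbU p(\calO_{t:\ft}\mid x_t,u_t)\, p(u_t)\,\rmd u_t = \int_\bbU \exp(-\sfQ_t(x_t,u_t))\,\rmd u_t $, and taking $ -\log $ produces the stated formula for $ t\in\bbra{0,\ft-1} $; the terminal case is immediate since $ \sfV_\ft(x_\ft) = -\log p(\calO_\ft\mid x_\ft) = c_\ft(x_\ft) $.

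For \eqref{eq:Q_opt_posterior}: by d-separation in Fig.~\ref{fig:graphical_model_cai}, conditionally on $ (x_t,u_t) $ the variable $ \calO_t $ is independent of $ \calO_{t+1:\ft} $, so $ p(\calO_{t:\ft}\mid x_t,u_t) = p(\calO_t\mid x_t,u_t)\, p(\calO_{t+1:\ft}\mid x_t,u_t) = \exp(-c_t(x_t,u_t))\, p(\calO_{t+1:\ft}\mid x_t,u_t) $. Marginalizing over $ x_{t+1} $ and using, again by d-separation, $ p(\calO_{t+1:\ft}\mid x_{t+1},x_t,u_t) = p(\calO_{t+1:\ft}\mid x_{t+1}) = \exp(-\sfV_{t+1}(x_{t+1})) $, one gets $ p(\calO_{t+1:\ft}\mid x_t,u_t) = \bbE_{p(x_{t+1}\mid x_t,u_t)}\bigl[\exp(-\sfV_{t+1}(x_{t+1}))\bigr] $. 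Substituting into $ \sfQ_t(x_t,u_t) = -\log\bigl(p(\calO_{t:\ft}\mid x_t,u_t)/\mu_L(\bbU)\bigr) $ and absorbing $ \log\mu_L(\bbU) $ into $ \sfc_t = c_t + \log\mu_L(\bbU) $ gives the claimed recursion. The one place requiring genuine care is the rigorous justification of the two conditional-independence statements from the graphical model, together with the finiteness/positivity bookkeeping that makes the logarithms meaningful — precisely what the backward induction on $ t $ supplies — and I expect this to be the main, though mild, obstacle. The discrete-variable case of the footnote is handled identically, with integrals replaced by sums and $ \mu_L(\bbU) $ by the number of elements of $ \bbU $.
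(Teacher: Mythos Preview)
Your proposal is correct and follows essentially the same approach as the paper's proof in Appendix~\ref{app:CaI}: the paper introduces the backward messages $\beta_t(x_t,u_t)=p(\calO_{t:\ft}\mid x_t,u_t)$ and $\zeta_t(x_t)=p(\calO_{t:\ft}\mid x_t)$, derives the same three identities via Bayes' rule, marginalization over $u_t$ against the uniform prior, and the two conditional-independence factorizations you invoke, and only then passes to the log domain; you work directly with $\sfQ_t,\sfV_t$ from the start, which is a cosmetic rather than substantive difference. Your added backward-induction bookkeeping for finiteness and positivity is more careful than what the paper spells out, but it does not change the argument.
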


\fin{
    The recursive computation~\eqref{eq:V_opt_posterior},~\eqref{eq:Q_opt_posterior} is similar to the Bellman equation for the risk-seeking control.
    However, it is not still clear what kind of performance index the optimal trajectory $ p(\tau | \calO_{t:\ft}) $ optimizes because \eqref{eq:V_opt_posterior} does not coincide with that of the conventional risk-seeking control.}
\fin{An indirect way to make this clear is variational inference.}
Let us consider finding the closest trajectory distribution $ p^\pi (\tau) $ to the optimal distribution $ p(\tau | \calO_{0:\ft}) $. The variational distribution is chosen as
\begin{align}\label{eq:factors}
	p^\pi (\tau) = p(x_0) \prod_{t=0}^{\ft-1} p(x_{t+1} | x_t,u_t) \pi_t (u_t | x_t) ,
\end{align}
where $ \pi_t(\cdot | x_t) \in \calP(\bbU) $ is the conditional density of $ u_t $ given $ x_t $ and corresponds to a control policy. Then, the minimization of the KL divergence $ D_{1} (p^\pi (\tau) \| p (\tau | \calO_{0:\ft})) $ is known to be equivalent to the following MaxEnt control problem:
\begin{align}
  \underset{\{\pi_t\}_{t=0}^{\ft-1} }{\rm minimize} ~~ \bbE_{p^\pi(\tau)}\left[ c_\ft(x_\ft) + \sum_{t=0}^{\ft-1} \Bigl( c_t(x_t,u_t) - \calH_{1}(\pi_t (\cdot | x_t)) \Bigr) \right] . \label{eq:maxent}
\end{align}
\fin{Especially when the system $ p(x_{t+1} |x_t, u_t) $ is deterministic, the minimum value of $ D_{1} (p^\pi (\tau) \| p (\tau | \calO_{0:\ft})) $ is $ 0 $, and the posterior $ p(u_t|x_t, \calO_{t:\ft}) $ yields the optimal control of \eqref{eq:maxent}.}
As mentioned in Introduction, this work uses \renyi divergence rather than the KL divergence. Moreover, we characterize the optimal posterior $ p(u_t|x_t, \calO_{t:\ft}) $ more directly \rev{even for stochastic systems.}

\section{Control as \renyi divergence variational inference}
In this section, we address the question of what kind of control problem is solved by CaI with \renyi divergence and characterize the optimal policy.
\subsection{Equivalence between CaI with \renyi divergence and risk-sensitive control}\label{subsec:equivalence}
Let $ \risk > -1 $, $ \eta \neq 0 $. Then, CaI using \renyi variational inference is formulated as the minimization of $ D_{1+\risk} (p^\pi(\tau) \| p(\tau | \calO_{0:\ft}) ) $ with respect to $ p^\pi $ in \eqref{eq:factors}.
Now, we have
\begin{align}
	D_{1+\risk} (p^\pi \| p(\cdot | \calO_{0:\ft}) ) = \underbrace{\frac{1}{\risk} \log \left[ \int p^\pi (\tau)^{1+\risk}p(\tau , \calO_{0:\ft})^{-\risk} \rmd \tau \right]}_{-(\text{Variational \renyi bound}) } + \log p(\calO_{0:\ft}) . \label{eq:renyi_bound}
\end{align}
That is, CaI with \renyi divergence is equivalent to maximizing the above variational \renyi bound. Moreover, by \eqref{eq:opt_trajctory}, it holds that
\begin{align}
	& \log \left[ \int p^\pi (\tau)^{1+\risk} p(\tau,\calO_{0:\ft})^{-\risk} \rmd \tau \right] \nonumber\\
	&= \log \left[ \int p^\pi (\tau) \left(  \frac{p(x_0) \prod_{t=0}^{\ft-1} p(x_{t+1} | x_t,u_t) \pi_t(u_t | x_t)}{\frac{1}{\mu_L(\bbU)}p(x_0) \left[\prod_{t=0}^{\ft-1} p(x_{t+1} | x_t,u_t) \right] \exp\left( - c_\ft(x_\ft) - \sum_{t=0}^{\ft-1} c_t (x_t,u_t) \right)}  \right)^\risk \rmd \tau \right] \nonumber\\
	&=  \log \biggl[ \int p^\pi (\tau)  \exp\biggl(\risk c_\ft(x_\ft) + \risk \sum_{t=0}^{\ft-1} \Bigl(c_t (x_t,u_t) + \log \pi_t (u_t|x_t) \Bigr) \biggr) \rmd \tau \biggr] + \eta \log \mu_L(\bbU) . \nonumber
\end{align}
Consequently, we obtain the first equivalence result in this paper.
\begin{theorem}\label{thm:Rcai_equivalence}
    Suppose that the assumptions in Proposition~\ref{prop:cai} hold.
    Then, for any $ \risk > -1 $, $\eta \neq 0 $, the minimization of  $ D_{1+\eta} (p^\pi \| p(\cdot | \calO_{0:\ft} = 1)) $ with respect to $ p^\pi $ in \eqref{eq:factors} is equivalent to
    \begin{align}
	\underset{\{\pi_t\}_{t=0}^{\ft-1} }{\rm minimize} ~~ \frac{1}{\risk} \log \bbE_{p^\pi(\tau)}\left[ \exp\left( \risk {c_\ft}(x_\ft) + \risk \sum_{t=0}^{\ft-1} \Bigl( c_t (x_t,u_t) + \log \pi_t (u_t | x_t)  \Bigr) \right)  \right] . \label{eq:equivalent_prob}
    \end{align}
    \hfill $ \diamondsuit $
\end{theorem}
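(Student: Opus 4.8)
The plan is to carry out the two algebraic reductions already sketched before the statement and then observe that the two leftover terms do not depend on the policy, so they can be discarded without changing the argmin.

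First I would establish the decomposition \eqref{eq:renyi_bound}. Writing $p(\tau\mid\calO_{0:\ft}) = p(\tau,\calO_{0:\ft})/p(\calO_{0:\ft})$ and substituting into the definition $D_{1+\eta}(p_1\|p_2) = \frac{1}{\eta}\log\bigl[\int p_1^{1+\eta}p_2^{-\eta}\rmd u\bigr]$, the constant $p(\calO_{0:\ft})^{\eta}$ factors out of the integral, and $\frac{1}{\eta}\log$ of it contributes the additive term $\log p(\calO_{0:\ft})$; the remaining integral is by definition $-(\text{variational \renyi bound})$. At this point I would check that the support and integrability conditions in the definition of $D_{1+\eta}$ hold for every admissible $\pi$: $p^\pi$ and $p(\tau,\calO_{0:\ft})$ share the factors $p(x_0)\prod_{t} p(x_{t+1}\mid x_t,u_t)$, the densities $\pi_t(\cdot\mid x_t)$ and the optimality likelihoods $\exp(-c_t)$ are positive where defined, and $\mu_L(\bbU)<\infty$, so the set on which both densities are positive is the natural domain of integration and the integrand is finite. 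This is exactly where the standing assumptions of Proposition~\ref{prop:cai} (finiteness of $\mu_L(\bbU)$ and existence of the densities) are used.

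Second, I would plug \eqref{eq:opt_trajctory} for $p(\tau,\calO_{0:\ft})$ into that integral and form $\bigl(p^\pi(\tau)/p(\tau,\calO_{0:\ft})\bigr)^{\eta}$. The key cancellation is that the dynamics factors $p(x_0)$ and $\prod_{t} p(x_{t+1}\mid x_t,u_t)$ occur identically in numerator and denominator and drop out, leaving inside the exponent precisely $\eta c_\ft(x_\ft)+\eta\sum_{t}\bigl(c_t(x_t,u_t)+\log\pi_t(u_t\mid x_t)\bigr)$ together with an additive $\eta\log\mu_L(\bbU)$ from the uniform prior normalization. This reproduces the multi-line display preceding the theorem; the only point worth stressing is that this cancellation is exactly what the choice of variational family \eqref{eq:factors} is engineered to produce — it is essential that $p^\pi$ reuses the true initial and transition densities, rather than treating them as free.

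Combining the two steps gives $D_{1+\eta}\bigl(p^\pi\,\|\,p(\cdot\mid\calO_{0:\ft})\bigr) = \frac{1}{\eta}\log\bbE_{p^\pi(\tau)}\bigl[\exp\bigl(\eta c_\ft(x_\ft)+\eta\sum_{t}(c_t(x_t,u_t)+\log\pi_t(u_t\mid x_t))\bigr)\bigr] + \log\mu_L(\bbU) + \log p(\calO_{0:\ft})$. Since $\mu_L(\bbU)$ depends only on $\bbU$ and $p(\calO_{0:\ft})=\int p(\tau,\calO_{0:\ft})\,\rmd\tau$ depends only on the generative model (the dynamics, the costs, and the prior over controls) and not on $\{\pi_t\}_{t=0}^{\ft-1}$, these last two terms are constant over the feasible set. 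Hence the objective of the \renyi-divergence minimization and the objective \eqref{eq:equivalent_prob} differ by a policy-independent constant, so the two problems have the same minimizers, which is the asserted equivalence. I do not expect a genuine obstacle here; the computation is essentially bookkeeping, and the one place demanding care is confirming that $D_{1+\eta}$ is well defined (finite integral, correct support) for all admissible $\pi$ and that $1+\eta>0$, $\eta\neq0$ are exactly the hypotheses making the manipulations legitimate.
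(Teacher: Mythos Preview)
Your proposal is correct and follows essentially the same approach as the paper: decompose $D_{1+\eta}$ into the negative variational \renyi bound plus $\log p(\calO_{0:\ft})$, then substitute \eqref{eq:opt_trajctory} so that the dynamics factors cancel, leaving the risk-sensitive objective plus the policy-independent constants $\log\mu_L(\bbU)$ and $\log p(\calO_{0:\ft})$. The paper presents these two computations inline in Subsection~\ref{subsec:equivalence} and states the theorem as their consequence, just as you outline.
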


Problem~\eqref{eq:equivalent_prob} is a risk-sensitive control problem with the log-probability regularization $ \log \pi_t(u_t|x_t) $ of the control policy.
Let $ \eta\Phi(\tau) $ be the exponent in \eqref{eq:equivalent_prob}. Then, $ \frac{1}{\eta} \log \bbE[\exp(\eta \Phi(\tau))] = \bbE[\Phi(\tau)] + \frac{\eta}{2} {\rm Var} [\Phi(\tau)] + O(\eta^2) $, where $ {\rm Var}[\cdot] $ denotes the variance~\cite{Mihatsch2002}. Hence, $ \eta > 0 $ (resp. $ \eta < 0 $) leads to risk-averse (resp. risk-seeking) policies.
As $ \eta $ goes to zero, the objective in \eqref{eq:equivalent_prob} converges to the risk-neutral MaxEnt control problem~\eqref{eq:maxent}.

\subsection{Derivation of optimal control and further equivalence results}\label{subsec:derivation}
In this subsection, we derive the optimal policy of \eqref{eq:equivalent_prob} and give its characterizations. 
For the analysis, we do not need the non-negativity of the cost $ c_t $.
We only sketch the derivation, and the detailed proof is given in Appendix~\ref{app:RCaI_opt}. Similar to the conventional optimal control problems, we adopt the dynamic programming. Another approach based on variational inference will be given in Subsection~\ref{subsec:actor_critic}. Define the optimal (state-)value function $ V_t : \bbX \rightarrow \bbR $ and the Q-function $ \calQ_t : \bbX \times \bbU \rightarrow \bbR $ as follows:
\begin{align}
	V_t (x_t) &:= \inf_{\{\pi_s\}_{s = t}^{\ft-1}} \frac{1}{\risk}\log \bbE_{p^\pi(\tau|x_t)} \left[ \exp\left( \risk {c_\ft}(x_\ft) + \risk \sum_{s=t}^{\ft-1} \bigl( c_s (x_s,u_s) +  \log \pi_s (u_s | x_s)  \bigr) \right)  \right] , \label{eq:V}\\
    \calQ_t (x_t,u_t) &:= c_t (x_t,u_t) + \frac{1}{\risk} \log \bbE_{p(x_{t+1}|x_t,u_t)}\left[ \exp\bigl( \risk V_{t+1}(x_{t+1}) \bigr)  \right],~~~t \in \bbra{0,\ft-1}, \label{eq:Q_V} 
\end{align}
and $ V_\ft (x_\ft) := c_\ft (x_\ft) $. Then, it can be shown that the Bellman equation for Problem~\eqref{eq:equivalent_prob} is 
\begin{align}
	V_t (x_t) = - \log \left[ \int_\bbU \exp\left( -\calQ_t (x_t,u')  \right) \rmd u' \right] + \inf_{\pi_t (\cdot | x_t)\in \calP(\bbU)}   D_{1+ \risk} ( \pi_t (\cdot|x_t) \| \pi_t^* (\cdot|x_t) ), \label{eq:bellman_RCaI}
\end{align}
where $\pi_t^* (u_t|x_t) := \exp\left( -\calQ_t (x_t,u_t) \right) / \calZ_t(x_t) $,
and the normalizing constant is assumed to fulfill $ \calZ_t(x_t):= \int_\bbU \exp\left( -\calQ_t(x_t,u')\right) \rmd u' < \infty $.
Since $ D_{1+\risk} ( \pi_t (\cdot|x_t) \| \pi_t^* (\cdot|x_t) ) $ attains its minimum value $ 0 $ if and only if $ \pi_t (\cdot|x_t) = \pi_t^* (\cdot|x_t) $, the unique optimal policy that minimizes the right-hand side of \eqref{eq:bellman_RCaI} is given by $ \pi_t^* (\cdot|x_t) $ and
\begin{align}
	V_t (x_t) = - \log \left[ \int_\bbU \exp\left( -\calQ_t (x_t,u')  \right) \rmd u' \right] , ~~ \pi_t^*(u_t|x_t) = \exp\left( - \calQ_t (x_t,u_t) + V_t (x_t) \right) . \label{eq:opt_policy_inf}
\end{align}
Because of the softmin operation above, the left equation in \eqref{eq:opt_policy_inf} is called the soft Bellman equation.

\begin{theorem}\label{thm:RCaI_opt}
    Assume that $ \int_\bbU \exp\left( -\calQ_t(x,u')\right) \rmd u' < \infty $ holds for any $ t\in \bbra{0,\ft-1} $ and $ x\in \bbX $. Let $ \risk > -1 $, $\eta \neq 0 $. Then, the unique optimal policy of Problem~\eqref{eq:equivalent_prob} is given by \eqref{eq:opt_policy_inf}. Especially when the dynamics is deterministic, i.e., $ p(x_{t+1} | x_t, u_t) = \delta(x_{t+1} - \bar{f}_t (x_t,u_t)) $ for some $ \bar{f}_t : \bbX \times \bbU \rightarrow \bbX $ and \rev{the Dirac delta function $ \delta $}, it holds that
    \begin{align}
        \calQ_t (x_t,u_t) = c_t (x_t,u_t) +  V_{t+1}\left(\bar{f}_t(x_t,u_t) \right) , \label{eq:bellman_deterministic}
    \end{align}
    and the optimal policy of the MaxEnt control problem~\eqref{eq:maxent} solves the LP-regularized risk-sensitive control problem~\eqref{eq:equivalent_prob} for any $ \eta > -1 $, $ \eta \neq 0 $.
    \hfill $ \diamondsuit $
\end{theorem}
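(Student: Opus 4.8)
The plan is to obtain the optimal policy of \eqref{eq:equivalent_prob} by backward dynamic programming on the value function $V_t$ of \eqref{eq:V}, carrying the induction hypothesis that \emph{$V_{t+1}$ is given by the soft Bellman equation in \eqref{eq:opt_policy_inf} and the infimum defining $V_{t+1}(x)$ is attained, for every $x\in\bbX$, at the tuple $\{\pi_s^*\}_{s=t+1}^{\ft-1}$}; the base case is the definition $V_\ft=c_\ft$. Once the inductive step reproduces \eqref{eq:opt_policy_inf}, evaluating at $t=0$ gives the first assertion, and the deterministic specialization of the resulting recursion yields both \eqref{eq:bellman_deterministic} and the statement about the MaxEnt optimal policy.

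For the inductive step I would write $E_t:=\eta c_\ft(x_\ft)+\eta\sum_{s=t}^{\ft-1}\bigl(c_s(x_s,u_s)+\log\pi_s(u_s|x_s)\bigr)$ for the exponent in \eqref{eq:V}, so that $e^{E_t}=e^{\eta c_t(x_t,u_t)}\,\pi_t(u_t|x_t)^{\eta}\,e^{E_{t+1}}$. Using the factorization \eqref{eq:factors} and Tonelli (every factor is non-negative), $\bbE_{p^\pi(\tau|x_t)}[e^{E_t}]=\bbE_{\pi_t(u_t|x_t)}\bigl[e^{\eta c_t(x_t,u_t)}\pi_t(u_t|x_t)^\eta\,\bbE_{p(x_{t+1}|x_t,u_t)}\bigl[\bbE_{p^\pi(\tau|x_{t+1})}[e^{E_{t+1}}]\bigr]\bigr]$. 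For the lower bound, the innermost factor obeys $\frac{1}{\eta}\log\bbE_{p^\pi(\tau|x_{t+1})}[e^{E_{t+1}}]\ge V_{t+1}(x_{t+1})$ by definition of the infimum, and since the certainty-equivalent map $Z\mapsto\frac{1}{\eta}\log\bbE[e^{\eta Z}]$ is monotone for every $\eta\neq0$ (for $\eta>0$ both $e^{\eta\cdot}$ and $\frac{1}{\eta}\log$ are increasing; for $\eta<0$ both are decreasing, so the composition is again increasing) and multiplication by the non-negative weight $e^{\eta c_t}\pi_t^\eta$ preserves inequalities, this propagates to $\frac{1}{\eta}\log\bbE_{p^\pi(\tau|x_t)}[e^{E_t}]\ge\frac{1}{\eta}\log\bbE_{\pi_t(u_t|x_t)}\bigl[\pi_t(u_t|x_t)^\eta e^{\eta\calQ_t(x_t,u_t)}\bigr]$ for every admissible $\{\pi_s\}_{s\ge t}$, where I have used \eqref{eq:Q_V} to rewrite $\bbE_{p(x_{t+1}|x_t,u_t)}[e^{\eta V_{t+1}(x_{t+1})}]$ as $e^{\eta(\calQ_t-c_t)}$. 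For the matching upper bound I would instead plug in $\{\pi_s^*\}_{s\ge t+1}$: by the attainment part of the induction hypothesis the innermost factor then equals $e^{\eta V_{t+1}(x_{t+1})}$ exactly, and the same computation gives equality. Hence $V_t(x_t)=\inf_{\pi_t(\cdot|x_t)\in\calP(\bbU)}\frac{1}{\eta}\log\int_\bbU\pi_t(u_t|x_t)^{1+\eta}e^{\eta\calQ_t(x_t,u_t)}\,\rmd u_t$.

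To close the step I would substitute $e^{\eta\calQ_t(x_t,u_t)}=\bigl(\calZ_t(x_t)\pi_t^*(u_t|x_t)\bigr)^{-\eta}$, which turns the last display into $V_t(x_t)=-\log\calZ_t(x_t)+\inf_{\pi_t(\cdot|x_t)\in\calP(\bbU)}D_{1+\eta}\bigl(\pi_t(\cdot|x_t)\,\|\,\pi_t^*(\cdot|x_t)\bigr)$ --- exactly \eqref{eq:bellman_RCaI} (the support convention in the definition of $D_{1+\eta}$ is harmless because $\pi_t^*>0$ on $\bbU$). Since $1+\eta>0$ and $1+\eta\neq1$ --- the sole use of the hypothesis $\eta>-1$, $\eta\neq0$ --- the \renyi divergence $D_{1+\eta}$ is non-negative and vanishes iff its arguments agree, so the infimum equals $0$ and is attained uniquely at $\pi_t=\pi_t^*$; this gives $V_t=-\log\calZ_t$ and $\pi_t^*=e^{-\calQ_t+V_t}$ as in \eqref{eq:opt_policy_inf}, and shows that $\{\pi_s^*\}_{s\ge t}$ attains $V_t(x)$, closing the induction. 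For the deterministic case, $\bbE_{p(x_{t+1}|x_t,u_t)}[e^{\eta V_{t+1}(x_{t+1})}]=e^{\eta V_{t+1}(\bar{f}_t(x_t,u_t))}$ collapses \eqref{eq:Q_V} to \eqref{eq:bellman_deterministic}; substituting this back, the triple $(V_t,\calQ_t,\pi_t^*)$ no longer involves $\eta$, and the recursion $V_\ft=c_\ft$, $\calQ_t=c_t+V_{t+1}\circ\bar{f}_t$, $V_t=-\log\int_\bbU e^{-\calQ_t}\,\rmd u$, $\pi_t^*=e^{-\calQ_t+V_t}$ is precisely the soft Bellman recursion produced by dynamic programming for the MaxEnt problem \eqref{eq:maxent} (equivalently, in this case the $\log\mu_L(\bbU)$ offsets in Proposition~\ref{prop:cai} cancel, so $p(u_t|x_t,\calO_{t:\ft})=e^{-\sfQ_t+\sfV_t}=e^{-\calQ_t+V_t}=\pi_t^*$, and that posterior solves \eqref{eq:maxent} for deterministic systems as noted in Section~\ref{sec:CaI}). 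Hence the MaxEnt optimal policy coincides with $\pi_t^*$, so it solves \eqref{eq:equivalent_prob} for every $\eta>-1$, $\eta\neq0$.

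The step I expect to be the main obstacle is the rigor of the dynamic-programming argument rather than any single computation: one must check measurability of $V_t$ in $x_t$, make the iterated-expectation/attainment reasoning precise (the finiteness of the horizon is what lets ``attainment by $\{\pi_s^*\}$'' be carried through the backward induction instead of resorting to $\varepsilon$-optimal measurable selections), and keep the sign of $\eta$ straight in each monotonicity step. The standing assumption $\calZ_t(x)=\int_\bbU e^{-\calQ_t(x,u')}\,\rmd u'<\infty$ is what makes $\pi_t^*$ a bona fide density and keeps all the integrals finite; granting these regularity points, what remains is the direct manipulation sketched above.
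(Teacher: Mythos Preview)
Your proposal is correct and follows essentially the same approach as the paper's proof in Appendix~\ref{app:RCaI_opt}: backward dynamic programming reduces the problem to the one-step minimization $\inf_{\pi_t}\frac{1}{\eta}\log\int_\bbU \pi_t(u|x)^{1+\eta}e^{\eta\calQ_t(x,u)}\rmd u$, which is then rewritten as $-\log\calZ_t(x)+\inf_{\pi_t}D_{1+\eta}(\pi_t\|\pi_t^*)$ and solved via non-negativity of the \renyi divergence; the deterministic claim follows because $\calQ_t=c_t+V_{t+1}\circ\bar{f}_t$ makes the entire recursion independent of $\eta$ and identical to the MaxEnt soft Bellman recursion. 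Your write-up is, if anything, more explicit than the paper about the attainment/monotonicity reasoning behind the DP step; the paper just states the chain of equalities directly.
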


    Assumption $ \int_\bbU \exp\left( -\calQ_t(x,u')\right) \rmd u' < \infty $ is satisfied for example when $ c_t $ is bounded for any $ t\in \bbra{0,\ft} $ and $ \mu_L(\bbU) < \infty $.
    The linear quadratic setting also fulfills \rev{this} assumption; see \eqref{eq:lq_opt_main}.

Theorem~\ref{thm:RCaI_opt} suggests several equivalence results:\\
{\bf RCaI and MaxEnt control for deterministic systems.} First, we emphasize that even though the equivalence between {\em unregularized} risk-neutral and risk-sensitive controls for deterministic systems is already known, our equivalence result for MaxEnt and regularized risk-sensitive controls is nontrivial. This is because the regularized policy $ \pi_t^* $ makes a system stochastic even though the original system is deterministic, and for stochastic systems, the unregularized risk-sensitive control does not coincide with the risk-neutral control. This implies that the optimal randomness introduced by the regularization does not affect the risk sensitivity of the policy. This provides insight into the robustness of MaxEnt control~\cite{Eysenbach2021}.
\fin{Note that \cite{Noorani2023risk} mentioned that the MaxEnt control objective can be reconstructed by the risk-sensitive control objective under the heuristic assumption that the cost follows a uniform distribution. However, this assumption is not satisfied in general. Our equivalence result does not require such an unrealistic assumption.}

{\bf RCaI and optimal posterior.}
\fin{Although the optimal posterior $ p(u_t | x_t, \calO_{t:\ft}) $ yields the MaxEnt control for deterministic systems as mentioned in Section~\ref{sec:CaI}, it is not known what objective $ p(u_t | x_t, \calO_{t:\ft}) $ optimizes for stochastic systems.}
Theorem~\ref{thm:RCaI_opt} gives a new characterization of $ p(u_t | x_t, \calO_{t:\ft}) $.
By formally substituting $ \eta = -1 $ into \eqref{eq:Q_V}, the Bellman equation for computing $ \pi_t^* $ becomes \eqref{eq:V_opt_posterior},~\eqref{eq:Q_opt_posterior} for the optimal posterior $ p(u_t|x_t,\calO_{t:\ft}) $.
Note that even if the cost function $ c_t $ in \eqref{eq:equivalent_prob} is replaced by $ \sfc_t $ in Proposition~\ref{prop:cai}, $ \{\pi_t^*\} $ is still optimal.
Therefore, by taking the limit as $ \eta \searrow -1 $, the policy $ \pi_t^* (u_t | x_t)$ in Theorem~\ref{thm:RCaI_opt} converges to $ p(u_t|x_t,\calO_{t:\ft}) $, \rev{and in this sense}, the policy $ p(u_t|x_t,\calO_{t:\ft}) $ is risk-{\em seeking}.

\begin{corollary}
    Under the assumptions in Proposition~\ref{prop:cai}, it holds that
    \begin{align}
        \lim_{\eta \searrow -1} \pi_t^* (u_t | x_t) = \exp(-\calQ_t(x_t,u_t) + V_t(x_t)) = p(u_t | x_t, \calO_{t:\ft} = 1) ,
    \end{align}
    where $ V_t $ and $ \calQ_t $ are \rev{given} by \eqref{eq:Q_V}, \eqref{eq:opt_policy_inf} with $ \eta = -1 $.
    \hfill $ \diamondsuit $
\end{corollary}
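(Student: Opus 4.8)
\medskip
\noindent\textbf{Proof strategy.}
The plan is to split the statement into two parts: the pointwise limit $\lim_{\eta\searrow-1}\pi_t^*(u_t|x_t)=\exp(-\calQ_t^{(-1)}(x_t,u_t)+V_t^{(-1)}(x_t))$, where $\calQ_t^{(\eta)},V_t^{(\eta)}$ denote the objects in \eqref{eq:Q_V},\eqref{eq:opt_policy_inf} with their dependence on $\eta$ made explicit; and the identity $\exp(-\calQ_t^{(-1)}(x_t,u_t)+V_t^{(-1)}(x_t))=p(u_t|x_t,\calO_{t:\ft}=1)$. The second part is essentially bookkeeping. Working with the shifted cost $\mathsf{c}_t=c_t+\log\mu_L(\bbU)$ (which, as noted just before the corollary, leaves $\pi_t^*$ unchanged) and writing $\sfQ_t^{(\eta)},\sfV_t^{(\eta)}$ for the resulting quantities, substituting $\eta=-1$ into \eqref{eq:Q_V} and the soft Bellman equation in \eqref{eq:opt_policy_inf} reproduces verbatim the CaI recursions \eqref{eq:V_opt_posterior}--\eqref{eq:Q_opt_posterior} for $\sfV_t,\sfQ_t$. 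A one-line backward induction gives $\sfV_t^{(\eta)}-V_t^{(\eta)}=\sfQ_t^{(\eta)}-\calQ_t^{(\eta)}=(\ft-t)\log\mu_L(\bbU)$ for every $\eta\neq0$, so this common additive constant cancels in $\exp(-\calQ_t+V_t)$; hence $\exp(-\calQ_t^{(-1)}+V_t^{(-1)})=\exp(-\sfQ_t+\sfV_t)$, which equals $p(u_t|x_t,\calO_{t:\ft}=1)$ by \eqref{eq:optimal_posterior} of Proposition~\ref{prop:cai}.

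For the first part I would prove, by backward induction on $t$ from $\ft$ down to $0$, that $V_t^{(\eta)}(x_t)\to V_t^{(-1)}(x_t)$ and $\calQ_t^{(\eta)}(x_t,u_t)\to\calQ_t^{(-1)}(x_t,u_t)$ as $\eta\searrow-1$; the policy limit then follows by continuity of $\exp$. The base case $t=\ft$ is trivial since $V_\ft=c_\ft$ is independent of $\eta$. In the inductive step, assuming $V_{t+1}^{(\eta)}\to V_{t+1}^{(-1)}$ pointwise, I need the interchange $\tfrac{1}{\eta}\log\bbE_{p(x_{t+1}|x_t,u_t)}[\exp(\eta V_{t+1}^{(\eta)}(x_{t+1}))]\to-\log\bbE_{p(x_{t+1}|x_t,u_t)}[\exp(-V_{t+1}^{(-1)}(x_{t+1}))]$ (yielding $\calQ_t^{(\eta)}\to\calQ_t^{(-1)}$ via \eqref{eq:Q_V}), followed by $-\log\int_\bbU\exp(-\calQ_t^{(\eta)}(x_t,u'))\,\rmd u'\to-\log\int_\bbU\exp(-\calQ_t^{(-1)}(x_t,u'))\,\rmd u'$ (yielding $V_t^{(\eta)}\to V_t^{(-1)}$ via \eqref{eq:opt_policy_inf}). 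Both are dominated convergence followed by continuity of $\log$ on $(0,\infty)$. To produce the dominating functions I would again use the CaI structure in the $\mathsf{c}_t$-version: nonnegativity of $c_t,c_\ft$ together with $\mu_L(\bbU)<\infty$ propagates, by the same induction, to $\sfV_{t+1}^{(\eta)}\ge0$ and $\mathsf{c}_t\ge\log\mu_L(\bbU)$, so for $\eta\in(-1,0)$ one has the $\eta$-uniform majorants $\exp(\eta\sfV_{t+1}^{(\eta)})\le1$ (integrable against the probability kernel $p(x_{t+1}|x_t,u_t)$) and $\exp(-\sfQ_t^{(\eta)})\le\mu_L(\bbU)^{-1}$ (integrable on the finite-measure set $\bbU$); moreover the limiting quantities $\bbE[\exp(-\sfV_{t+1})]$ and $\calZ_t=\int_\bbU\exp(-\sfQ_t)\,\rmd u'=p(\calO_{t:\ft}|x_t)$ are strictly positive, which legitimizes passing $\log$ to the limit.

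The step I expect to require the most care is precisely this dominated-convergence argument: one must verify, uniformly for $\eta$ near $-1$, that $\sfV_{t+1}^{(\eta)}$ stays nonnegative and finite and that $\calZ_t^{(\eta)}$ stays finite and tends to a strictly positive limit, since otherwise the nested logarithms need not be continuous along the limit. The algebraic identification and the propagation of the $\log\mu_L(\bbU)$ constants in the second part are routine by comparison with Proposition~\ref{prop:cai}.
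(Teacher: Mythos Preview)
Your proposal is correct and follows the same core idea as the paper: identify the $\eta=-1$ instance of the recursions \eqref{eq:Q_V}, \eqref{eq:opt_policy_inf} (written with the shifted cost $\mathsf{c}_t$) with the CaI recursions \eqref{eq:V_opt_posterior}--\eqref{eq:Q_opt_posterior}, and use the invariance of $\pi_t^*$ under the shift $c_t\mapsto\mathsf{c}_t$ to conclude. The paper's own argument is just the short paragraph preceding the corollary and does not supply a separate proof; in particular it treats the limit only formally (``by formally substituting $\eta=-1$\dots therefore, by taking the limit''). Your backward-induction dominated-convergence step, with the $\eta$-uniform bounds $\exp(\eta\,\sfV_{t+1}^{(\eta)})\le1$ and $\exp(-\sfQ_t^{(\eta)})\le\mu_L(\bbU)^{-1}$ obtained from $\sfV_t^{(\eta)}\ge0$, is a genuine addition that rigorously justifies what the paper leaves implicit; the constant-offset computation $\sfV_t^{(\eta)}-V_t^{(\eta)}=(\ft-t)\log\mu_L(\bbU)$ is also a clean way to make the paper's remark about replacing $c_t$ by $\mathsf{c}_t$ precise.
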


{\bf RCaI for deterministic systems and linearly-solvable control.}
For deterministic systems, by the transformation $ E_t(x_t) := \exp(-V_t(x_t) ) $, the Bellman equation~\eqref{eq:bellman_deterministic} becomes linear: $ E_t (x_t) = \int \exp(-c_t(x_t,u')) E_{t+1}(\bar{f}_t(x_t,u')) \rmd u' $. That is, when the system is deterministic, the LP-regularized risk-sensitive control, or equivalently, the MaxEnt control is linearly solvable~\cite{Todorov2006linearly,Dvijotham2011,Dvijotham2010}, which enables efficient computation of RL. Even for the MaxEnt control, this fact seems not to be mentioned explicitly in the literature.

{\bf RCaI and unregularized risk-sensitive control in linear quadratic setting.}
Similar to the unregularized and MaxEnt problems~\cite{Ito2023shrodinger,Ito2023state}, Problem~\eqref{eq:equivalent_prob} with a linear system $ p(x_{t+1} | x_t,u_t) = \calN(x_{t+1} | A_tx_t + B_tu_t,  \Sigma_t) $ and quadratic costs $ c_t (x_t,u_t) = (x_t^\top Q_t x_t + u_t^\top R_t u_t)/2, \ c_\ft(x_\ft) = x_\ft^\top Q_\ft x_\ft /2 $ admits an explicit form of the optimal policy:
\begin{align}
	\pi_{t}^* (u | x) &= \calN \Bigl(u | - (R_{t} + B_t^\top \Pi_{t+1} (I-\risk \Sigma_{t} \Pi_{t+1})^{-1} B_t)^{-1} B_t^\top \Pi_{t+1}(I-\risk \Sigma_{t}\Pi_{t+1})^{-1} A_t x , \nonumber\\
	&\qquad\quad  (R_{t} + B_t\Pi_{t+1} (I-\risk \Sigma_{t} \Pi_{t+1})^{-1} B_t)^{-1} \Bigr) .\label{eq:lq_opt_main}
\end{align}
Here, $ \calN(\cdot | \mu, \Sigma) $ denotes the Gaussian density with mean $ \mu $ and covariance $ \Sigma $.  The definition of $ \Pi_t $ and the proof are given in Appendix~\ref{app:LQ}.
In general, the mean of the regularized risk-sensitive control deviates from the unregularized risk-sensitive control.
However, in the linear quadratic Gaussian (LQG) case, the mean of the optimal policy~\eqref{eq:lq_opt_main} coincides with the optimal control of risk-sensitive LQG control without the regularization~\cite{Whittle1981}.

\subsection{Another risk-sensitive generalization of MaxEnt control via \renyi entropy}
The Shannon entropy regularization $ \bbE[-\calH_1 (\pi_t(\cdot | x_t))] $ of the MaxEnt control problem \eqref{eq:maxent} can be rewritten as $ \bbE[\log \pi_t(u_t|x_t)] $. In this sense, the risk-sensitive control \eqref{eq:equivalent_prob} is a natural extension of \eqref{eq:maxent}. Nevertheless, for the risk-sensitive case, the interpretation of $ \log \pi_t(u_t|x_t) $ as entropy is no longer available.
In this subsection, we provide another risk-sensitive extension of the MaxEnt control.
Inspired by the \renyi divergence utilized so far, we employ \renyi entropy regularization:
\begin{align}
	\underset{\{\pi_t\}_{t=0}^{\ft-1} }{\rm minimize} ~~ \frac{1}{\risk} \log \bbE_{p^\pi (\tau)}\left[ \exp \left( \risk {c_\ft}(x_\ft) + \risk \sum_{t=0}^{\ft-1} \Bigl( c_t (x_t,u_t) -  \calH_{1-\risk} (\pi_t(\cdot | x_t))  \Bigr) \right)  \right], \label{prob:renyi_regularized_main}
\end{align}
where $ \eta \in \bbR \setminus \{0, 1\}$, and $ \pi_t (\cdot |x) \in L^{1-\eta} (\bbU) := \{ \rho \in \calP(\bbU) | \int_\bbU \rho(u)^{1-\eta} \rmd u < \infty \}, \forall x  $, which implies $ |\calH_{1-\eta}(\pi_t(\cdot|x_t))| < \infty $.
As $ \eta $ tends to zero, \eqref{prob:renyi_regularized_main} converges to the MaxEnt control problem~\eqref{eq:maxent}.

Define the value function $ \scrV_t $ and the Q-function $ \scrQ_t $ associated with \eqref{prob:renyi_regularized_main} like \eqref{eq:V} and \eqref{eq:Q_V}. Then, as in Subsection~\ref{subsec:derivation}, the following Bellman equation holds. The derivation is given in Appendix~\ref{app:renyi_reg}.
\begin{align}
	\scrV_t(x_t) &= \inf_{\pi_t\in L^{1-\eta}(\bbU)} \left\{ \frac{1}{\risk} \log \left[\int_{\bbU} \pi_t (u' | x_t) \exp(\risk \scrQ_t (x_t,u')) \rmd u' \right]- \calH_{1-\risk} (\pi_t(\cdot | x_t) ) \right\} . \label{eq:bellman_renyi}
\end{align}
For the minimization in \eqref{eq:bellman_renyi}, we establish the duality between exponential integrals and \renyi entropy like in \cite{Atar2015} because the same procedure as for \eqref{eq:bellman_RCaI} cannot be applied.
\begin{lemma}[{\bf Informal}]\label{lem:duality_unbounded_informal}
    For $ \beta, \gamma \in \bbR \setminus \{0\} $ such that $ \beta < \gamma $ and for $ g : \bbU \rightarrow \bbR $, it holds that
    \begin{align}
        \frac{1}{\beta} \log \left[ \int_{\bbU} \exp(\beta g(u)) \rmd u \right] = \inf_{\rho\in L^{1-\frac{\gamma}{\gamma-\beta}}(\bbU)} \left\{ \frac{1}{\gamma} \log \left[ \int_{\bbU} \exp(\gamma g(u)) \rho (u) \rmd u \right]- \frac{1}{\gamma -\beta} \calH_{1-\frac{\gamma}{\gamma - \beta}} (\rho) \right\}, \label{eq:dual_general_unbounded_informal}
    \end{align}
    and the unique optimal solution that minimizes the right-hand side of \eqref{eq:dual_general_unbounded_informal} is given by
    \begin{align}
        \rho (u) = \frac{\exp \left(-(\gamma -\beta) g(u) \right) }{\int_\bbU \exp (-(\gamma -\beta) g(u'))  \rmd u'} ,~~ \forall u \in \bbU . \label{eq:opt_density_unbounded_informal}
    \end{align}
    \hfill $ \diamondsuit $
\end{lemma}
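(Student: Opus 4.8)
The plan is to read \eqref{eq:dual_general_unbounded_informal} as a Gibbs-type (Donsker--Varadhan-style) variational identity for \renyi entropy and to prove it by the two matching inequalities: that the density \eqref{eq:opt_density_unbounded_informal} realizes the left-hand side, and that every admissible $\rho$ gives at least that value. First I would abbreviate $\alpha := 1 - \frac{\gamma}{\gamma-\beta} = \frac{-\beta}{\gamma-\beta}$, so that $1-\alpha = \frac{\gamma}{\gamma-\beta}$ and $\alpha(1-\alpha) = \frac{-\beta\gamma}{(\gamma-\beta)^2}$, whence the bookkeeping identity $\frac{1}{(\gamma-\beta)\,\alpha(1-\alpha)} = \frac1\gamma - \frac1\beta$. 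Expanding the \renyi entropy $\calH_{1-\frac{\gamma}{\gamma-\beta}} = \calH_\alpha$ by its definition, the functional minimized on the right of \eqref{eq:dual_general_unbounded_informal} is
\[
    J(\rho) := \frac1\gamma \log\!\Bigl[\int_\bbU \exp(\gamma g(u))\,\rho(u)\,\rmd u\Bigr] - \Bigl(\frac1\gamma - \frac1\beta\Bigr)\log\!\Bigl[\int_\bbU \rho(u)^{\alpha}\,\rmd u\Bigr],
\]
and the admissibility class $L^{1-\frac{\gamma}{\gamma-\beta}}(\bbU)$ is precisely $\{\rho \in \calP(\bbU) : \int_\bbU \rho^\alpha\,\rmd u < \infty\}$, on which $J$ is well defined.

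For the ``$\le$'' direction I would substitute the candidate $\rho^\star(u) = \exp(-(\gamma-\beta)g(u))/Z$ with $Z := \int_\bbU \exp(-(\gamma-\beta)g(u'))\,\rmd u'$. Since $\gamma-(\gamma-\beta)=\beta$ and $-\alpha(\gamma-\beta)=\beta$, one has $\int_\bbU \exp(\gamma g)\rho^\star\,\rmd u = Z^{-1}\int_\bbU \exp(\beta g)\,\rmd u$ and $\int_\bbU (\rho^\star)^\alpha\,\rmd u = Z^{-\alpha}\int_\bbU \exp(\beta g)\,\rmd u$; plugging these into $J$ and using the two algebraic identities from the previous paragraph, every $\log Z$ term cancels and $J(\rho^\star) = \frac1\beta\log[\int_\bbU \exp(\beta g)\,\rmd u]$.

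For the ``$\ge$'' direction the key is the pointwise factorization
\[
    \exp(\beta g(u)) = \bigl(\exp(\gamma g(u))\,\rho(u)\bigr)^{\theta}\,\bigl(\rho(u)^{\alpha}\bigr)^{1-\theta}, \qquad \theta := \frac{\beta}{\gamma},
\]
valid because $\gamma\theta = \beta$ and $\theta + \alpha(1-\theta) = 0$ (this last equation being exactly why $\alpha$ was chosen as above). Integrating over $\bbU$ and applying Hölder's inequality when $\theta \in (0,1)$, i.e.\ $0<\beta<\gamma$, and the reverse Hölder inequality when $\theta \notin (0,1)$, i.e.\ $\beta<0<\gamma$ or $\beta<\gamma<0$ (the exponent lying in $(0,1)$ being assigned to whichever of the two factors makes the inequality valid), one obtains in every case $\log[\int_\bbU \exp(\beta g)\,\rmd u] \le \theta\log[\int_\bbU \exp(\gamma g)\rho\,\rmd u] + (1-\theta)\log[\int_\bbU \rho^\alpha\,\rmd u]$, with the inequality reversed precisely in the two reverse-Hölder regimes. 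Dividing by $\beta$ — whose sign is $+$ exactly in the plain-Hölder regime and $-$ in the reverse-Hölder regimes, so that the division flips the inequality exactly when needed — collapses this to $J(\rho) \ge \frac1\beta\log[\int_\bbU \exp(\beta g)\,\rmd u]$ for all admissible $\rho$. Equality in (reverse) Hölder forces $\exp(\gamma g)\rho \propto \rho^{\alpha}$ a.e., i.e.\ $\rho^{1-\alpha} \propto \exp(-\gamma g)$, and since $1/(1-\alpha) = (\gamma-\beta)/\gamma$ this is $\rho \propto \exp(-(\gamma-\beta)g)$; after normalization $\rho = \rho^\star$, which gives uniqueness of the minimizer and, combined with the previous paragraph, proves \eqref{eq:dual_general_unbounded_informal}.

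The main obstacle is not this algebra but the bookkeeping of integrability and of infinite values: one must know $0 < \int_\bbU \exp(\gamma g)\rho\,\rmd u,\ \int_\bbU \rho^\alpha\,\rmd u,\ \int_\bbU \exp(-(\gamma-\beta)g)\,\rmd u < \infty$ so that $\rho^\star$ is a bona fide admissible density and the (reverse) Hölder inequalities apply together with their stated equality cases, and one must fix a convention for \eqref{eq:dual_general_unbounded_informal} when $\int_\bbU \exp(\beta g)\,\rmd u \in \{0,\infty\}$. Spelling out these hypotheses (and distinguishing the sign regimes of $\beta,\gamma$ with the right form of Hölder) is exactly what the formal version in Appendix~\ref{app:renyi_reg} does, following the duality between exponential integrals and \renyi entropy of \cite{Atar2015}.
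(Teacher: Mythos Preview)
Your proposal is correct, and the core scheme --- verify that the candidate density gives equality, bound every other admissible $\rho$ via (reverse) H\"older, and extract uniqueness from the equality case --- matches the paper's proof. The implementations differ in organization. The paper first changes variables to a one-parameter form (with parameter $\alpha = \gamma/(\gamma-\beta)$ and $g$ rescaled by $\gamma-\beta$), reduces negative $\alpha$ to positive by the involution $g\mapsto -g$ (trading the $\inf$ identity for a companion $\sup$ identity), and then, for $\alpha>0$, applies H\"older against the weighted measure $\exp((\alpha-1)g)\,\rmd u$ with the factorization $(\varphi/\rho)^{(\alpha-1)/\alpha}(\rho/\varphi)^{(\alpha-1)/\alpha}$, treating $\alpha>1$ and $\alpha\in(0,1)$ separately. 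Your single pointwise identity $\exp(\beta g)=(\exp(\gamma g)\rho)^{\theta}(\rho^{\alpha})^{1-\theta}$ with $\theta=\beta/\gamma$, integrated against Lebesgue measure, handles all three sign regimes of $(\beta,\gamma)$ at once, with the neat observation that dividing by $\beta$ flips the inequality exactly in the reverse-H\"older cases; this buys cleaner case management, while the paper's route tracks \cite{Atar2015} more closely. One minor correction: the formal statement you refer to is in Appendix~\ref{app:dual}, not Appendix~\ref{app:renyi_reg}.
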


For the precise statement and the proof, see Appendix~\ref{app:dual}.
By applying Lemma~\ref{lem:duality_unbounded_informal} with $ \beta =  \risk - 1 $, $\gamma = \risk $ to \eqref{eq:bellman_renyi}, we obtain the optimal policy of \eqref{prob:renyi_regularized_main} as follows.
\begin{theorem}\label{thm:opt_policy_renyi_reg}
    Assume that $ c_t $ is bounded below for any $ t\in \bbra{0,\ft} $. Assume further that for any $ x \in \bbX $ and $ t\in \bbra{0,\ft-1} $, it holds that
    $
        \int_\bbU \exp\left(-\scrQ_t(x,u')\right) \rmd u' < \infty,  \  \int_\bbU \exp\left( - (1 - \risk) \scrQ_t(x,u') \right) \rmd u' < \infty. 
    $
	Then, the unique optimal policy of Problem~\eqref{prob:renyi_regularized_main} is given by
	\begin{align}
		\pi_t^\star (u_t | x_t) = \frac{1}{\scrZ(x_t)}  \exp\left(-\scrQ_t(x_t,u_t)\right), ~~ \forall t\in \bbra{0,\ft-1}, \  \forall x_t \in \bbX,\ \forall u_t \in \bbU , \label{eq:opt_policy_star}
	\end{align}
	where $ \scrZ_t (x_t) := \int_{\bbU} \exp(-\scrQ_t(x_t,u')) \rmd u' $, and it holds that
	\begin{align}
		\scrV_t(x_t) &= \frac{-1}{1-\risk} \log \left[ \int_{\bbU} \exp \left(-(1-\risk) \scrQ_t(x_t,u')\right) \rmd u' \right], ~~ \forall t \in \bbra{0,\ft-1}, \ \forall x_t\in \bbX . \label{eq:soft_bellman_renyi_reg}
	\end{align}
	\hfill $ \diamondsuit $
\end{theorem}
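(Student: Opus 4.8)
The plan is to solve Problem~\eqref{prob:renyi_regularized_main} by backward dynamic programming, reducing the Bellman recursion at each stage to a pointwise (in the current state) optimization over policy densities, and then evaluating that optimization in closed form via the exponential-integral/\renyi entropy duality of Lemma~\ref{lem:duality_unbounded_informal}. Concretely, I would first recall the Bellman equation~\eqref{eq:bellman_renyi} for $\scrV_t$ (derived in Appendix~\ref{app:renyi_reg}), in which $\scrQ_t$ is built from $\scrV_{t+1}$ exactly as $\calQ_t$ is built from $V_{t+1}$ in~\eqref{eq:Q_V}, with terminal condition $\scrV_\ft = c_\ft$.

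The core step is to apply Lemma~\ref{lem:duality_unbounded_informal} with $\beta = \risk - 1$, $\gamma = \risk$, and $g(\cdot) = \scrQ_t(x_t,\cdot)$ for each fixed $x_t \in \bbX$. With this choice $\gamma - \beta = 1$, hence $\tfrac{\gamma}{\gamma-\beta} = \risk$ and $\tfrac{1}{\gamma-\beta} = 1$, so the lemma's admissible set $L^{1-\frac{\gamma}{\gamma-\beta}}(\bbU)$ coincides with the constraint set $L^{1-\risk}(\bbU)$ appearing in~\eqref{eq:bellman_renyi} and~\eqref{prob:renyi_regularized_main}, and the right-hand side of~\eqref{eq:dual_general_unbounded_informal} becomes \emph{verbatim} the bracket minimized in~\eqref{eq:bellman_renyi}. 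The hypotheses $\beta,\gamma \neq 0$ and $\beta < \gamma$ hold for every $\risk \in \bbR\setminus\{0,1\}$, and the two finiteness conditions assumed in the theorem are exactly those the lemma requires: $\int_\bbU \exp(-\scrQ_t(x,u'))\rmd u' < \infty$ makes the candidate $\pi_t^\star(\cdot|x_t) \propto \exp(-\scrQ_t(x_t,\cdot))$ a well-defined density with normalizer $\scrZ_t(x_t)$, while $\int_\bbU \exp(-(1-\risk)\scrQ_t(x,u'))\rmd u' < \infty$ says both that this candidate lies in $L^{1-\risk}(\bbU)$ and that the resulting optimal value $\tfrac{1}{\beta}\log\int_\bbU \exp(\beta\scrQ_t(x_t,u'))\rmd u' = \tfrac{-1}{1-\risk}\log\int_\bbU\exp(-(1-\risk)\scrQ_t(x_t,u'))\rmd u'$ is finite. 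The lemma then yields the unique minimizer~\eqref{eq:opt_policy_star} and the soft (\renyi) Bellman update~\eqref{eq:soft_bellman_renyi_reg} simultaneously.

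It remains to wrap this into a backward induction on $t$, running from $\ft$ down to $0$, that verifies well-posedness at each stage. Starting from $\scrV_\ft = c_\ft$, which is bounded below by hypothesis, I would show inductively that $\scrV_{t+1}$ is bounded below (this is inherited through the soft Bellman update, since $\exp(-(1-\risk)\scrQ_t)$ concentrates mass away from where $\scrQ_t$ is small), hence $\exp(\risk\scrV_{t+1})$ is controlled from the side dictated by the sign of $\risk$, so that $\bbE_{p(x_{t+1}|x_t,u_t)}[\exp(\risk\scrV_{t+1}(x_{t+1}))] \in (0,\infty)$ and $\scrQ_t$ is a well-defined measurable function; the pointwise application of the duality lemma above then produces $\scrV_t$ and the stagewise optimal $\pi_t^\star$. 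The dynamic programming principle upgrades stagewise to global optimality of $\{\pi_t^\star\}_{t=0}^{\ft-1}$ for~\eqref{prob:renyi_regularized_main}, and uniqueness follows from the uniqueness clause of Lemma~\ref{lem:duality_unbounded_informal} together with the fact that~\eqref{eq:opt_policy_star} is an explicit measurable function of $x_t$ (no measurable-selection subtlety arises).

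The genuine difficulty sits in Lemma~\ref{lem:duality_unbounded_informal} itself — a Gibbs/Hölder-type variational computation carried out with the nonstandard $\tfrac{1}{\alpha(1-\alpha)}$ normalization of \renyi entropy and with careful sign bookkeeping depending on whether $\risk \gtrless 0$ and $1-\risk \gtrless 0$ — which the excerpt defers to Appendix~\ref{app:dual}. Granting that lemma, the only delicate point internal to the theorem's proof is confirming that the candidate $\pi_t^\star$ belongs to the constraint set $L^{1-\risk}(\bbU)$ and that the infimum in~\eqref{eq:bellman_renyi} is attained there rather than merely approached along a minimizing sequence; this is precisely where the second integrability assumption of the theorem is essential, and it is the step I would write out most carefully.
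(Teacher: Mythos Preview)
Your proposal is correct and follows essentially the same strategy as the paper's proof in Appendix~\ref{app:renyi_reg}: backward induction on $t$, at each stage reducing the Bellman equation~\eqref{eq:bellman_renyi} to the duality of Lemma~\ref{lem:duality_unbounded_informal} with $\beta=\risk-1$, $\gamma=\risk$, and propagating the lower bound on $\scrV_{t+1}$ through the recursion. The one point to sharpen is that the \emph{precise} duality lemma (Lemma~\ref{lem:duality_unbounded_app}) requires $\scrQ_t(x,\cdot)\in\underline{\calB}_{\risk-1,-1}(\bbU)$, in particular that $\scrQ_t$ be bounded below; this---rather than mere well-definedness of $\scrQ_t$---is what the inductive hypothesis ``$\scrV_{t+1}$ bounded below'' (combined with $c_t$ bounded below) is used for, and it is also what the paper invokes to conclude $\scrV_t$ bounded below at the end of each step.
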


Recall that the LP regularized risk-sensitive optimal control is given by \eqref{eq:Q_V}, \eqref{eq:opt_policy_inf} while the \renyi entropy regularized control is determined by \eqref{eq:opt_policy_star}, \eqref{eq:soft_bellman_renyi_reg}, and
$
\scrQ_t (x_t,u_t) = c_t (x_t,u_t) + \frac{1}{\risk} \log \bbE_{p(x_{t+1}|x_t,u_t)}[ \exp( \risk \scrV_{t+1}(x_{t+1}) )  ] .
$
Hence, the only difference between the risk-sensitive controls for the LP and \renyi regularization is the coefficient in the soft Bellman equations~\eqref{eq:opt_policy_inf},~\eqref{eq:soft_bellman_renyi_reg}.

\section{Risk-sensitive reinforcement learning via RCaI}
Standard RL methods can be derived from CaI using the KL divergence~\cite{Levine2018tutorial}. In this section, we derive risk-sensitive policy gradient and soft actor-critic methods from RCaI.

\subsection{Risk-sensitive policy gradient}
In this subsection, we consider minimizing the cost \eqref{eq:equivalent_prob} by a time-invariant policy parameterized as $ \pi_t (u | x) = \pi^{(\theta)} (u|x) $, $ \theta \in \bbR^{n_\theta} $.
Let $ C_{\theta} (\tau) := {c_\ft}(x_\ft) + \sum_{t=0}^{\ft-1} ( c_t (x_t,u_t) + \log \pi^{(\theta)} (u_t | x_t) ) $ and $ p_\theta $ be the density of the trajectory $ \tau $ under the policy $ \pi^{(\theta)} $. 
Then, Problem~\eqref{eq:equivalent_prob} can be reformulated as the minimization of $ J(\theta)/\eta $ where $ J(\theta) := \int p_\theta (\tau) \exp(\eta C_\theta (\tau)) \rmd \tau $. To optimize $ J(\theta)/\eta $ by gradient descent, we give the gradient $ \nabla_\theta J(\theta) $. The proof is shown in Appendix~\ref{app:policy_gradient}.

\begin{proposition}\label{prop:policy_gradient}
    Assume the existence of densities $ p(x_{t+1} | x_t, u_t) $, $ p(x_0) $.
    Assume further that $ \pi^{(\theta)} $ is differentiable in $ \theta $, and the derivative and the integral can be interchanged as $ \nabla_\theta J(\theta) = \int \nabla_\theta [p_\theta (\tau) \exp(\eta C_\theta (\tau))] \rmd \tau $. Then, for any function $ b : \bbR^{n_x} \rightarrow \bbR $, it holds that
    \begin{align}
        \nabla_\theta J(\theta) &= (\eta + 1) \bbE_{p_\theta (\tau)} \Biggl[  \sum_{t=0}^{\ft-1} \nabla_\theta \log \pi^{(\theta)} (u_t  | x_t) \nonumber\\
        &\qquad\quad\times \left\{ \exp\left( \eta {c_\ft}(x_\ft) + \eta \sum_{s = t}^{\ft-1} \Bigl( c_s(x_s,u_s) + \log \pi^{(\theta)} (u_s | x_s) \Bigr) \right) - b(x_t) \right\} \Biggr] . 
    \end{align}
    \hfill $ \diamondsuit $
\end{proposition}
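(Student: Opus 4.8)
The plan is to combine the differentiate-under-the-integral hypothesis with the log-derivative (score-function) trick and then carry out the two standard cost-to-go manipulations. First I would use the stated interchangeability to write $\nabla_\theta J(\theta) = \int \nabla_\theta\bigl[p_\theta(\tau)\exp(\eta C_\theta(\tau))\bigr]\rmd\tau$ and apply $\nabla_\theta g = g\,\nabla_\theta\log g$ to the integrand $g_\theta(\tau):=p_\theta(\tau)\exp(\eta C_\theta(\tau))$. The crucial point is that both $p_\theta$ and $C_\theta$ depend on $\theta$ only through the policy factors: from $p_\theta(\tau)=p(x_0)\prod_t p(x_{t+1}|x_t,u_t)\pi^{(\theta)}(u_t|x_t)$ and $C_\theta(\tau)=c_\ft(x_\ft)+\sum_t\bigl(c_t(x_t,u_t)+\log\pi^{(\theta)}(u_t|x_t)\bigr)$ we get $\nabla_\theta\log p_\theta(\tau)=\sum_{t=0}^{\ft-1}\nabla_\theta\log\pi^{(\theta)}(u_t|x_t)=\nabla_\theta C_\theta(\tau)$, hence $\nabla_\theta\log g_\theta(\tau)=(1+\eta)\sum_{t=0}^{\ft-1}\nabla_\theta\log\pi^{(\theta)}(u_t|x_t)$. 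This yields the preliminary identity $\nabla_\theta J(\theta)=(1+\eta)\,\bbE_{p_\theta(\tau)}\bigl[\exp(\eta C_\theta(\tau))\sum_{t}\nabla_\theta\log\pi^{(\theta)}(u_t|x_t)\bigr]$.

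Then I would process each summand in $t$ with two reductions. For the baseline: the inner $\rmd u_t$-integral gives $\bbE_{p_\theta}\bigl[\nabla_\theta\log\pi^{(\theta)}(u_t|x_t)\mid x_t\bigr]=\int_\bbU\nabla_\theta\pi^{(\theta)}(u_t|x_t)\,\rmd u_t=\nabla_\theta 1=0$ (differentiating the normalization under the integral), so $\bbE_{p_\theta}\bigl[\nabla_\theta\log\pi^{(\theta)}(u_t|x_t)\,b(x_t)\bigr]=0$ and an arbitrary $b(x_t)$ can be inserted for free. For the causality reduction: fix $t$ and split $\exp(\eta C_\theta(\tau))$ into the past factor $\exp\bigl(\eta\sum_{s<t}(c_s(x_s,u_s)+\log\pi^{(\theta)}(u_s|x_s))\bigr)$ and the tail factor $\exp\bigl(\eta c_\ft(x_\ft)+\eta\sum_{s\ge t}(c_s(x_s,u_s)+\log\pi^{(\theta)}(u_s|x_s))\bigr)$, the latter being exactly the exponential appearing in the claim; then use the Markov factorization of $p_\theta$, conditioning on $(x_{0:t},u_{0:t-1})$, to reduce the past factor, exploiting that $\nabla_\theta\log\pi^{(\theta)}(u_t|x_t)$ couples to the tail only through $x_t$. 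Collecting the $t$-terms and pulling out $1+\eta$ then gives the stated formula.

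The hard part is this causality step. In risk-neutral REINFORCE the past costs add, so the past piece is a standalone $\sigma(x_{0:t},u_{0:t-1})$-measurable factor that is killed by $\bbE[\nabla_\theta\log\pi^{(\theta)}(u_t|x_t)\mid x_{0:t},u_{0:t-1}]=0$; with the exponential (risk-sensitive) utility the past enters \emph{multiplicatively}, so it cannot be discarded by the score identity alone, and the reduction has to be argued through the Markov/tower-property structure of the $\eta$-tilted path measure while carefully tracking which state and control variables the forward and backward factors depend on. I would therefore devote most of the effort to making this step precise and treat the interchange, the score-function identity, and the baseline cancellation as routine.
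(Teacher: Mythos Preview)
Your preliminary identity $\nabla_\theta J(\theta)=(1+\eta)\,\bbE_{p_\theta}\bigl[\exp(\eta C_\theta(\tau))\sum_t\nabla_\theta\log\pi^{(\theta)}(u_t\mid x_t)\bigr]$ and your baseline argument are correct and match the paper's proof exactly. You are also right to single out the causality reduction as the crux, and your instinct that the multiplicative structure is an obstruction is sharper than you realize: the reduction from the full weight $\exp(\eta C_\theta)$ to the tail weight is in fact \emph{false} for $\eta\neq 0$, so no Markov/tower argument on an $\eta$-tilted path measure will produce it. Writing $\exp(\eta C_\theta)=P_t\,T_t$ with $P_t:=\exp\bigl(\eta\sum_{s<t}(c_s+\log\pi^{(\theta)}(u_s\mid x_s))\bigr)$ and $T_t$ the tail, the Markov property gives $\bbE\bigl[\nabla_\theta\log\pi^{(\theta)}(u_t\mid x_t)\,T_t\bigm|x_{0:t},u_{0:t-1}\bigr]=:g(x_t)$, so the $t$-th summand in the full expression equals $\bbE[P_t\,g(x_t)]$ whereas the tail-only version equals $\bbE[g(x_t)]$; there is no reason for these to agree. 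A concrete two-step single-state check with $u_t\in\{0,1\}$, $\pi^{(\theta)}(1)=\theta$, $c_t(x,u)=u$, $c_\ft=0$ gives $\nabla_\theta J=2(1+\eta)AB$ with $A=(1-\theta)^{1+\eta}+\theta^{1+\eta}e^\eta$ and $B=\theta^\eta e^\eta-(1-\theta)^\eta$, while the stated tail formula yields $(1+\eta)(A+1)B$; equality forces $A=1$, which for generic $\theta$ holds only at $\eta=0$.

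The paper's own argument at this step proves only $\bbE_{p_\theta}\bigl[\nabla_\theta\log\pi^{(\theta)}(u_t\mid x_t)\,P_t\bigr]=0$ (true, since $P_t$ is $\sigma(x_{0:t-1},u_{0:t-1})$-measurable and the score has conditional mean zero) and then asserts that ``combining'' this with the preliminary identity yields the tail formula; that combination is precisely the gap you anticipated and is not justified. What does survive is your preliminary identity with the full factor $\exp(\eta C_\theta(\tau))$ retained in every summand, together with the baseline subtraction $\bbE_{p_\theta}\bigl[\nabla_\theta\log\pi^{(\theta)}(u_t\mid x_t)\,b(x_t)\bigr]=0$; both of those are sound.
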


The function $ b $ is referred to as a baseline function, which can be used for reducing the variance of an estimate of $ \nabla_\theta J $.
The following gradient estimate of $ J(\theta)/\eta $ is unbiased:
\begin{align}
	\frac{\eta + 1}{\eta} \sum_{t=0}^{\ft-1} \nabla_\theta \log \pi^{(\theta)} (u_t  | x_t) \biggl\{ \exp\biggl( \eta c_\ft(x_\ft) + \eta \sum_{s = t}^{\ft-1} \Bigl( c_s(x_s,u_s) + \log \pi^{(\theta)} (u_s | x_s) \Bigr) \biggr) - b(x_t) \biggr\} . \nonumber
\end{align}
This is almost the same as risk-sensitive REINFORCE~\cite{Noorani2021} except for the additional term $ \log \pi^{(\theta)} (u_s | x_s) $. In the risk-neutral limit $ \eta \rightarrow 0 $, this estimator converges to the MaxEnt policy gradient estimator~\cite{Levine2018tutorial}.

\subsection{Risk-sensitive soft actor-critic}\label{subsec:actor_critic}
In Subsection~\ref{subsec:derivation}, we used dynamic programming to obtain the optimal policy $ \{\pi_t^*\} $.
Rather, in this section, we adopt a standard procedure of variational inference~\cite{Bishop2006}.
First, we find the optimal factor $ \pi_t $ for fixed $ \pi_s $, $ s \neq t $ as follows. The proof is deferred to Appendix~\ref{app:opt_factor}.
\begin{proposition}\label{prop:opt_factor}
    For $ t \in \bbra{0,\ft-1} $, let $ \pi_s , s \neq t $ be fixed. Let $ \risk > -1 $, $\eta \neq 0 $. Then, the optimal factor $ \pi_t^\bullet := \argmin_{\pi_t\in \calP(\bbU)} D_{1+\eta} (p^\pi \| p (\cdot | \rev{\calO_{0:\ft} = 1})) $ is given by
    \begin{align}
        \pi_t^\bullet (u_t | x_t) &= \frac{1}{Z_t(x_t)} \left( \bbE_{p^\pi(x_{t+1:\ft}, u_{t+1:\ft-1} | x_t, u_t)} \left[ \left( \frac{\prod_{s = t+1}^{\ft-1} \pi_s(u_s | x_s)}{p(\calO_t | x_\ft) \prod_{s = t}^{\ft-1} p(\calO_s | x_s , u_s)}   \right)^\eta \right] \right)^{-1/\eta}, \label{eq:opt_factor}
    \end{align}
    where $ Z_t(x_t) $ is the normalizing constant.
    \hfill $ \diamondsuit $
\end{proposition}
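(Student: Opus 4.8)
The plan is to collapse the minimization of the Rényi divergence over the single factor $\pi_t$ into a pointwise-in-$x_t$ optimization, using the factorizations already in hand. First I would invoke \eqref{eq:renyi_bound} to write $D_{1+\eta}(p^\pi\|p(\cdot|\calO_{0:\ft})) = \frac{1}{\eta}\log I(\pi) + \log p(\calO_{0:\ft})$, where $I(\pi) := \int p^\pi(\tau)^{1+\eta}\,p(\tau,\calO_{0:\ft})^{-\eta}\,\rmd\tau = \bbE_{p^\pi(\tau)}[(p^\pi(\tau)/p(\tau,\calO_{0:\ft}))^\eta] > 0$. Since $\log p(\calO_{0:\ft})$ is $\pi$-free and $z\mapsto\frac{1}{\eta}\log z$ is increasing for $\eta>0$ and decreasing for $-1<\eta<0$, minimizing $D_{1+\eta}$ over $\pi_t$ is equivalent to minimizing $I(\pi)$ when $\eta>0$ and to maximizing $I(\pi)$ when $-1<\eta<0$. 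Substituting \eqref{eq:factors} and \eqref{eq:opt_trajctory}, the dynamics factors $p(x_0)$ and $p(x_{s+1}|x_s,u_s)$ cancel in the ratio $p^\pi(\tau)/p(\tau,\calO_{0:\ft})$, leaving $\bigl(\prod_{s=0}^{\ft-1}\pi_s(u_s|x_s)/(p(\calO_\ft|x_\ft)\prod_{s=0}^{\ft-1}p(\calO_s|x_s,u_s))\bigr)^{\eta}$.

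Next I would split the trajectory around time $t$ via the Markov property, $p^\pi(\tau) = p^\pi(x_{0:t},u_{0:t-1})\,\pi_t(u_t|x_t)\,p^\pi(x_{t+1:\ft},u_{t+1:\ft-1}|x_t,u_t)$, and regroup the integrand into a past, a present, and a future piece. Everything independent of $\pi_t$ gathers into the nonnegative past weight $w := \bigl(\prod_{s=0}^{t-1}\pi_s(u_s|x_s)/p(\calO_s|x_s,u_s)\bigr)^{\eta}$ and, after the future variables are integrated out, into $K_t(x_t,u_t) := p(\calO_t|x_t,u_t)^{-\eta}\,H_t(x_t,u_t)$, where $H_t(x_t,u_t) := \bbE_{p^\pi(x_{t+1:\ft},u_{t+1:\ft-1}|x_t,u_t)}\bigl[\bigl(\prod_{s=t+1}^{\ft-1}\pi_s(u_s|x_s)/(p(\calO_\ft|x_\ft)\prod_{s=t+1}^{\ft-1}p(\calO_s|x_s,u_s))\bigr)^{\eta}\bigr]$; note that the conditional average displayed in \eqref{eq:opt_factor} equals $p(\calO_t|x_t,u_t)^{-\eta}H_t(x_t,u_t)$ since $p(\calO_t|x_t,u_t)$ is constant under that average. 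Applying Tonelli to pull the $u_t$-integral inside then gives $I(\pi) = \bbE_{p^\pi(x_{0:t},u_{0:t-1})}\bigl[\,w\int_\bbU \pi_t(u_t|x_t)^{1+\eta}K_t(x_t,u_t)\,\rmd u_t\,\bigr]$.

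Because $w\ge 0$ and the inner integral sees $\pi_t$ only through its values at the fixed $x_t$, it then suffices to optimize, for each $x_t$ separately, the functional $\phi_{x_t}(\pi_t) := \int_\bbU \pi_t(u|x_t)^{1+\eta}K_t(x_t,u)\,\rmd u$ over $\pi_t(\cdot|x_t)\in\calP(\bbU)$ — minimizing it for $\eta>0$ and maximizing it for $-1<\eta<0$. Since $z\mapsto z^{1+\eta}$ is strictly convex for $\eta>0$ and strictly concave for $-1<\eta<0$, both cases are strictly convex/concave programs over the convex set $\calP(\bbU)$, so a stationary point of the Lagrangian $\phi_{x_t}(\pi_t) - \lambda\bigl(\int_\bbU\pi_t(u|x_t)\,\rmd u - 1\bigr)$ is the unique optimizer; the condition $(1+\eta)\pi_t(u|x_t)^{\eta}K_t(x_t,u) = \lambda$ forces $\pi_t^\bullet(u|x_t)\propto K_t(x_t,u)^{-1/\eta}$, which is strictly positive, so that the constraint $\pi_t\ge 0$ is inactive. (Equivalently, one applies the (reverse) Hölder inequality to $1 = \int_\bbU \pi_t(u|x_t)\,K_t(x_t,u)^{1/(1+\eta)}K_t(x_t,u)^{-1/(1+\eta)}\,\rmd u$ with conjugate exponents $1+\eta$ and $(1+\eta)/\eta$, which also exhibits the equality case.) Finally, writing $K_t^{-1/\eta} = p(\calO_t|x_t,u_t)\,H_t^{-1/\eta}$ and reabsorbing the $(x_t,u_t)$-measurable factor $p(\calO_t|x_t,u_t)^{-\eta}$ back into the conditional expectation turns the argument into $\bigl(\prod_{s=t+1}^{\ft-1}\pi_s(u_s|x_s)/(p(\calO_\ft|x_\ft)\prod_{s=t}^{\ft-1}p(\calO_s|x_s,u_s))\bigr)^{\eta}$, reproducing exactly \eqref{eq:opt_factor} with $Z_t(x_t)$ the normalizer.

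I expect the hard part to be not conceptual but the bookkeeping and the accompanying regularity conditions: justifying that the optimization decouples pointwise in $x_t$ (which rests on $w\ge 0$ and Tonelli), keeping careful track that the sign of $\eta$ swaps minimization for maximization while leaving the \emph{form} of the optimal factor unchanged, and checking that the claimed $\pi_t^\bullet$ is an admissible density — i.e. that $H_t(x_t,u_t)$ and $Z_t(x_t) = \int_\bbU K_t(x_t,u)^{-1/\eta}\,\rmd u$ are finite — which is where integrability hypotheses on the costs and on $\bbU$ enter. Observing that the unconstrained stationary point is already strictly positive lets one bypass any boundary analysis of whether $\pi_t^\bullet$ vanishes on part of $\bbU$, since strict convexity/concavity then pins it down as the global optimum.
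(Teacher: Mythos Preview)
Your proposal is correct and follows essentially the same route as the paper: both reduce $D_{1+\eta}$ to optimizing $\frac{1}{\eta}\log\int p^\pi(\tau)^{1+\eta}p(\tau,\calO_{0:\ft})^{-\eta}\rmd\tau$, split the trajectory around time $t$, and then exploit the convexity/concavity of $z\mapsto z^{1+\eta}$ to identify the optimal $\pi_t$. The only cosmetic difference is that the paper keeps the $\frac{1}{\eta}\log$ wrapper and applies Jensen's inequality to $\int_\bbU \what\pi_t(u_t|x_t)\bigl(\pi_t(u_t|x_t)/\what\pi_t(u_t|x_t)\bigr)^{1+\eta}\rmd u_t$ (which yields a single inequality valid for both signs of $\eta$), whereas you strip off $\frac{1}{\eta}\log$, split into min/max cases, and use the Lagrangian or H\"older; these are equivalent convexity arguments.
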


By \eqref{eq:opt_factor}, the optimal factor $ \pi_t^\bullet $ is independent of the past factors $ \pi_s $, $ s \in \bbra{0,t-1} $. Therefore, the variational \renyi bound in \eqref{eq:renyi_bound} is maximized by optimizing $ \pi_t $ in backward order from $ t = \ft-1 $ to $ t = 0 $, which is consistent with the dynamic programming.
Associated with \eqref{eq:opt_factor}, we define
\begin{align}
	&V_t^\pi (x_t) := \frac{1}{\eta} \log \bbE_{p^\pi(x_{t+1:\ft},u_{t:\ft-1} | x_t)} \left[ \left(  \frac{\prod_{s = t}^{\ft-1} \pi_s(u_s | x_s)}{p(\calO_t | x_\ft) \prod_{s = t}^{\ft-1} p(\calO_s | x_s , u_s)}   \right)^\eta  \right] \nonumber\\
	&= \frac{1}{\eta} \log \bbE_{p^\pi(x_{t+1:\ft},u_{t:\ft-1} | x_t)} \left[ \exp\left( \eta {c_\ft}(x_\ft) + \eta \sum_{s = t}^{\ft-1} \bigl( c_s (x_s, u_s) + \log \pi_s (u_s | x_s) \bigr)  \right) \right],
\end{align}
which is the value function for the policy $ \{\pi_s\}_{s = t}^{\ft-1} $ satisfying the following Bellman equation.
\begin{align}
	V_t^{\pi}(x_t) &= \frac{1}{\eta} \log \bbE_{\pi_t(u_t|x_t)} \left[ \left( \frac{\pi_t(u_t|x_t)}{p(\calO_t | x_t,u_t)} \right)^\eta \bbE_{p(x_{t+1} | x_t,u_t)} \left[  \exp(\eta V_{t+1}^{\pi}(x_{t+1})) \right]  \right] \label{eq:bellman_pi}\\
	&= \frac{1}{\eta} \log \bbE_{\pi_t(u_t | x_t)} \Bigl[ \exp\left( \eta c_t (x_t,u_t) + \eta \log \pi_t(u_t|x_t) \right) \bbE_{p(x_{t+1} | x_t,u_t)} \left[  \exp(\eta V_{t+1}^{\pi}(x_{t+1})) \right]  \Bigr] .\nonumber
\end{align}
By the value function, $ \pi_t^\bullet (u_t|x_t) $ can be written as
\begin{align}
	\pi_t^\bullet (u_t | x_t) &= \frac{\rev{p(\calO_t | x_t, u_t)}}{Z_t(x_t)}  \bbE_{p(x_{t+1:\ft},u_{t+1:\ft-1} | x_t, u_t)} \left[  \left(  \frac{\prod_{s = t+1}^{\ft-1} \pi_s (u_s | x_s)}{p(\calO_t | x_\ft) \prod_{s = t+1}^{\ft-1} p(\calO_s | x_s , u_s)}   \right)^\eta  \right]^{-1/\eta} \nonumber\\
	&= \frac{\rev{p(\calO_t | x_t, u_t)}}{Z_t(x_t)}  \bbE_{p(x_{t+1} | x_t, u_t)} \left[ \exp(\eta V_{t+1}^{\pi}(x_{t+1}))  \right]^{-1/\eta} . \label{eq:opt_factor_2}
\end{align}
Next, we define the Q-function for $ \{ \pi_s \}_{s = t+1}^{\ft-1} $ as follows:
\begin{align}
	Q_t^{\pi} (x_t, u_t) := - \log p(\calO_t | x_t, u_t) + \frac{1}{\eta} \log \bbE_{p(x_{t+1}|x_t,u_t)}\left[ \exp(\eta V_{t+1}^{\pi}(x_{t+1}))  \right] . \label{eq:Q_V_cond}
\end{align}
Then, it follows from \eqref{eq:bellman_pi} and \eqref{eq:opt_factor_2} that
\begin{align}
	V_t^{\pi}(x_t) &= \frac{1}{\eta} \log \bbE_{\pi_t(u_t | x_t)}  \left[ \pi_t(u_t | x_t)^\eta \exp(\eta Q_t^{\pi}(x_t,u_t))  \right] , \label{eq:V_Q_cond}\\
    \pi_t^\bullet (u_t | x_t) &= \frac{1}{Z_t(x_t)} \exp(-Q_t^{\pi}(x_t,u_t)) , ~~ Z_t(x_t) = \int_\bbU \exp\left(-Q_t^{\pi}(x_t,u' )\right) \rmd u' .
\end{align}
Especially when $ \pi_t(u_t | x_t) = \pi_t^\bullet (u_t | x_t) $, it holds that $ V_t^{\pi}(x_t) =  - \log \left[ \int \exp(-Q_t^{\pi}(x_t,u')) \rmd u' \right] $, which coincides with the soft Bellman equation in \eqref{eq:opt_policy_inf}. In summary, in order to obtain the optimal factor $ \pi_t^\bullet $, it is sufficient to compute $ V_t^\pi $ and $ Q_t^\pi $ in a backward manner.

Next, we consider the situation when the policy is parameterized as $ \pi_{t}^{(\theta)} (u_t|x_t) $, $ \theta \in \bbR^{n_\theta} $ and there is no parameter $ \theta $ that gives the optimal factor $ \pi_{t}^{(\theta)} = \pi_t^\bullet $. To accommodate this situation, we utilize the variational \renyi bound. One can easily see that the maximization of the \renyi bound~in \eqref{eq:renyi_bound} with respect to a single factor $ \pi_t $ is equivalent to the following problem.
\begin{align}
	\underset{\pi_t }{\rm minimize} ~~  \frac{1}{\eta} \log  \bbE_{p^\pi (x_t)}\left[ \bbE_{\pi_t(u_t|x_t)} \left[ \pi_t(u_t|x_t)^\eta \exp(\eta Q_t^{\pi}(x_t,u_t)) \right] \right] . \label{eq:renyi_bound_single}
\end{align}
This suggests choosing $ \theta $ that minimizes \eqref{eq:renyi_bound_single} whose $ \pi_t $ is replaced by $ \pi_t^{(\theta)} $. Note that this is further equivalent to
\begin{align}
	\underset{\theta }{\rm minimize} ~~ \bbE_{p^\pi (x_t)} \left[ D_{1+\eta} \left(\pi_t^{(\theta)}(\cdot | x_t) \middle\| \frac{\exp(-Q_{t}^\pi (x_t,\cdot))}{Z_t(x_t)} \right) \right] .
\end{align}

We also parameterize $ V_t^\pi $ and $ Q_t^\pi $ as $ V^{(\psi)} $, $ Q^{(\phi)} $ and optimize $ \psi, \phi $ so that the relations~\eqref{eq:Q_V_cond},~\eqref{eq:V_Q_cond} approximately hold.
To obtain unbiased gradient estimators later, we minimize the following squared residual error based on \eqref{eq:Q_V_cond},~\eqref{eq:V_Q_cond}, and the transformation $ T_\eta(v) := (\Ee^{\eta v} - 1) /\eta $, $ v \in \bbR $:
\begin{align}
        \calJ_Q (\phi) &:= \bbE_{p^\pi(x_t,u_t)} \left[ \frac{1}{2} \left\{ T_\eta \left(Q^{(\phi)}(x_t,u_t) - c(x_t,u_t)\right)  - \bbE_{p(x_{t+1}|x_t,u_t)} \left[ T_\eta( V^{(\psi)}(x_{t+1}) )\right]  \right\}^2 \right] , \nonumber\\
    \calJ_V (\psi) &:= \bbE_{p^\pi(x_t)} \left[ \frac{1}{2}\left\{ T_\eta (V^{(\psi)}(x_t))  -  \bbE_{\pi^{(\theta)}(u_t|x_t)} \left[ T_\eta \left(Q^{(\phi)}(x_t,u_t) + \log \pi^{(\theta)}(u_t|x_t)\right)   \right]   \right\}^2  \right] . \nonumber
\end{align}
Using $ Q^{(\phi)} $ and $ T_\eta $, we replace \eqref{eq:renyi_bound_single} with the following equivalent objective:
\begin{align}
    \calJ_\pi (\theta) :=   \bbE_{p^\pi(x_t)} \bigl[ \bbE_{\pi^{(\theta)}(u_t|x_t)} \bigl[  T_\eta \bigl( Q^{(\phi)} (x_t,u_t)+ \log \pi^{(\theta)}(u_t|x_t)\bigr) \bigr] \bigr] .
\end{align}
Noting that $ \lim_{\eta \rightarrow 0} T_\eta (\kappa (\eta)) = \kappa (0) $ for $ \kappa : \bbR \rightarrow \bbR $, as the risk sensitivity $ \eta $ goes to zero, the objectives $ \calJ_Q, \calJ_V, \calJ_\pi $ converge to those used for the risk-neutral soft actor-critic~\cite{Haarnoja2018}.
Now, we have
\begin{align}
	\nabla_\phi \calJ_Q(\phi) &=  \bbE_{p^\pi(x_t,u_t)} \Bigl[ \bigl(\nabla_\phi Q^{(\phi)}(x_t,u_t)\bigr) \exp\bigl(\eta Q^{(\phi)}(x_t,u_t) - \eta c(x_t,u_t)\bigr) \nonumber\\
    &\quad\times \bigl\{ T_\eta \bigl(Q^{(\phi)}(x_t,u_t) - c(x_t,u_t)\bigr)  - \bbE_{p(x_{t+1}|x_t,u_t)} \bigl[ T_\eta( V^{(\psi)}(x_{t+1}) )\bigr]  \bigr\} \Bigr], \label{eq:grad_Q}\\
	\nabla_\psi \calJ_V(\psi) &= \bbE_{p^\pi(x_t)} \Bigl[ \bigl(\nabla_\psi V^{(\psi)} (x_t) \bigr) \exp(\eta V^{(\psi)} (x_t)) \nonumber\\
    &\quad \times\bigl\{ T_\eta (V^{(\psi)}(x_t))  -  \bbE_{\pi^{(\theta)}(u_t|x_t)} \bigl[ T_\eta \bigl(Q^{(\phi)}(x_t,u_t) + \log \pi^{(\theta)}(u_t|x_t)\bigr)   \bigr]   \bigr\} \Bigr], \label{eq:grad_V} \\
    \nabla_\theta \calJ_\pi(\theta) &= (\eta + 1) \bbE_{p^\pi(x_t,u_t)} \Bigl[ \bigl(\nabla_\theta \log \pi^{(\theta)} (u_t|x_t) \bigr)  T_\eta \bigl(Q^{(\phi)}(x_t,u_t) + \log \pi^{(\theta)} (u_t|x_t) \bigr) \Bigr] . \label{eq:grad_pi}
\end{align}
Thanks to the transformation $ T_\eta $, the expectations appear linearly, and an unbiased gradient estimator can be obtained by removing them. \rev{By simply} replacing the gradients of the soft actor-critic~\cite{Haarnoja2018} with \eqref{eq:grad_Q}--\eqref{eq:grad_pi}, we obtain the risk-sensitive soft actor-critic (RSAC).
\fin{It is worth mentioning that since RSAC requires only minor modifications to SAC, techniques for stabilizing SAC, e.g., reparameterization, minibatch sampling with a replay buffer, target networks, double Q-network, can be directly used for RSAC.}

\section{Experiment}\label{sec:experiment}
\begin{wrapfigure}{r}{0.38\textwidth}
    \centering
    \includegraphics[width=0.38\textwidth]{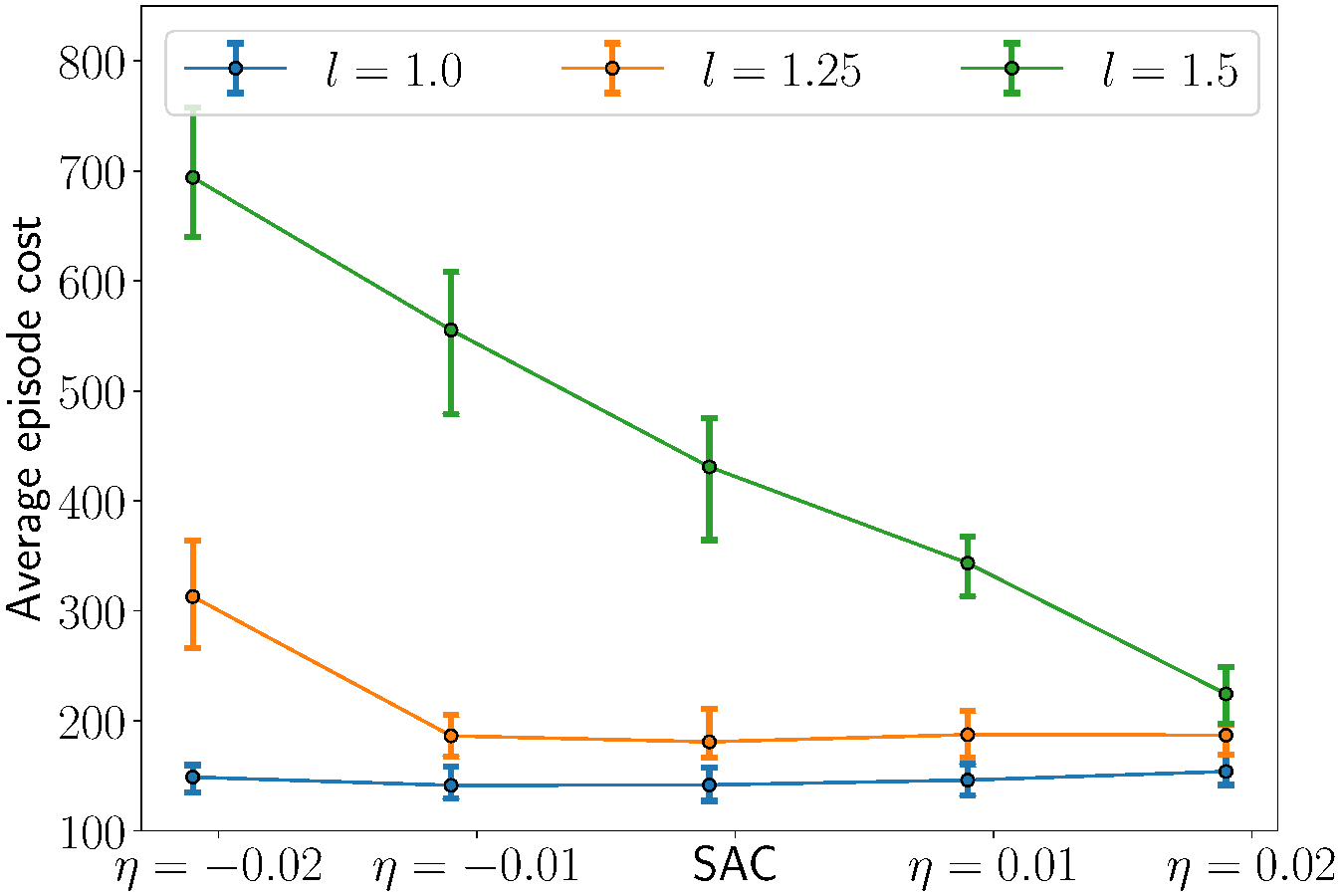}
    \caption{Average episode cost for RSAC with some $\eta$ and standard SAC.}
    \label{fig:robust}
\end{wrapfigure}
U\fin{nregularized risk-averse control is known to be robust against perturbations in systems~\cite{Petersen2000minimax}.
Since the robustness of the regularized cases has not yet been established theoretically, we verify the robustness of policies learned by RSAC through a numerical example.}
The environment is Pendulum-v1 in OpenAI Gymnasium. We trained control policies using the hyperparameters shown in Appendix~\ref{app:experiment}.
There were no significant differences in the  control performance obtained or the behavior during training.
On the other hand, for each  $\eta$, one control policy was selected and was applied to a slightly different environment \emph{without retraining}.
To be more precise, the pendulum length $l$, which is 1.0 during training, is changed to 1.25 and 1.5; See Fig.~\ref{fig:robust}.
 In this example, it can be seen that the control policy obtained with larger $ \eta $ has a smaller performance degradation due to environmental changes.
This robustness can be considered a benefit of risk-sensitive control.

\fin{
In Fig.~\ref{fig:empirical}, empirical distributions of the costs for different risk-sensitivity parameters $ \eta $ are plotted. 
Only the distribution for $ \eta = 0.02 $ does not change so much under the system perturbations. The distribution for SAC ($ \eta = 0 $) with $ l = 1.5 $ deviates from the original one ($ l = 1.0 $), and another peak of the distribution appears in the high-cost area. This means that there is a high probability of incurring a high cost, which clarifies the advantage of RSAC.
The more risk-seeking the policy becomes, the less robust it becomes against the system perturbation. 
}

  \begin{figure}[t]
    \begin{minipage}[t]{0.325\columnwidth}
        \centering
        \begin{overpic}[width=1.0\textwidth]{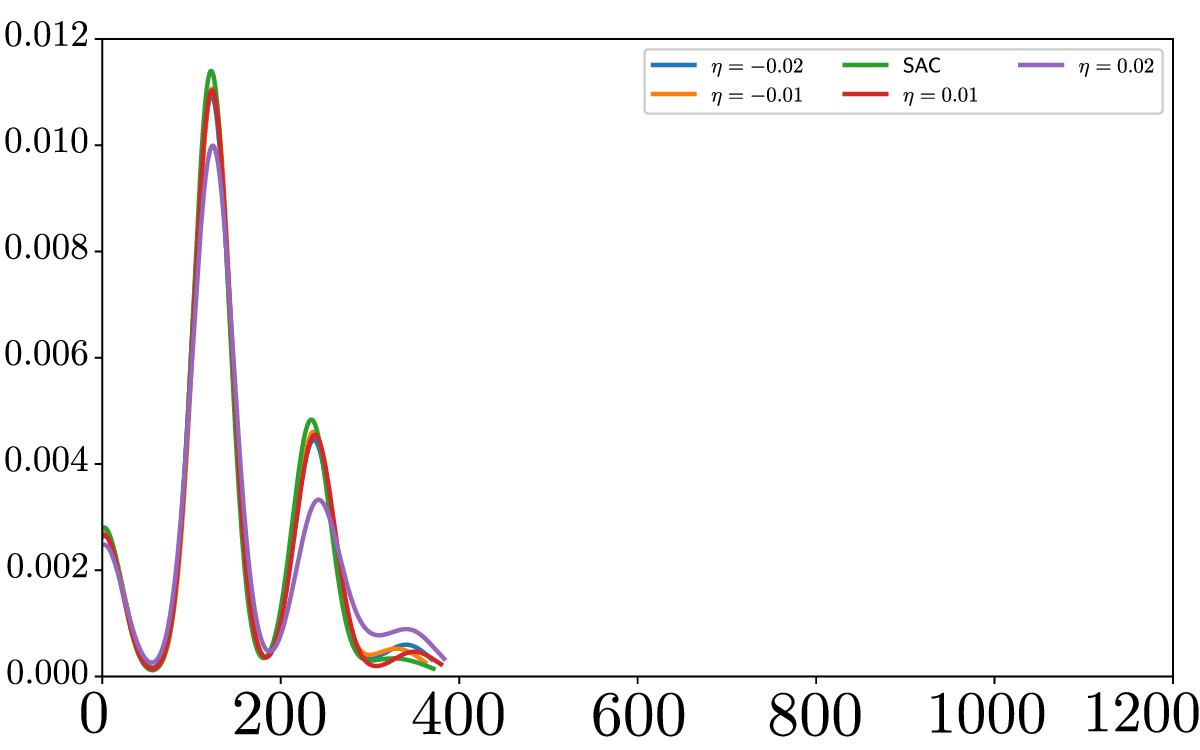}
            \put(24.5, 38){\includegraphics[width=0.73\textwidth]{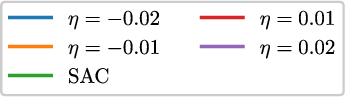}}
            \put(39.5, -5){\scriptsize Episode cost}
            \put(-6, 25){\scriptsize \rotatebox{90}{Density}}
        \end{overpic}
        \vspace{0.005\columnwidth} 
        \subcaption{Pendulum length $ l = 1.0 $ during training}
      \end{minipage}
      \hspace{0.002\columnwidth} 
    \begin{minipage}[t]{0.325\columnwidth}
      \centering
      \begin{overpic}[width=1.0\textwidth]{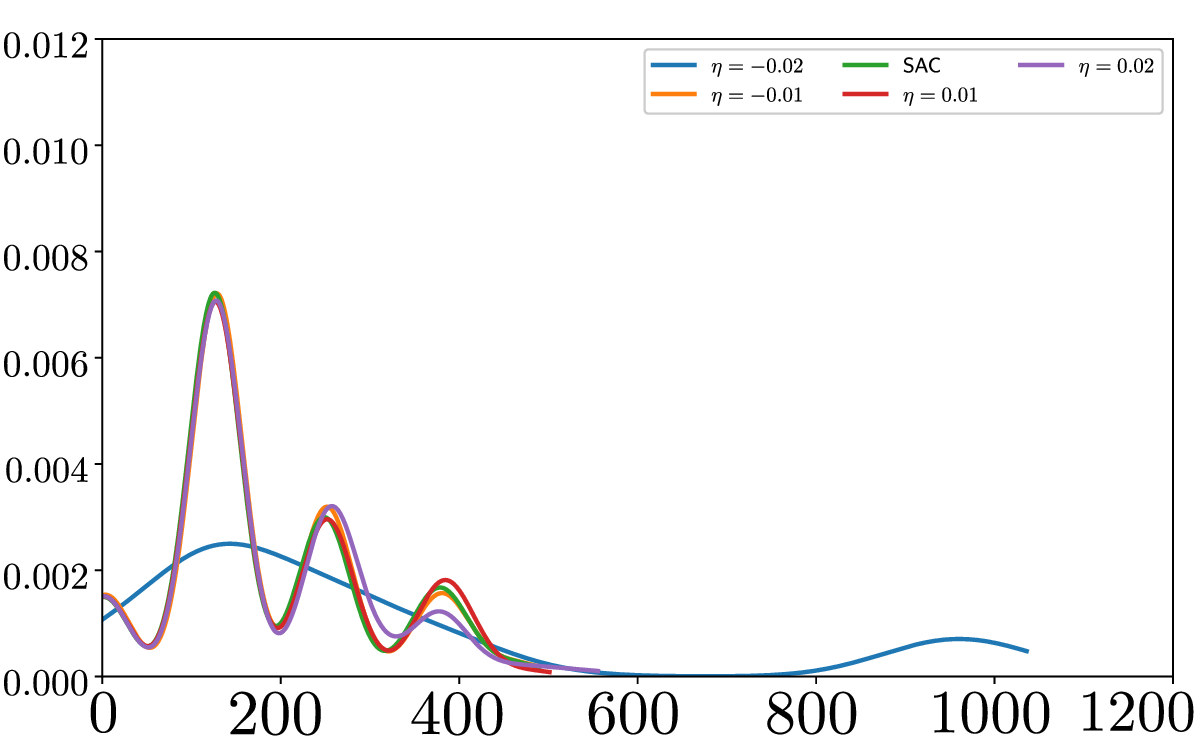}
        \put(24.5, 38){\includegraphics[width=0.73\textwidth]{figures/new_legend.eps}}
            \put(39.5, -5){\scriptsize Episode cost}
      \end{overpic}
      \vspace{0.005\columnwidth} 
      \subcaption{System perturbation $ l = 1.25 $}
    \end{minipage}
    \hspace{0.002\columnwidth} 
    \begin{minipage}[t]{0.325\columnwidth}
      \centering
      \begin{overpic}[width=1.0\textwidth]{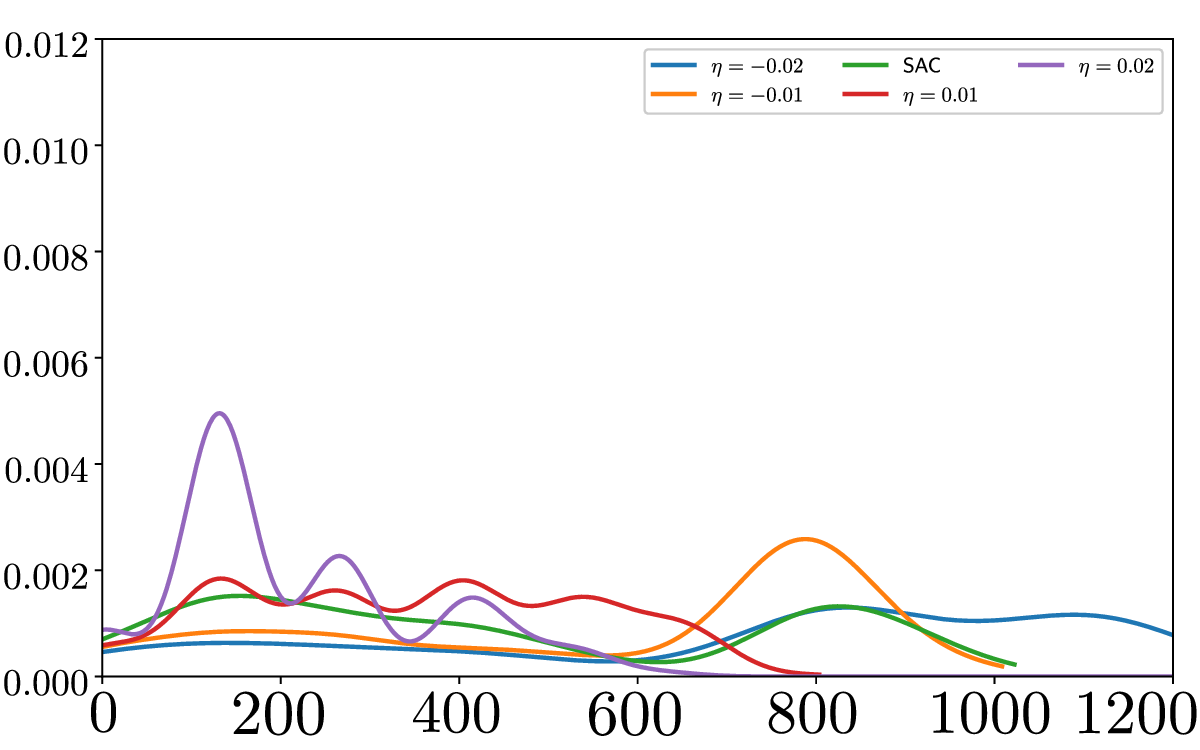}
        \put(24.5, 38){\includegraphics[width=0.73\textwidth]{figures/new_legend.eps}}
        \put(39.5, -5){\scriptsize Episode cost}
      \end{overpic}
      \vspace{0.005\columnwidth} 
      \subcaption{System perturbation $ l = 1.5 $}
    \end{minipage}
      \caption{Empirical distributions of the costs for different risk-sensitivity parameters $ \eta $.}\label{fig:empirical}
  \end{figure}

\section{Conclusions}\label{sec:conclusion}
In this paper, we proposed a unifying framework of CaI, named RCaI, using \renyi divergence variational inference.
We revealed that RCaI yields the LP regularized risk-sensitive control with exponential performance criteria. Moreover, we showed the equivalences for risk-sensitive control, MaxEnt control, the optimal posterior for CaI, and linearly-solvable control.
In addition to these connections, we derived the policy gradient method and the soft actor-critic method for the risk-sensitive RL via RCaI. Interestingly, \renyi entropy regularization also results in the same form of the risk-sensitive optimal policy and the soft Bellman equation as the LP regularization.

From a practical point of view, a major limitation of the proposed risk-sensitive soft actor-critic is its numerical instability for large $|\eta|$ cases.
\fin{Since $ \eta $ appears, for example, as $ \exp(\eta Q^{(\phi)}(x_t,u_t)) $ in the gradients \eqref{eq:grad_Q}--\eqref{eq:grad_pi}, the magnitude of $ \eta $ that does not cause the numerical instability depends on the scale of costs. 
Therefore, we need to choose $ \eta $ depending on environments.
In the experiment using Pendulum-v1, $ |\eta| $ that is larger than $ 0.03 $ results in the failure of learning due to the numerical instability. Although it is an important future work to address this issue, we would like to note that this issue is not specific to our algorithms, but occurs in general risk-sensitive RL with exponential utility.}
It is also important how to choose a specific value of the order parameter $ 1 + \eta $ of \renyi divergence. Since we showed that $ \eta $ determines the risk sensitivity of the optimal policy, we can follow previous studies on the choice of the sensitivity parameter of the risk-sensitive control without regularization.
\fin{The properties of the derived algorithms also need to be explored in future work, e.g., the compatibility of a function approximator for RSAC~\cite{Sutton1999policy}.}

\section*{Acknowledgments}
The authors thank Ran Wang for his valuable help in conducting the experiment.
This work was supported in part by JSPS KAKENHI Grant Numbers JP23K19117, JP24K17297, JP21H04875.

\bibliographystyle{ieeepes}
\bibliography{Risk_ctrl_infer}

\newpage
\appendix

\section{More details on Control as Inference}\label{app:CaI}
In this appendix, we give more details on CaI. 
As mentioned in \eqref{eq:opt_trajctory}, the distribution of the state and control input trajectory given optimality variables satisfies
\begin{align}
	p(\tau | \calO_{0:\ft}) &\propto p(\tau, \calO_{0:\ft} ) \nonumber\\
	&= \left[p(\calO_\ft | x_\ft)\prod_{t=0}^{\ft-1} p(\calO_t | x_t,u_t)\right] \left[   p(x_0) \prod_{t=0}^{\ft-1} p(x_{t+1} | x_t,u_t) p(u_t) \right], \nonumber
\end{align}
where $ p(u_t) = 1/\mu_L(\bbU) $ and $ p(\tau, \calO_{0:\ft}) $ is defined so that
\begin{align}
	\bbP(\tau \in \calB,\ \calO_{0:\ft} = {\bf o}_{0:\ft}) = \int_\calB p(\tau, {\bf o}_{0:\ft}) \rmd \tau \nonumber
\end{align}
for any $ {\bf o}_{0:\ft} \in \{0,1\}^{\ft+1} $ and any Borel set $ \calB $, where $ \bbP $ denotes the probability.
Therefore, we have
\begin{align}
	p(\tau | \calO_{0:\ft} = 1) \propto \left[ p(x_0) \prod_{t=0}^{\ft-1} p(x_{t+1} | x_t,u_t) \right] \exp\left( -c_\ft(x_\ft) - \sum_{t=0}^{\ft-1} c_t(x_t,u_t) \right) . \nonumber
\end{align}

The posterior $ p(u_t|x_t, \calO_{t:\ft} = 1) $ given the optimality condition $ \calO_{t:\ft} = 1 $ is called the optimal policy.
We emphasize that the optimality of $ p(u_t|x_t, \calO_{t:\ft} = 1) $ is defined by the condition $ \calO_{t:\ft} = 1 $ rather than by introducing a cost functional, unlike $ \pi^*(u_t|x_t) $ in \eqref{eq:opt_policy_inf}.
In the following, we drop $ = 1 $ for $ \calO_t $. 

The optimal policy can be computed as follows. Define
\begin{align}
	\beta_t (x_t,u_t) &:= p(\calO_{t:\ft} | x_t ,u_t ), \\
	\zeta_t(x_t) &:= p(\calO_{t:\ft} | x_t)	.
\end{align}
Then, it holds that
\begin{align}
	\zeta_t(x_t) = \int_{\bbU} p(\calO_{t:\ft} | x_t,u_t) p(u_t | x_t) \rmd u_t = \int_\bbU \beta_t (x_t,u_t) p(u_t) \rmd u_t = \frac{1}{\mu_L(\bbU)}   \int_\bbU \beta_t (x_t,u_t) \rmd u_t . \label{eq:Z_beta}
\end{align}
In addition, we have
\begin{align}
	\beta_t (x_t,u_t) &= p(\calO_{t:\ft} | x_t, u_t) = p(\calO_t | x_t,u_t)p(\calO_{t+1:\ft} | x_t,u_t)  \nonumber\\
	&= p(\calO_t | x_t,u_t)\int_{\bbX} p(\calO_{t+1:\ft} | x_{t+1}) p(x_{t+1} | x_t,u_t) \rmd x_{t+1}  \nonumber\\ 
	&= p(\calO_t | x_t,u_t)\int_{\bbX} \zeta_{t+1}(x_{t+1}) p(x_{t+1} | x_t,u_t) \rmd x_{t+1} , \label{eq:beta_Z} \\ 
	\zeta_\ft(x_\ft) &= p(\calO_\ft | x_\ft) = \exp(-c_\ft(x_\ft)) , \nonumber
\end{align}
where we used
\begin{align}
	p(\calO_{t+1:\ft} | x_t, u_t) &= \int_\bbX p(\calO_{t+1:\ft} , x_{t+1} | x_t,u_t) \rmd x_{t+1} \nonumber\\
	&= \int_\bbX p(\calO_{t+1:\ft} | x_{t+1}, x_t, u_t) p(x_{t+1} | x_t, u_t) \rmd x_{t+1} \nonumber\\
	&= \int_\bbX p(\calO_{t+1:\ft} | x_{t+1}) p(x_{t+1} | x_t, u_t) \rmd x_{t+1} . \nonumber
\end{align}
In terms of $ \beta_t $ and $ \zeta_t $, the optimal policy can be written as
\begin{align}
	p(u_t | x_t, \calO_{t:\ft}) &= \frac{p(x_t,u_t,\calO_{t:\ft})}{p(x_t, \calO_{t:\ft})} \nonumber\\
	&= \frac{p(\calO_{t:\ft} | x_t, u_t)}{p(\calO_{t:\ft} | x_t)} p(u_t | x_t) \nonumber\\
	&= \frac{\beta_t (x_t,u_t)}{\mu_L (\bbU)\zeta_t(x_t)} . \label{eq:opt_policy_beta}
\end{align}

Next, by the logarithmic transformation, we define
\begin{align}
	\sfQ_t (x_t,u_t) &:= - \log \frac{\beta_t (x_t,u_t)}{\mu_L(\bbU)}, \label{eq:Q_cai_app}\\
	\sfV_t (x_t) &:= - \log \zeta_t(x_t) . \label{eq:V_cai_app}
\end{align}
Then, by \eqref{eq:opt_policy_beta}, the optimal policy satisfies
\begin{align}
	p(u_t | x_t, \calO_{t:\ft}) = \exp\left(-\sfQ_t (x_t,u_t) + \sfV_t (x_t)\right) .
\end{align}
By \eqref{eq:Z_beta}, it holds that
\begin{align}
	\sfV_t (x_t) = - \log \left[ \int_\bbU \exp(-\sfQ_t (x_t,u_t)) \rmd u_t\right] .
\end{align}
By using \eqref{eq:beta_Z}, we obtain
\begin{align}
	\exp(-\sfQ_t (x_t,u_t)) \mu_L (\bbU) = \exp(-c_t(x_t,u_t)) \int_{\bbX} \zeta_{t+1}(x_{t+1}) p(x_{t+1} | x_t,u_t) \rmd x_{t+1} , \nonumber
\end{align}
which yields
\begin{align}
	\sfQ_t (x_t,u_t) = \sfc_t (x_t,u_t)  - \log \bbE_{p(x_{t+1} | x_t, u_t)}\left[ \exp(-\sfV_{t+1} (x_{t+1})) \right] .
\end{align}
Here, we defined $ \mathsf{c}_t (x_t,u_t) := c_t (x_t,u_t) + \log \mu_L (\bbU) $.
In summary, Proposition~\ref{prop:cai} holds.

\section{Proof of Theorem~\ref{thm:RCaI_opt}}\label{app:RCaI_opt}
This appendix is devoted to the analysis of the following problem:
\begin{align}
	&\underset{\{\pi_t\}_{t=0}^{\ft-1} }{\rm minimize}  ~~ && \frac{1}{\risk} \log \bbE\left[ \exp\left( \risk c_\ft(x_\ft) + \risk \sum_{t=0}^{\ft-1} \bigl( c_t (x_t,u_t) + \varepsilon \log \pi_t (u_t | x_t)  \bigr) \right)  \right] , \label{eq:CaI_prob_app}\\
	&{\rm subject~to} \ && x_{t+1} = f_t (x_t,u_t,w_t) , ~~ u_t \in \bbU,  ~~ \forall t \in \bbra{0,\ft-1}, \label{eq:system_app}\\
    & && u_t \sim \pi_t (\cdot | x) ~ {\rm given} ~ x_t = x, \label{eq:u_app}\\
    & && x_0 \sim \bbP_{x_0}. \label{eq:initial_app}
\end{align}
Here, $ \{w_t\}_{t=0}^{\ft-1} $ is an independent sequence, $ x_0 $ is independent of $ \{w_t\} $, $ \varepsilon  > 0 $ is the regularization parameter, and $ \eta $ is the risk-sensitivity parameter satisfying $ \eta > -\varepsilon^{-1} $, $ \eta \neq 0 $. Note that we do not assume the existence of densities $ p(x_{t+1}|x_t,u_t), ~ p(x_0) $.
To perform dynamic programming for Problem~\eqref{eq:CaI_prob_app}, define the value function and the Q-function as
\begin{align}
	V_t (x) &:= \inf_{\{\pi_s\}_{s = t}^{\ft-1}} \frac{1}{\risk}\log \bbE\left[ \exp\left( \risk c_\ft(x_\ft) + \risk \sum_{s=t}^{\ft-1} \Bigl( c_s (x_s,u_s) +  \varepsilon\log \pi_s (u_s | x_s)  \Bigr) \right) \ \middle| \ x_t  =x  \right] . \nonumber\\
    &\hspace{8cm}t\in \bbra{0,\ft-1}, \ x\in \bbX,\\
    V_\ft (x) &:= c_\ft (x) , ~~ x\in \bbX, \nonumber\\
        \calQ_t (x,u) &:= c_t (x,u) + \frac{1}{\risk} \log \bbE\left[ \exp\bigl( \risk V_{t+1}(f_t(x,u,w_t)) \bigr)  \right] , ~~ t\in \bbra{0,\ft-1}, \ x\in\bbX, \ u \in \bbU .  \label{eq:Q_V_app}
\end{align}
Then, under the assumption that $ \int_\bbU \exp\left( -\frac{\calQ_t(x,u')}{\varepsilon}\right) \rmd u' < \infty $, we prove that the unique optimal policy of Problem~\eqref{eq:CaI_prob_app} is given by
\begin{align}
	\pi_t^* (u|x) := \frac{\exp\left( -\frac{\calQ_t (x,u)}{\varepsilon} \right)}{\int_\bbU \exp\left( -\frac{\calQ_t(x,u')}{\varepsilon}\right) \rmd u'} , ~~ t \in \bbra{0,\ft-1}, \ u\in \bbU, \ x\in \bbX . \label{eq:CaI_opt_policy_app}
\end{align}

First, by definition, we have
\begin{align}
	&V_t (x)= \inf_{\{\pi_s\}_{s = t}^{\ft-1}} \frac{1}{\risk} \log \Biggl[ \int_\bbU \pi_t (u | x) \bbE\biggl[ \exp\biggl( \risk c_t(x,u) +  \varepsilon\risk\log \pi_t (u|x) + \risk c_\ft(x_\ft)  \nonumber\\
    &\hspace{3cm} + \risk \sum_{s=t+1}^{\ft-1} \Bigl( c_s (x_s,u_s) +  \varepsilon\log \pi_s (u_s | x_s)  \Bigr)  \biggr) \ \biggl| \ x_t = x, u_t = u  \biggr] \rmd u \Biggr] \nonumber\\
	&= \inf_{\{\pi_s\}_{s = t}^{\ft-1}} \frac{1}{\risk} \log \Biggl[ \int_\bbU \pi_t (u | x) \exp\Bigl( \risk c_t(x,u) + \varepsilon\risk \log \pi_t (u|x) \Bigr) \nonumber\\
	&\quad \times \bbE\biggl[ \exp\biggl( \risk c_\ft(x_\ft) +  \risk \sum_{s=t+1}^{\ft-1} \Bigl( c_s (x_s,u_s) +  \varepsilon\log \pi_s (u_s | x_s)  \Bigr)  \biggr) \ \biggl| \ x_t = x, u_t = u  \biggr] \rmd u \Biggr] \nonumber\\
	&=  \inf_{\pi_t} \frac{1}{\risk} \log \left[ \int_\bbU \pi_t (u | x) \exp\bigl( \risk c_t(x,u) + \varepsilon\risk  \log \pi_t (u|x) \bigr) \bbE\left[ \exp\bigl(\risk V_{t+1} (f_t(x,u,w_t))\bigr)  \right] \rmd u  \right] . \nonumber
\end{align}
By the definition of the Q-function~\eqref{eq:Q_V_app}, we get
\begin{align}
	V_t (x) &= \inf_{\pi_t(\cdot | x)\in \calP(\bbU)} \frac{1}{\risk} \log \left[ \int_\bbU \pi_t (u|x) \exp(\varepsilon \risk \log \pi_t (u|x)) \exp(\risk \calQ_t (x,u)) \rmd u  \right] \nonumber\\
	&=\inf_{\pi_t(\cdot | x)\in \calP(\bbU)} \frac{1}{\risk} \log \left[ \int_\bbU \bigl(\pi_t(u|x)\bigr)^{1+\varepsilon\risk} \left( \exp\left( \frac{-\calQ_t(x,u)}{\varepsilon}\right) \right)^{-\varepsilon\risk} \rmd u \right] \nonumber\\
	&= \inf_{\pi_t(\cdot | x)\in \calP(\bbU)} \frac{1}{\risk} \log \left[ \left( \int_\bbU \exp\left( \frac{-\calQ_t(x,u')}{\varepsilon} \right) \rmd u' \right)^{-\varepsilon\risk} \int_\bbU \pi_t (u|x)^{1+\varepsilon\risk} \pi_t^* (u|x)^{-\varepsilon\risk} \rmd u \right]  \nonumber\\
    &= -\varepsilon \log \left[ \int_\bbU \exp\left( -\frac{\calQ_t (x,u')}{\varepsilon}  \right) \rmd u' \right] + \inf_{\pi_t(\cdot | x)\in \calP(\bbU)}  \varepsilon D_{1+\varepsilon \risk} ( \pi_t (\cdot|x) \| \pi_t^* (\cdot|x) ) . \nonumber
\end{align}
Since $ D_{1+\varepsilon \risk} ( \pi_t (\cdot|x) \| \pi_t^* (\cdot|x) )  $ attains its minimum value $ 0 $ if and only if $ \pi_t (\cdot | x) = \pi_t^* (\cdot | x) $, we conclude that
\begin{align}
	V_t (x) = -\varepsilon \log \left[ \int_\bbU \exp\left( -\frac{\calQ_t (x,u')}{\varepsilon}  \right) \rmd u' \right] , ~~ \forall x \in \bbX,
\end{align}
and the unique optimal policy of Problem~\eqref{eq:CaI_prob_app} is given by \eqref{eq:CaI_opt_policy_app}. Moreover, $ \pi_t^* $ can be rewritten as
\begin{align}
	\pi_t^*(u|x) = \exp\left( - \frac{\calQ_t (x,u)}{\varepsilon} + \frac{V_t (x)}{\varepsilon} \right) , ~~ t \in \bbra{0,\ft-1}, \ u\in \bbU, \ x\in \bbX. \label{eq:cai_opt_policy_app}
\end{align}
When considering the deterministic system $ x_{t+1} = \bar{f}_t (x_t,u_t) $, we immediately obtain the relation
\begin{align}
    \calQ_t (x,u) = c_t (x,u) +  V_{t+1}(\bar{f}_t (x,u))  .
\end{align}

On the other hand, the unique optimal policy of the MaxEnt control problem:
\begin{align}
    \underset{\{\pi_t\}_{t=0}^{\ft-1} }{\rm minimize} ~~ \bbE\left[ c_\ft(x_\ft) + \sum_{t=0}^{\ft-1} \Bigl( c_t(x_t,u_t) - \varepsilon\calH_{1}(\pi_t (\cdot | x_t)) \Bigr) \right]  \label{eq:maxent_app}
\end{align}
is also given by \eqref{eq:cai_opt_policy_app} whose Q-function~\eqref{eq:Q_V_app} is replaced by
\begin{align}
    \calQ_t (x,u) = c_t(x,u) + \bbE[V_{t+1}(f_t(x,u,w_t)) ] . \nonumber
\end{align}
Therefore, when the system is deterministic, the Q-function of the LP regularized risk-sensitive control problem~\eqref{eq:CaI_prob_app} coincides with that of the MaxEnt control problem~\eqref{eq:maxent_app}. Consequently, the optimal policy of Problem~\eqref{eq:maxent_app} solves Problem~\eqref{eq:CaI_prob_app} for any $ \eta > -\varepsilon^{-1} $, $ \eta \neq 0 $ for deterministic systems.

\section{Linear quadratic Gaussian setting}\label{app:LQ}
In this appendix, we derive the regularized risk-sensitive optimal policy in the linear quadratic Gaussian setting.
\begin{theorem}\label{thm:LQ_app}
  Let $ p(x_{t+1} | x_t,u_t) = \calN(A_tx_t + B_tu_t,  \Sigma_t) $ and $ c_t (x_t,u_t) = (x_t^\top Q_t x_t + u_t^\top R_t u_t)/2, \ c_\ft(x_\ft) = x_\ft^\top Q_\ft x_\ft /2 $, where $ \Sigma_t $, $ Q_t $, and $ R_t $ are positive definite matrices for any $ t $, and $ \calN(\mu, \Sigma) $ denotes the Gaussian distribution with mean $ \mu $ and covariance $ \Sigma $. Let $ \bbX = \bbR^{n_x} $, $ \bbU = \bbR^{n_u} $.
  Assume that there exists a solution $ \{\Pi_t\}_{t=0}^{\ft} $ to the following Riccati difference equation:
\begin{align}
	\Pi_{t} &=Q_{t} + A_t^\top \Pi_{t+1} (I - \risk \Sigma_{t} \Pi_{t+1} + B_tR_{t}^{-1} B_t^\top \Pi_{t+1})^{-1} A_t, ~~ \forall t \in \bbra{0,\ft-1}, \\
	\Pi_{\ft} &= Q_\ft ,
\end{align}
such that $ \Sigma_{t}^{-1} - \eta \Pi_{t+1} $ is positive definite for any $ t \in \bbra{0,\ft-1} $. Here, $ I $ denotes the identity matrix of appropriate dimension.
  Then, the unique optimal policy of Problem~\eqref{eq:equivalent_prob} is given by
\begin{align}
	\pi_{t}^* (u | x) &= \calN \Bigl(u | - (R_{t} + B_t^\top \Pi_{t+1} (I-\risk \Sigma_{t} \Pi_{t+1})^{-1} B_t)^{-1} B_t^\top \Pi_{t+1}(I-\risk \Sigma_{t}\Pi_{t+1})^{-1} A_t x , \nonumber\\
	&\qquad\quad  (R_{t} + B_t\Pi_{t+1} (I-\risk \Sigma_{t} \Pi_{t+1})^{-1} B_t)^{-1} \Bigr) .\label{eq:lq_opt_app}
\end{align}
\hfill $ \diamondsuit $
\end{theorem}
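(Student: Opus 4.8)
The plan is to verify Theorem~\ref{thm:LQ_app} by induction on $t$, showing that the value function $V_t$ is quadratic, $V_t(x) = \frac{1}{2} x^\top \Pi_t x + (\text{const})$, and that the optimal policy at each stage is the Gaussian in \eqref{eq:lq_opt_app}. The base case is immediate since $V_\ft(x) = c_\ft(x) = \frac{1}{2} x^\top Q_\ft x$ with $\Pi_\ft = Q_\ft$. For the inductive step, I would first compute the Q-function \eqref{eq:Q_V} (equivalently \eqref{eq:Q_V_app}): assuming $V_{t+1}(x) = \frac{1}{2} x^\top \Pi_{t+1} x + d_{t+1}$, the inner expectation is over the Gaussian transition $\calN(A_t x_t + B_t u_t, \Sigma_t)$, so $\bbE_{p(x_{t+1}|x_t,u_t)}[\exp(\eta V_{t+1}(x_{t+1}))]$ is the moment generating function of a Gaussian quadratic form. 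The standard identity $\bbE_{\calN(m,\Sigma)}[\exp(\frac{\eta}{2} z^\top \Pi z)] = \deter(I - \eta \Sigma \Pi)^{-1/2} \exp(\frac{\eta}{2} m^\top \Pi (I - \eta \Sigma \Pi)^{-1} m)$, valid precisely when $\Sigma^{-1} - \eta \Pi \succ 0$ (which is the hypothesis on $\Sigma_t^{-1} - \eta \Pi_{t+1}$), gives after taking $\frac{1}{\eta}\log$ that $\calQ_t(x,u)$ is quadratic in $(x,u)$ with the $\Pi_{t+1}(I - \eta \Sigma_t \Pi_{t+1})^{-1}$ block appearing in place of $\Pi_{t+1}$.

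Next I would apply Theorem~\ref{thm:RCaI_opt}: the optimal policy is $\pi_t^*(u|x) = \exp(-\calQ_t(x,u) + V_t(x))$, i.e.\ the Gibbs distribution with energy $\calQ_t$, and $V_t(x) = -\log \int_\bbU \exp(-\calQ_t(x,u'))\,\rmd u'$. Since $\calQ_t(x,u)$ is a positive definite quadratic in $u$ (with cross term in $x$), this integral is a Gaussian integral: completing the square in $u$ identifies the mean as the affine function of $x$ obtained by minimizing $\calQ_t(x,\cdot)$ and the covariance as the inverse of the Hessian in $u$. Writing $M := \Pi_{t+1}(I - \eta \Sigma_t \Pi_{t+1})^{-1}$, the $u$-Hessian of $2\calQ_t$ is $R_t + B_t^\top M B_t$, giving covariance $(R_t + B_t^\top M B_t)^{-1}$, and the mean is $-(R_t + B_t^\top M B_t)^{-1} B_t^\top M A_t x$, matching \eqref{eq:lq_opt_app}. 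Completing the square also shows $V_t(x) = \frac{1}{2} x^\top \Pi_t x + d_t$ where the quadratic coefficient is $Q_t + A_t^\top M A_t - A_t^\top M B_t (R_t + B_t^\top M B_t)^{-1} B_t^\top M A_t$; a routine matrix manipulation (matrix inversion lemma applied to $(I - \eta\Sigma_t\Pi_{t+1} + B_t R_t^{-1} B_t^\top \Pi_{t+1})^{-1}$) reduces this to the stated Riccati recursion $\Pi_t = Q_t + A_t^\top \Pi_{t+1}(I - \eta\Sigma_t\Pi_{t+1} + B_t R_t^{-1} B_t^\top \Pi_{t+1})^{-1} A_t$, closing the induction. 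I should also check the hypothesis of Theorem~\ref{thm:RCaI_opt}, namely $\int_\bbU \exp(-\calQ_t(x,u'))\,\rmd u' < \infty$: this holds because $\calQ_t(x,\cdot)$ is a strictly convex quadratic, since its $u$-Hessian $R_t + B_t^\top M B_t$ is positive definite ($R_t \succ 0$ and $M = \Pi_{t+1}(I-\eta\Sigma_t\Pi_{t+1})^{-1} = (\Sigma_t^{-1} - \eta\Pi_{t+1})^{-1}\Sigma_t^{-1}\Pi_{t+1}$, hmm, I'd need $M \succeq 0$ — this follows from $\Pi_{t+1} \succeq 0$ and $\Sigma_t^{-1} - \eta\Pi_{t+1} \succ 0$ via a similarity/congruence argument, or more directly from the MGF being finite and positive).

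The main obstacle I anticipate is the linear-algebra bookkeeping: carefully tracking the $(I - \eta\Sigma_t\Pi_{t+1})^{-1}$ factors through the Gaussian MGF and then through the completion of the square, and verifying that the resulting quadratic coefficient of $V_t$ collapses exactly to the claimed Riccati form. The key identity is the matrix inversion lemma (Woodbury) rewriting $(I - \eta\Sigma_t\Pi_{t+1} + B_t R_t^{-1}B_t^\top\Pi_{t+1})^{-1}$, together with the push-through identity $M A_t - M B_t(R_t + B_t^\top M B_t)^{-1}B_t^\top M A_t = (M^{-1} + B_t R_t^{-1} B_t^\top)^{-1} A_t$. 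A secondary point requiring care is tracking the additive constants $d_t$ (arising from the $\deter(I - \eta\Sigma_t\Pi_{t+1})^{-1/2}$ factors and the normalizing constants of the Gaussian integrals); these do not affect the optimal policy, so for the statement as given I can carry them along symbolically without evaluating them, but I should confirm they are finite, which again reduces to $\Sigma_t^{-1} - \eta\Pi_{t+1} \succ 0$. Throughout, the positivity hypotheses in the theorem statement are exactly what is needed to keep all Gaussian integrals convergent and all inverses well-defined.
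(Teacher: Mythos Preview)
Your proposal is correct and follows essentially the same approach as the paper: backward induction showing $V_t(x)=\tfrac{1}{2}x^\top\Pi_t x+\text{const}$, computing $\calQ_t$ via the Gaussian quadratic-form MGF (which the paper derives by an explicit completion of squares in the noise variable, using exactly the hypothesis $\Sigma_t^{-1}-\eta\Pi_{t+1}\succ 0$), then completing the square in $u$ to read off the Gaussian policy and the Riccati recursion. Your extra care about verifying $R_t+B_t^\top M B_t\succ 0$ so that $\int_\bbU\exp(-\calQ_t(x,u))\,\rmd u<\infty$ is a point the paper leaves implicit; the identity $M=\Pi_{t+1}(I-\eta\Sigma_t\Pi_{t+1})^{-1}=\eta\Pi_{t+1}(\Sigma_t^{-1}-\eta\Pi_{t+1})^{-1}\Pi_{t+1}+\Pi_{t+1}$ that the paper records en route makes the symmetry of $M$ transparent and, together with $R_t\succ 0$, is what you need there.
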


\begin{proof}
    In this proof, for notational simplicity, we often drop the time index $ t $ as $ A,B $.
    First, for $ t = \ft-1 $, the Q-function in \eqref{eq:Q_V} is
\begin{align}
	&\calQ_{\ft -1} (x,u) = \frac{1}{2} \|x\|_{Q_{\ft-1}}^2 + \frac{1}{2} \|u\|_{R_{\ft-1}}^2 + \frac{1}{\risk} \log \bbE\left[ \exp\left( \frac{\risk}{2} \|A_{\ft-1}x + B_{\ft-1}u + w_{\ft-1} \|_{\Pi_\ft}^2 \right) \right] , \label{eq:Q_T_app}
\end{align}
where $ \|x\|_{P}^2 := x^\top P x $ for a symmetric matrix $ P $.
Here, we have
\begin{align}
	&\bbE\left[ \exp\left( \frac{\risk}{2} \|Ax + Bu + w_{\ft-1} \|_{\Pi_\ft}^2 \right) \right] \nonumber\\
    &= \frac{1}{\sqrt{(2\pi)^{n_x} |\Sigma_{\ft-1}|}} \int_{\bbR^{n_x}} \exp\left( - \frac{1}{2} \|w\|_{\Sigma_{\ft-1}^{-1}}^2 + \frac{\risk}{2} \|Ax + Bu + w\|_{\Pi_\ft}^2   \right) \rmd w ,\label{eq:expectation_exp}
\end{align}
where $ |\Sigma_{\ft-1}| $ denotes the determinant of $ \Sigma_{\ft-1} $, and
\begin{align}
	&- \frac{1}{2} \|w\|_{\Sigma_{\ft-1}^{-1}}^2 + \frac{\risk}{2} \|Ax + Bu + w\|_{\Pi_\ft}^2 \nonumber\\
	&= -\frac{1}{2} \left( \|w \|_{\Sigma^{-1} - \risk \Pi}^2 - 2\risk w^\top \Pi (Ax + Bu) - \|Ax + Bu\|_{\risk \Pi}^2   \right) . \nonumber
\end{align}
By the assumption that $ \Sigma_{\ft-1}^{-1} - \risk \Pi_\ft  $ is positive definite and a completion of squares argument,
\begin{align*}
	&- \frac{1}{2} \|w\|_{\Sigma_{\ft-1}^{-1}}^2 + \frac{\risk}{2} \|Ax + Bu + w\|_{\Pi_\ft}^2 \\
	&=-\frac{1}{2} \left( \|w - (\Sigma^{-1} - \risk \Pi)^{-1} \risk \Pi (Ax + Bu)\|_{\Sigma^{-1} - \risk \Pi}^2  - \|\risk \Pi (Ax + Bu) \|_{(\Sigma^{-1} - \risk \Pi)^{-1}}^2 -  \|Ax + Bu\|_{\risk \Pi}^2 \right) .
\end{align*}
Thus, we obtain
\begin{align}
	&\int_{\bbR^{n_x}} \exp\left( - \frac{1}{2} \|w\|_{\Sigma_{\ft-1}^{-1}}^2 + \frac{\risk}{2} \|Ax + Bu + w\|_{\Pi_\ft}^2   \right) \rmd w \nonumber\\
	&= \sqrt{(2\pi)^{n_x} |(\Sigma^{-1} - \risk \Pi)^{-1} |} \exp\left( \frac{1}{2}  \|\risk \Pi (Ax + Bu) \|_{(\Sigma^{-1} - \risk \Pi)^{-1}}^2 + \frac{1}{2} \|Ax + Bu\|_{\risk \Pi}^2  \right) . \label{eq:gauss_integral}
\end{align}
Consequently, by \eqref{eq:Q_T_app}--\eqref{eq:gauss_integral}, the Q-function can be written as
\begin{align*}
	\calQ_{\ft-1} (x,u) &= \frac{1}{2} \|x\|_{Q_{\ft-1}}^2 + \frac{1}{2} \|u\|_{R_{\ft-1}}^2 + \frac{1}{2\risk}  \|\risk \Pi (A_{\ft-1}x + B_{\ft-1}u) \|_{(\Sigma_{\ft-1}^{-1} - \risk \Pi_\ft)^{-1}}^2  \\
    &\quad + \frac{1}{2} \|A_{\ft-1}x + B_{\ft-1}u\|_{\Pi}^2 + C_{\calQ_{\ft-1}} \\
	&= \frac{1}{2} \|x\|_{Q}^2 + \frac{1}{2} \|u\|_{R}^2 + \frac{1}{2} \|Ax + Bu\|_{\risk \Pi(\Sigma^{-1}-\risk \Pi)^{-1}\Pi + \Pi}^2 + C_{\calQ_{\ft-1}} \\
	&= \frac{1}{2} \|x\|_{Q}^2 + \frac{1}{2} \|u\|_{R}^2 + \frac{1}{2} \|Ax + Bu\|_{\Pi(I - \risk \Sigma \Pi)^{-1}}^2 + C_{\calQ_{\ft-1}},
\end{align*}
where the constant $ C_{\calQ_{\ft-1}} $ is independent of $ (x,u) $. Now, we adopt a completion of squares argument again:
\begin{align*}    
	\calQ_{\ft-1} (x,u)&= \frac{1}{2} \left( \|u\|_{R + B^\top\Pi (I-\risk \Sigma \Pi)^{-1} B}^2 + 2 x^\top A^\top \Pi(I-\risk \Pi\Sigma)^{-1} Bu + \|x\|_{Q + A^\top \Pi(I-\risk \Sigma \Pi)^{-1} A}^2  \right) \\
    &\qquad + C_{\calQ_{\ft-1}} \\
	&=\frac{1}{2} \biggl( \| u + (R + B^\top \Pi (I-\risk \Sigma \Pi)^{-1} B)^{-1} B^\top (I-\risk \Pi\Sigma)^{-1} \Pi A x \|_{R + B^\top \Pi (I-\risk \Sigma \Pi)^{-1} B}^2  \\
	&\qquad -  \|B^\top (I-\risk \Pi\Sigma)^{-1} \Pi A x \|_{(R + B^\top \Pi (I-\risk \Sigma \Pi)^{-1} B)^{-1}}^2 + \|x\|_{Q + A^\top \Pi(I-\risk \Sigma \Pi)^{-1} A}^2 \biggr) \\
    &\qquad + C_{\calQ_{\ft-1}} \\
    &= \frac{1}{2} \| u + (R + B^\top \Pi_\ft (I-\risk \Sigma \Pi_\ft)^{-1} B)^{-1} B^\top \rev{\Pi_\ft(I-\risk \Sigma\Pi_\ft)^{-1}} A x \|_{R + B^\top \Pi_\ft (I-\risk \Sigma \Pi_\ft)^{-1} B}^2  \\
	&\quad + \frac{1}{2} \|x\|_{\Pi_{\ft-1}}^2+ C_{\calQ_{\ft-1}} .
\end{align*}
Here, we used \rev{$ \Pi_\ft (I - \risk \Sigma_{\ft-1} \Pi_\ft)^{-1} = (I- \risk \Pi_\ft \Sigma_{\ft-1})^{-1} \Pi_\ft $} and
\begin{align}
	\Pi_{\ft-1} &=Q_{\ft-1} + A_{\ft-1}^\top \Pi_{\ft} (I - \risk \Sigma_{\ft-1} \Pi_\ft + B_{\ft-1}R_{\ft-1}^{-1} B_{\ft-1}^\top \Pi_\ft)^{-1} A_{\ft-1} \nonumber\\
    &= \rev{Q} +  A^\top \Pi_\ft (I-\risk \Sigma_{\ft-1} \Pi_\ft)^{-1} A - A^\top \Pi_\ft (I - \risk \Sigma_{\ft-1} \Pi_\ft)^{-1} B \nonumber\\
    &\quad \times (R_{\ft-1} + B^\top \Pi_{\ft} (I-\risk \Sigma_{\ft-1} \Pi_\ft)^{-1} B)^{-1}  B^\top (I-\risk \Pi_{\ft}\Sigma_{\ft-1})^{-1} \Pi_\ft A . \nonumber
\end{align}
Therefore, the optimal policy at $ t = \ft-1 $ is 
\begin{align}
	\pi_{\ft-1}^* (u | x) 
	&= \calN \bigl(u | - (R_{\ft-1} + B^\top \Pi_\ft (I-\risk \Sigma_{\ft-1} \Pi_\ft)^{-1} B)^{-1} B^\top \Pi_\ft(I-\risk \Sigma_{\ft-1}\Pi_\ft)^{-1} A x , \nonumber\\
	&\qquad\qquad  (R_{\ft-1} + B^\top\Pi_\ft (I-\risk \Sigma_{\ft-1} \Pi_\ft)^{-1} B)^{-1} \bigr) .
\end{align}
The value function is given by
\begin{align}
	V_{\ft-1}(x) = - \log \left[ \int_{\bbR^{n_u}} \exp(-  \calQ_{\ft-1} (x,u)) \rmd u \right] = \frac{1}{2} \|x\|_{\Pi_{\ft-1}}^2 + C_{V_{\ft-1}}, \nonumber
\end{align}
where $ C_{V_{\ft-1}} $ does not depend on $ x $.

By applying the same argument as above for $ t = \ft-2,\ldots,0 $, we arrive at the optimal policy \eqref{eq:lq_opt_app} and
\begin{align}
	&V_t(x) = \frac{1}{2} \|x\|_{\Pi_{t}}^2 + C_{V_{t}}, \\
	&\calQ_t (x,u) \nonumber\\
    &= \frac{1}{2} \| u + (R_t + B^\top \Pi_{t+1} (I-\risk \Sigma_t \Pi_{t+1})^{-1} B)^{-1} B^\top \rev{\Pi_{t+1}(I-\risk  \Sigma_t\Pi_{t+1})^{-1}} A x \|_{R_t + B^\top \Pi_{t+1} (I-\risk \Sigma_t \Pi_{t+1})^{-1} B}^2  \nonumber\\
	&\quad + \frac{1}{2} \|x\|_{\Pi_{t}}^2 + C_{\calQ_{t}} ,
\end{align}
where $ C_{V_{t}} $ and $ C_{\calQ_{t}} $ are independent of $ (x,u) $.
This completes the proof.
\end{proof}

By the same argument as above, the optimal policy of the \renyi entropy regularized risk-sensitive control problem~\eqref{prob:renyi_regularized_main} in the linear quadratic Gaussian setting is also given by \eqref{eq:lq_opt_app}.

\section{Proof of Lemma~\ref{lem:duality_unbounded_informal}}\label{app:dual}
First, we give the precise statement of Lemma~\ref{lem:duality_unbounded_informal}.
To this end, for $ a,b \in \bbR $, define
\begin{align}
	\underline{\calB}_{a,b}(\bbU) &:= \left\{ g : \bbU\rightarrow \bbR \ \middle| \ \text{$ g $ is bounded below},\ \int_{\bbU} \exp(a g(u)) \rmd u < \infty , \ \int_\bbU \exp(b g(u)) \rmd u < \infty \right\} . \label{eq:bounded_func}
\end{align}
Similarly, define $ \oline{\calB}_{a,b} (\bbU) $ for upper bounded functions.
For given $ g : \bbU \rightarrow \bbR $, $ a \in \bbR $, and $ \alpha \in \bbR \setminus \{0,1\} $, define
\begin{align}
	\calP_{a, g} (\bbU) &:= \left\{\rho \in \calP(\bbU) \ \middle| \ \int_\bbU \exp (a g(u)) \rho(u) \rmd u < \infty \right\} , \nonumber\\
    L^\alpha ( \bbU) &:= \left\{ \rho \in \calP (\bbU) \ \middle| \ \int_\bbU \rho (u)^{\alpha} \rmd u < \infty  \right\} . \nonumber
\end{align}
If \rev{$ \rho \in L^\alpha (\bbU)  $} and $ \alpha \in (0,1) $, then it holds that $ \calH_\alpha (\rho) < \infty $. If $ \alpha \in (-\infty,0) \cap (1,\infty) $, we have $ \calH_\alpha (\rho) > -\infty $.

Now, we are ready to state the duality lemma.
\begin{lemma}\label{lem:duality_unbounded_app}
For $ \beta, \gamma \in \bbR \setminus \{0\} $ such that $ \beta < \gamma $ and for $ g \in \uline{\calB}_{\{\beta, -(\gamma - \beta)\}}(\bbU) $, it holds that
\begin{align}
	\frac{1}{\beta} \log \left[ \int_{\bbU} \exp(\beta g(u)) \rmd u \right] = \inf_{\rho\in  L^{1-\frac{\gamma}{\gamma - \beta}}(\bbU)} \left\{ \frac{1}{\gamma} \log \left[ \int_{\bbU} \exp(\gamma g(u)) \rho (u) \rmd u \right]- \frac{1}{\gamma -\beta} \calH_{1-\frac{\gamma}{\gamma - \beta}} (\rho) \right\}, \label{eq:dual_general_unbounded_app}
\end{align}
and the unique optimal solution that minimizes the right-hand side of \eqref{eq:dual_general_unbounded_app} is given by
\begin{align}
	\rho (u) = \frac{\exp \left(-(\gamma -\beta) g(u) \right) }{\int_\bbU \exp (-(\gamma -\beta) g(u'))  \rmd u'} , ~~ u\in \bbU .\label{eq:opt_density_unbounded_app}
\end{align}
In addition, for $ h\in  \oline{\calB}_{\{\gamma, \gamma - \beta\}} (\bbU)$, it holds that
\begin{align}
	\frac{1}{\gamma} \log \left[ \int \exp(\gamma h(u)) \rmd u \right] = \sup_{\rho\in L^{\frac{\gamma}{\gamma - \beta}}(\bbU)} \left\{ \frac{1}{\beta} \log \left[ \int \exp(\beta h(u)) \rho (u) \rmd u \right] + \frac{1}{\gamma -\beta} \calH_{\frac{\gamma}{\gamma - \beta}} (\rho) \right\}, \label{eq:dual_sup_general_unbounded}
\end{align}
and the unique optimal solution that maximizes the right-hand side of \eqref{eq:dual_sup_general_unbounded} is given by
\begin{align}
	\rho(u)  =  \frac{\exp((\gamma - \beta) h(u))}{\int \exp((\gamma - \beta) h(u'))\rmd u'}, ~~ u\in \bbU .
\end{align}
\hfill $ \diamondsuit $
\end{lemma}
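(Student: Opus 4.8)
The plan is to establish the equality \eqref{eq:dual_general_unbounded_app} in two steps --- an upper bound for the infimum, obtained by evaluating its right-hand side at the explicit candidate, and a matching lower bound valid for every admissible $\rho$ --- and then to deduce \eqref{eq:dual_sup_general_unbounded} by a change of variables. Throughout I would write $\delta:=\gamma-\beta>0$ and $\alpha:=1-\frac{\gamma}{\delta}=-\frac{\beta}{\delta}$, so that $1-\alpha=\frac{\gamma}{\delta}$, $\alpha\notin\{0,1\}$, and $\alpha(1-\alpha)=-\frac{\beta\gamma}{\delta^2}$; the sign of the latter equals that of $-\beta\gamma$, which puts $\alpha$ in exactly the regime where $\calH_\alpha$ is finite on $L^\alpha(\bbU)$ (combining the one-sided bounds noted before the lemma with $\rho\in L^\alpha(\bbU)$). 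Let $Z:=\int_\bbU \exp(-\delta g(u'))\,\rmd u'$, which is finite and positive since $g\in\uline{\calB}_{\{\beta,-\delta\}}(\bbU)$, and set $\rho^\star:=\exp(-\delta g)/Z$; because $\int_\bbU (\rho^\star)^\alpha\,\rmd u = Z^{-\alpha}\int_\bbU \exp(-\alpha\delta g)\,\rmd u = Z^{-\alpha}\int_\bbU \exp(\beta g)\,\rmd u<\infty$ (using $\alpha\delta=-\beta$), the density $\rho^\star$ is admissible, i.e. $\rho^\star\in L^\alpha(\bbU)$.

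First I would evaluate the bracket on the right of \eqref{eq:dual_general_unbounded_app} at $\rho=\rho^\star$. Writing $I:=\int_\bbU \exp(\beta g)\,\rmd u$, one has $\int_\bbU \exp(\gamma g)\rho^\star\,\rmd u = Z^{-1}I$ and $\int_\bbU (\rho^\star)^\alpha\,\rmd u = Z^{-\alpha}I$, so the bracket becomes $\frac1\gamma(\log I-\log Z)-\frac1{\delta\alpha(1-\alpha)}(\log I-\alpha\log Z)$. Substituting $\frac1{\delta\alpha(1-\alpha)}=-\frac{\delta}{\beta\gamma}$ and then $\frac{\delta}{\beta\gamma}\alpha=-\frac1\gamma$, the $\log Z$ terms cancel and what remains is $\bigl(\frac1\gamma+\frac{\delta}{\beta\gamma}\bigr)\log I=\frac{\beta+\delta}{\beta\gamma}\log I=\frac1\beta\log I$, which is precisely the left-hand side of \eqref{eq:dual_general_unbounded_app}. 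Hence the infimum is at most $\frac1\beta\log I$ and is attained at $\rho^\star$.

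The core of the proof is the reverse bound: for every $\rho\in L^\alpha(\bbU)$,
\[ \frac1\beta\log\int_\bbU \exp(\beta g)\,\rmd u \;\le\; \frac1\gamma\log\int_\bbU \exp(\gamma g)\rho\,\rmd u \;-\; \frac1\delta\calH_\alpha(\rho). \]
The key identity is the pointwise factorization $\exp(\beta g)=\bigl(\exp(\gamma g)\rho\bigr)^{\beta/\gamma}\bigl(\rho^\alpha\bigr)^{\delta/\gamma}$, which holds because $\alpha\delta=-\beta$ and whose exponents sum to $\frac{\beta+\delta}{\gamma}=1$. Setting $P:=\exp(\beta g)$, $Q:=\exp(\gamma g)\rho$, $R:=\rho^\alpha$ and $s:=\beta/\gamma$, this reads $P=Q^sR^{1-s}$; solving for whichever of $P,Q,R$ carries an exponent in $(0,1)$ and applying H\"older's inequality to that genuine convex combination yields $\int P\le(\int Q)^s(\int R)^{1-s}$ when $s\in(0,1)$ and $\int P\ge(\int Q)^s(\int R)^{1-s}$ when $s\notin[0,1]$ --- the rearrangement in the second case being legitimate since $\int P=I<\infty$, $\int R=\int_\bbU\rho^\alpha\in(0,\infty)$, and (the only delicate subcase) when $\gamma>0,\ \beta<0,\ \int Q=+\infty$ the right-hand side above is $+\infty$ and there is nothing to prove. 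Since $\beta<\gamma$ with $\beta,\gamma\neq0$, one has $s\in(0,1)$ exactly when $\beta>0$ and $s\notin[0,1]$ exactly when $\beta<0$; taking logarithms and multiplying by $1/\beta$ --- which preserves the inequality for $\beta>0$ and reverses it for $\beta<0$ --- gives the displayed bound in both cases after recognizing $\frac{\delta}{\beta\gamma}\log\int_\bbU\rho^\alpha\,\rmd u=-\frac1\delta\calH_\alpha(\rho)$. For uniqueness of the minimizer, equality in the relevant H\"older step forces the two functions in that H\"older pair to be proportional a.e., which in every case reduces to $\rho\propto\exp(-\delta g)$ (using $\gamma/(1-\alpha)=\delta$ and $\beta/\alpha=-\delta$), hence $\rho=\rho^\star$ after normalization; this identifies the minimizer with \eqref{eq:opt_density_unbounded_app}.

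Finally, \eqref{eq:dual_sup_general_unbounded} follows by applying the just-proved \eqref{eq:dual_general_unbounded_app} with the data $(\beta,\gamma,g)$ replaced by $(-\gamma,-\beta,-h)$: indeed $-\gamma<-\beta$, the associated $\delta$ is again $(-\beta)-(-\gamma)=\gamma-\beta$, the associated order parameter is $1-\frac{-\beta}{\delta}=\frac{\gamma}{\delta}$, and $h\in\oline{\calB}_{\{\gamma,\gamma-\beta\}}(\bbU)$ is equivalent to $-h\in\uline{\calB}_{\{-\gamma,-\delta\}}(\bbU)$. The resulting identity is $-\frac1\gamma\log\int\exp(\gamma h)\,\rmd u=\inf_{\rho}\bigl\{-\frac1\beta\log\int\exp(\beta h)\rho\,\rmd u-\frac1\delta\calH_{\gamma/\delta}(\rho)\bigr\}$, and multiplying by $-1$ turns the infimum into the claimed supremum, while the minimizer $\exp(\delta h)/\int\exp(\delta h)\,\rmd u'$ becomes the maximizer. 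The only genuinely fiddly part of the whole argument is the sign bookkeeping in the third step --- which face of H\"older to use, and how multiplication by $1/\beta$ flips the inequality --- together with the integrability caveats ($\int_\bbU\exp(\gamma g)\rho\,\rmd u=\infty$, and $\rho^\star\in L^\alpha(\bbU)$) needed to make the rearrangements and the equality discussion rigorous; conceptually it is a single H\"older estimate built on the factorization $\exp(\beta g)=(\exp(\gamma g)\rho)^{\beta/\gamma}(\rho^\alpha)^{\delta/\gamma}$.
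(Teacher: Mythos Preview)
Your argument is correct and rests on the same tool as the paper's --- H\"older's inequality --- but is organized more economically. The paper first substitutes $\alpha=\frac{\gamma}{\gamma-\beta}$ and $g\mapsto(\gamma-\beta)g$ to reduce to the normalized identities \eqref{eq:dual} and \eqref{eq:dual_sup}, then proves each of these separately for $\alpha>1$ and for $\alpha\in(0,1)$ via weighted H\"older inequalities taken against the auxiliary measures $\exp((\alpha-1)g)\,\rmd u$ and $\exp((\alpha-1)h)\rho\,\rmd u$; the extension to $\alpha<0$ is done by the same reflection $g\mapsto -h$ that you use at the end. Your route bypasses the reparametrization: the pointwise factorization $\exp(\beta g)=(\exp(\gamma g)\rho)^{\beta/\gamma}(\rho^{-\beta/\delta})^{\delta/\gamma}$ lets you apply plain H\"older with respect to Lebesgue measure, splitting only on the sign of $s=\beta/\gamma$, and then the sup identity \eqref{eq:dual_sup_general_unbounded} drops out from a single substitution $(\beta,\gamma,g)\mapsto(-\gamma,-\beta,-h)$. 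What you gain is brevity and a cleaner dependence on the original parameters; what the paper's case splitting makes more explicit is the verification of the H\"older equality conditions that pin down the minimizer in each regime. One point you rely on implicitly deserves a sentence: in the direct case $s\in(0,1)$ (i.e.\ $0<\beta<\gamma$) your $\alpha=-\beta/\delta$ is negative, and the admissibility constraint $\rho\in L^\alpha(\bbU)$ therefore forces $\rho>0$ a.e.; this is what makes the identity $P=Q^{s}R^{1-s}$ valid almost everywhere, since on $\{\rho=0\}$ the right-hand side is an indeterminate $0\cdot\infty$. In the remaining cases $s\notin[0,1]$ the rearranged identity ($Q=P^{1/s}R^{(s-1)/s}$ or $R=P^{1/(1-s)}Q^{-s/(1-s)}$) extends to $\{\rho=0\}$ as $0=0$, so no extra assumption is needed there.
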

Although the proof is similar to that of the duality between exponential integrals and \renyi divergence~\cite{Atar2015}, it requires more careful analysis because we do not assume the upper boundedness of $ g $ and the lower boundedness of $ h $, unlike in \cite{Atar2015}.

\begin{proof}
For notational simplicity, we often drop $ \bbU $ as $ L^\alpha $.
First, we note that it is sufficient to prove that for $ \alpha > 0, \alpha \neq 1 $, $ g\in \uline{\calB}_{\{\alpha-1, -1\}} $, and $ h \in \oline{\calB}_{\{\alpha, 1\}} $, it holds that
	\begin{align}
		\frac{1}{\alpha - 1} \log \left[ \int \exp((\alpha - 1) g(u)) \rmd u \right] &= \inf_{\rho\in L^{1-\alpha}} \left\{ \frac{1}{\alpha} \log \left[ \int \exp(\alpha  g(u)) \rho (u) \rmd u \right]-  \calH_{1-\alpha} (\rho) \right\} \label{eq:dual}, \\
		\frac{1}{\alpha} \log \left[ \int \exp(\alpha h(u)) \rmd u \right] &= \sup_{\rho \in L^\alpha} \left\{ \frac{1}{\alpha -1} \log \left[ \int \exp((\alpha - 1)h(u)) \rho (u) \rmd u \right] + \calH_\alpha (\rho)  \right\}, \label{eq:dual_sup}
	\end{align}
	and 
	\begin{align}
		\rho^*(u) := \frac{\exp(-g(u))}{\int \exp(-g(u')) \rmd u'} ,~~  \rho^{**}(u) := \frac{\exp(h(u))}{\int \exp(h(u')) \rmd u'} \label{eq:opt_rho}
	\end{align}
	are the unique optimal solutions to \eqref{eq:dual}, \eqref{eq:dual_sup}, respectively.
	To see this, note that if \eqref{eq:dual},~\eqref{eq:dual_sup} hold for $ \alpha > 0, \alpha \neq 1 $, they hold for any $ \alpha \in \bbR \setminus \{0,1\} $. Indeed, when $ \alpha < 0 $, let $ \bar{\alpha} := 1-\alpha > 1 $ and for $ h\in \oline{\calB}_{\{\alpha,1\}} $, let $ \bar{g} := -h $. Since $ \bar{g} \in \uline{\calB}_{\{\bar{\alpha}-1, -1\}} $, by \eqref{eq:dual}, we have
	\begin{align}
		\frac{1}{\bar{\alpha} - 1} \log \left[ \int \exp((\bar\alpha - 1) \bar{g}(u)) \rmd u \right] &= \inf_{\rho\in L^{1-\bar{\alpha}} } \left\{ \frac{1}{\bar\alpha} \log \left[ \int \exp(\bar\alpha  \bar{g}(u)) \rho (u) \rmd u \right]-  \calH_{1-\bar\alpha} (\rho) \right\} .\nonumber
	\end{align}
	Therefore, it holds that
	\begin{align*}
		-\frac{1}{\alpha} \log \left[ \int \exp ( \alpha h(u) ) \rmd u \right] &= \inf_{\rho \in L^\alpha} \left\{ \frac{1}{1-\alpha} \log \left[ \int \exp((\alpha-1) h(u)) \rho(u) \rmd u \right]  - \calH_{\alpha} (\rho)  \right\} \\
		&= -\sup_{\rho \in L^\alpha} \left\{ \frac{1}{\alpha-1} \log \left[ \int \exp((\alpha-1) h(u)) \rho(u) \rmd u \right]  + \calH_{\alpha} (\rho)  \right\} ,
	\end{align*}
	which means that for any $ \alpha < 0 $ and any $ h\in \oline{\calB}_{\alpha,1} $, \eqref{eq:dual_sup} holds. Similarly, by considering $ \bar{h} := -g \in \uline{\calB}_{\{\bar{\alpha},1\}} $ for $ g\in \uline{\calB}_{\{\alpha-1,-1\}} $, we can see that for any $ \alpha < 0 $ and any $ g\in \uline{\calB}_{\{\alpha-1,-1\}} $, \eqref{eq:dual} holds.
	Additionally, \eqref{eq:dual} and \eqref{eq:dual_sup} with $ \alpha = \frac{\gamma}{\gamma - \beta},\ g = (\gamma - \beta) \wtilde{g},\ h = (\gamma - \beta) \wtilde{h}  $ coincide with \eqref{eq:dual_general_unbounded_app}, \eqref{eq:dual_sup_general_unbounded} where \rev{$ g $ and $ h $ are replaced by $ \wtilde{g} $, $ \wtilde{h} $}.

In what follows, for $ \alpha > 0 $, $ \alpha \neq 1 $, we prove \eqref{eq:dual}.
Note that when $ \rho \in L^{1-\alpha} $, $ |\calH_{1-\alpha} (\rho)| < \infty $ holds. Hence, for the minimization of \eqref{eq:dual}, it is sufficient to consider $ \rho \in \calP_{\alpha,g}\cap L^{1-\alpha} $.
The density $ \rho^* $ defined in \eqref{eq:opt_rho} fulfills $ \rho^* \in \calP_{\alpha,g} \cap L^{1-\alpha} $ because $ g\in \uline{\calB}_{\{\alpha-1,-1\}} $, and it can be easily seen that
	\begin{align}
		\frac{1}{\alpha - 1} \log \left[ \int \exp((\alpha - 1) g(u)) \rmd u \right] =  \frac{1}{\alpha} \log \left[ \int \exp(\alpha  g(u)) \rho^* (u) \rmd u \right]-  \calH_{1-\alpha} (\rho^*) . \label{eq:opt_equal}
	\end{align}

	First, we consider the case $ \alpha > 1 $.	Define $ \wtilde{\rho} (u) := \exp((\alpha -1) g(u)) $, $ \varphi(u) := \exp(-g(u)) $. Then, by H\"{o}lder's inequality, for any $ \rho \in \calP_{\alpha,g} \cap L^{1-\alpha} $, it holds that
	\begin{align}
		\int \wtilde{\rho} (u) \rmd u &= \int \left( \frac{\varphi(u)}{\rho (u)} \right)^{\frac{\alpha - 1}{\alpha}} \left( \frac{\rho(u)}{\varphi(u)}  \right)^{\frac{\alpha-1}{\alpha}} \wtilde{\rho}(u) \rmd u \nonumber\\
		&\le \left( \int \left( \frac{\varphi(u)}{\rho (u)} \right)^{\alpha -1} \wtilde{\rho} (u) \rmd u \right)^{\frac{1}{\alpha}} \left( \int \frac{\rho(u)}{\varphi(u)} \wtilde{\rho} (u) \rmd u   \right)^{\frac{\alpha-1}{\alpha}} \nonumber \\
		&= \left( \int \rho(u)^{1-\alpha} \rmd u \right)^{\frac{1}{\alpha}} \left( \int \exp(\alpha g(u)) \rho(u)  \rmd u   \right)^{\frac{\alpha-1}{\alpha}} . \label{eq:halder1}
	\end{align}
	Noting that $ \alpha - 1 > 0 $ and taking the logarithm of \eqref{eq:halder1}, we get for any $ \rho \in \calP_{\alpha,g} \cap L^{1-\alpha} $,
	\begin{align}
		&\frac{1}{\alpha - 1} \log \left[ \int \exp((\alpha - 1) g(u)) \rmd u \right] \le   \frac{1}{\alpha} \log \left[ \int \exp(\alpha  g(u)) \rho (u) \rmd u \right]-  \calH_{1-\alpha} (\rho) . \nonumber
	\end{align}
	Combining this with \eqref{eq:opt_equal}, the relation \eqref{eq:dual} holds, and by \eqref{eq:opt_equal}, $ \rho^* $ in \eqref{eq:opt_rho} is an optimal solution. The equality of H\"{o}lder's inequality \eqref{eq:halder1} holds if and only if there exist $ a_1,a_2 \ge 0$, $a_1a_2 \neq 0 $ such that
	$
		a_1 \left( \frac{\varphi(u)}{\rho(u)}\right)^{1-\alpha} = a_2 \frac{\rho(u)}{\varphi(u)}
	$
    holds $ \wtilde{\mu} $-almost everywhere. Here, $ \wtilde{\mu} $ is the measure defined by $ \wtilde{\rho} $.
	This condition is satisfied only for $ \rho^* $, that is, it is an unique optimal solution.

	Next, we analyze the case $ \alpha \in (0,1) $.
	By H\"{o}lder's inequality, for any $ \rho \in \calP_{\alpha,g}  $,
	\begin{align}
		\int \left(\frac{\varphi(u)}{\rho(u)}\right)^{\alpha-1} \wtilde{\rho} (u) \rmd u &\le \left( \int 1^{1/\alpha} \wtilde{\rho} (u) \rmd u  \right)^{\alpha} \left( \int \left[ \left(\frac{\varphi(u)}{\rho(u)}\right)^{\alpha-1} \right]^{\frac{1}{1-\alpha}} \wtilde{\rho} (u) \rmd u  \right)^{1-\alpha} \nonumber\\
		&= \left( \int \wtilde{\rho} (u) \rmd u \right)^{\alpha} \left( \int \frac{\rho(u)}{\varphi (u)} \wtilde{\rho} (u) \rmd u  \right)^{1-\alpha} , \nonumber
	\end{align}
	which yields
	\begin{align}
		\frac{1}{\alpha-1} \log \left[ \int \exp ((\alpha-1) g(u)) \rmd u \right] \le \frac{1}{\alpha} \left[ \int  \exp(\alpha g(u)) \rho(u)\rmd u  \right] - \calH_{1-\alpha} (\rho), ~~ \forall \rho \in \calP_{\alpha,g} .\label{eq:holder_small_alpha}
	\end{align}
	Then, similar to the case $ \alpha > 1 $, it can be seen that for $ \alpha \in (0,1) $, \eqref{eq:dual} holds and $ \rho^* $ is a unique optimal solution.

	Next, we show \eqref{eq:dual_sup} for $ \alpha > 1 $. Since $ \alpha > 1 $ and $ h $ is upper bounded, it holds that $ \rho \in \calP_{\alpha-1,h} $.
	The density $ \rho^{**} $ defined in \eqref{eq:opt_rho} satisfies $ \rho^{**} \in \calP_{\alpha-1,h} \cap L^\alpha $ because $ h \in \calB_{\{\alpha,1\}} $, and one can easily see that
    \begin{align}
        \frac{1}{\alpha} \log \left[ \int \exp(\alpha h(u)) \rmd u \right] =\frac{1}{\alpha -1} \log \left[ \int \exp((\alpha - 1)h(u)) \rho^{**} (u) \rmd u \right] + \calH_\alpha (\rho^{**})  . \nonumber
    \end{align}
    Define $ \what{\rho}(u) := \exp((\alpha-1) h(u)) \rho (u) $, $ \lambda (u) := \exp(-h(u)) \rho(u) $. Then, by H\"{o}lder's inequality, for any $ \rho \in L^\alpha $, it holds that
	\begin{align}
		\int \what{\rho} (u) \rmd u &= \int \lambda(u)^{\frac{\alpha-1}{\alpha}} \lambda(u)^{-\frac{\alpha-1}{\alpha}} \what{\rho}(u) \rmd u \nonumber\\
		&\le \left( \int \lambda(u)^{\alpha -1} \what{\rho} (u) \rmd u \right)^{\frac{1}{\alpha}} \left( \int \lambda(u)^{-1} \what{\rho} (u)  \rmd u   \right)^{\frac{\alpha-1}{\alpha}} \nonumber \\
		&= \left( \int \rho(u)^\alpha  \rmd u\right)^{\frac{1}{\alpha}} \left( \int \exp(\alpha h(u))  \rmd u   \right)^{\frac{\alpha-1}{\alpha}} . \nonumber
	\end{align}
It follows \rev{from the above} that for any $ \rho \in L^\alpha $,
\begin{align}
	\frac{1}{\alpha -1} \log \left[ \int \exp((\alpha -1) h(u)) \rho (u) \rmd u \right] \le \frac{1}{\alpha} \log \left[ \int \exp(\alpha h(u)) \rmd u \right] - \calH_\alpha (\rho) . \nonumber
\end{align}
Hence, by the same argument as for \eqref{eq:dual}, we can show that \eqref{eq:dual_sup} holds for $ \alpha > 1 $, and $ \rho^{**} $ is a unique optimal solution.

Lastly, we show \eqref{eq:dual_sup} for $ \alpha \in (0,1) $. For $ \rho \in L^\alpha $, it holds that $ |\calH_\alpha (\rho)| < \infty $. Then, noting that $ \alpha - 1 < 0 $, it is sufficient to perform the maximization in \eqref{eq:dual_sup} for $ \rho \in \calP_{\alpha-1,h}\cap L^\alpha $. By H\"{o}lder's inequality, for any $ \rho \in \calP_{\alpha-1,h} $, we have
	\begin{align}
		\int \rho^\alpha \rmd u  = \int \lambda(u)^{\alpha-1} \what{\rho} (u) \rmd u &\le \left( \int 1^{1/\alpha} \what{\rho} (u) \rmd u  \right)^{\alpha} \left( (\lambda(u)^{\alpha-1})^{\frac{1}{1-\alpha}} \what{\rho} (u) \rmd u \right)^{1-\alpha} \nonumber\\
		&= \left( \int \exp((\alpha-1) h(u)) \rho (u)\rmd u  \right)^\alpha \left( \int \exp(\alpha h(u)) \rmd u \right)^{1-\alpha} .\nonumber
	\end{align}
Therefore,
\begin{align}
	\frac{1}{\alpha -1} \log \left[ \int \exp((\alpha -1) h(u)) \rho (u) \rmd u \right] \le \frac{1}{\alpha} \log \left[ \int \exp(\alpha h(u)) \rmd u \right] - \calH_\alpha (\rho) , \nonumber
\end{align}
and similar to the case $ \alpha > 1 $, we arrive at \eqref{eq:dual_sup} for $ \alpha \in (0,1) $, and the unique optimal solution is $ \rho^{**} $. This completes the proof.
\end{proof}

\section{Proof of Theorem~\ref{thm:opt_policy_renyi_reg}}\label{app:renyi_reg}
In this appendix, we analyze the following problem:
\begin{align}
	\underset{\{\pi_t\}_{t=0}^{\ft-1}}{\rm minimize} ~~ \frac{1}{\risk} \log \bbE\left[ \exp \left( \risk {c_\ft}(x_\ft) + \risk \sum_{t=0}^{\ft-1} \Bigl( c_t (x_t,u_t) -  \varepsilon\calH_{1-\varepsilon\risk} (\pi_t(\cdot | x_t))  \Bigr) \right)  \right], \label{prob:renyi_regularized_app}
\end{align}
where $ \varepsilon > 0 $, $ \eta \in \bbR \setminus \{0,\varepsilon^{-1}\} $, the system is given by \eqref{eq:system_app}--\eqref{eq:initial_app}, and $ \pi_t(\cdot | x) \in L^{1-\varepsilon\eta} (\bbU) := \{\rho \in \calP(\bbU) \ | \ \int_\bbU \rho(u)^{1-\varepsilon\eta} \rmd u < \infty\} $ for any $ x\in \bbX $ and $ t\in \bbra{0,\ft-1} $.

Define the value function and the Q-function associated with \eqref{prob:renyi_regularized_app} as
\begin{align}
	\scrV_t(x) &:= \inf_{\{\pi_s\}_{s=t}^{\ft-1}} \frac{1}{\risk} \log \bbE\left[ \exp \left( \risk {c_\ft}(x_\ft) + \risk \sum_{s=t}^{\ft-1} \Bigl( c_s (x_s,u_s) -  \varepsilon\calH_{1-\varepsilon\risk} (\pi_s(\cdot | x_s))  \Bigr) \right)  \ \middle| \ x_t = x \right] , \nonumber\\
	&\hspace{8cm} t\in \bbra{0,\ft-1}, \ x\in \bbX, \\
	\scrV_\ft(x) &:= {c_\ft} (x) , ~~ x\in \bbX, \nonumber\\
    \scrQ_t (x,u) &:= c_t (x,u) + \frac{1}{\risk} \log \bbE\left[ \exp \bigl(\risk \scrV_{t+1}(f_t(x,u,w_t) ) \bigr) \right], ~~ t \in \bbra{0,\ft-1}, \ x\in \bbX, \ u\in \bbU. \label{eq:Q_func_renyi_reg_app}
\end{align}
For the analysis, we assume the following conditions.
\begin{assumption}\label{ass:bounded_below}
    For any $ t\in \bbra{0,\ft} $, $ c_t $ is bounded below.
    \hfill $ \diamondsuit $
\end{assumption}
\begin{assumption}\label{ass:Q_bounded}
    \rev{The Q-function $ \scrQ_t $ in \eqref{eq:Q_func_renyi_reg_app} satisfies}
    \begin{align}
        \int_\bbU \exp\left(-\frac{\scrQ_t(x,u)}{\varepsilon}\right) \rmd u < \infty,  ~~  \int_\bbU \exp\left( - (1 - \varepsilon\risk) \frac{\scrQ_t(x,u)}{\varepsilon} \right) \rmd u < \infty  \label{eq:Q_assumption}
    \end{align}
    for any $ x\in \bbX $ and $ t\in \bbra{0,\ft-1} $.
    \hfill $ \diamondsuit $
\end{assumption}
For example, when $ c_t $ is bounded for any $ t \in \bbra{0,\ft} $, $ \scrQ_t $ is also bounded, and in addition, if $ \mu_L(\bbU) < \infty $, \eqref{eq:Q_assumption} holds. In the linear quadratic setting, Assumption~\ref{ass:Q_bounded} also holds without the boundedness of $ c_t $ and $ \bbU $.

    Now, we prove Theorem~\ref{thm:opt_policy_renyi_reg} by induction. First, for $ t= \ft-1 $, we have
    \begin{align}
        \scrV_{\ft-1}(x) &= \inf_{\pi_{\ft-1}(\cdot|x)\in L^{1-\varepsilon\eta}(\bbU)} \biggl\{-  \varepsilon\calH_{1-\varepsilon\risk} (\pi_{\ft-1} (\cdot | x)) \nonumber\\
        &\quad  +  \frac{1}{\risk} \log \left[ \int_\bbU \pi_{\ft-1}(u | x) \bbE\bigl[ \exp \left( \risk c_{\ft-1} (x,u) + \risk {c_\ft}(x_\ft)  \right)  \ \bigl| \ x_{\ft-1} = x,\ u_{\ft-1} =u \bigr] \rmd u \right] \biggr\} . \nonumber
    \end{align}
    The derivation is same as \eqref{eq:DP_renyi_app} and \eqref{eq:bellman_renyi_app}.
    By the definition of the Q-function in \eqref{eq:Q_func_renyi_reg_app}, it holds that
    \begin{align}
        \scrV_{\ft-1}(x) &= \inf_{\pi_{\ft-1}(\cdot|x)\in L^{1-\varepsilon\risk}(\bbU)} \left\{ \frac{1}{\risk} \log \left[\int_\bbU \pi_{\ft-1} (u | x) \exp(\risk \scrQ_{\ft-1} (x,u)) \rmd u \right]-  \varepsilon\calH_{1-\varepsilon\risk} (\pi_{\ft-1}(\cdot | x) ) \right\} .\label{eq:V_T-1_inf}
    \end{align}
    Since $ {c_\ft} $ and $ c_{\ft-1} $ are bounded below, $ \scrQ_{\ft-1} $ is also bounded below.
Therefore, by Assumption~\ref{ass:Q_bounded}, $ \scrQ_{\ft-1} (x,\cdot) \in \underline{\calB}_{-(\varepsilon^{-1} - \eta), -\varepsilon^{-1}} (\bbU) $ (see \eqref{eq:bounded_func} for the definition of $ \underline{\calB}_{a,b} $), and we can apply Lemma~\ref{lem:duality_unbounded_app} with $ \beta =  - (\varepsilon^{-1} - \risk) $, $\gamma = \risk $ to \eqref{eq:V_T-1_inf}. As a result, 
\begin{align}
	\scrV_{\ft-1}(x) = \frac{-1}{\varepsilon^{-1}-\risk} \log \left[ \int_\bbU \exp \left(-(\varepsilon^{-1}-\risk) \scrQ_{\ft-1}(x,u)\right) \rmd u \right] ,
\end{align}
and the unique optimal policy that minimizes the right-hand side of \eqref{eq:V_T-1_inf} is
\begin{align}
\pi_{\ft-1}^\star (u | x) = \frac{\exp\left(-\frac{\scrQ_{\ft-1}(x,u)}{\varepsilon}\right)}{\int_{\bbU} \exp\left(-\frac{\scrQ_{\ft-1}(x,u')}{\varepsilon}\right) \rmd u'} .
\end{align}
Moreover, since $ \scrQ_{\ft-1} $ is bounded below, $ \scrV_{\ft-1} $ is also bounded below.

Next, we assume the induction hypothesis that for some $ t\in \bbra{0,\ft-2} $, $ \{\pi_s^\star\}_{s=t+1}^{\ft-1} $ is the unique optimal policy of the minimization in the definition of $ \scrV_{t+1} $, and $ \scrV_{t+1} $ is bounded below. 
By definition, we have
\begin{align}
	&\scrV_t(x) = \inf_{\{\pi_s\}_{s=t}^{\ft-1}} \frac{1}{\risk} \log \bbE\Biggl[ \exp \Biggl( \risk c_t (x,u_t) - \varepsilon\risk  \calH_{1-\varepsilon\risk} (\pi_t (\cdot | x)) + \risk {c_\ft}(x_\ft) \nonumber\\
    &\qquad\qquad\qquad +  \risk \sum_{s=t+1}^{\ft-1} \Bigl( c_s (x_s,u_s) -  \varepsilon\calH_{1-\varepsilon\risk} (\pi_s(\cdot | x_s))  \Bigr) \Biggr)  \ \Biggl| \ x_t = x \Biggr] \nonumber\\
	&= \inf_{\{\pi_s\}_{s=t}^{\ft-1}} - \varepsilon \calH_{1-\varepsilon\risk} (\pi_t (\cdot | x)) \nonumber\\
	&\quad +  \frac{1}{\risk} \log \bbE\left[ \exp \left( \risk c_t (x,u_t) + \risk {c_\ft}(x_\ft) + \risk \sum_{s=t+1}^{\ft-1} \bigl( c_s (x_s,u_s) -\varepsilon  \calH_{1-\varepsilon\risk} (\pi_s(\cdot | x_s))  \bigr) \right)  \ \middle| \ x_t = x \right] \nonumber\\
	&= \inf_{\{\pi_s\}_{s=t}^{\ft-1}} -  \varepsilon\calH_{1-\varepsilon\risk} (\pi_t (\cdot | x))  +  \frac{1}{\risk} \log \Biggl[ \int_\bbU \pi_t(u | x) \bbE\Biggl[ \exp \biggl( \risk c_t (x,u) + \risk {c_\ft}(x_\ft) \nonumber\\
    &\qquad\qquad + \risk \sum_{s=t+1}^{\ft-1} \bigl( c_s (x_s,u_s) -  \varepsilon\calH_{1-\varepsilon\risk} (\pi_s(\cdot | x_s))  \bigr) \biggr)  \ \Biggl| \ x_t = x,\ u_t =u \Biggr] \rmd u \Biggr] \nonumber\\
	&= \inf_{\pi_t (\cdot|x) \in L^{1-\varepsilon\eta}(\bbU) }  -  \varepsilon\calH_{1-\varepsilon\risk} (\pi_t (\cdot | x)) +  \frac{1}{\risk} \log \Biggl[ \int \pi_t(u | x) \exp(\risk c_t (x,u))  \nonumber\\ 
    &\times \bbE_{\{\pi_s^\star\}_{s=t+1}^{\ft-1}}\biggl[ \exp \biggl( \risk {c_\ft}(x_\ft)  + \risk \sum_{s=t+1}^{\ft-1} \bigl( c_s (x_s,u_s) - \varepsilon \calH_{1-\varepsilon\risk} (\pi_s^\star(\cdot | x_s))  \bigr) \biggr)  \ \biggl| \ x_t = x,\ u_t =u \biggr] \rmd u \Biggr] . \label{eq:DP_renyi_app}
\end{align}
Moreover, noting that
\begin{align}
	\exp (\risk \scrV_{t+1}(x) ) =   \bbE_{\{\pi_s^\star\}_{s=t+1}^{\ft-1}}\left[ \exp \left( \risk {c_\ft}(x_\ft) + \risk \sum_{s=t+1}^{\ft-1} \bigl( c_s (x_s,u_s) -  \varepsilon\calH_{1-\varepsilon\risk} (\pi_s^\star(\cdot | x_s))  \bigr) \right)  \ \middle| \ x_{t+1} = x \right] ,\nonumber
\end{align}
we get
\begin{align}
	\scrV_t(x) &= \inf_{\pi_t(\cdot|x) \in L^{1-\varepsilon\eta} (\bbU)} - \varepsilon \calH_{1-\varepsilon\risk} (\pi_t(\cdot | x))  \nonumber\\
    &\qquad \qquad\qquad + \frac{1}{\risk} \log \left[\int \pi_t (u | x) \exp(\risk c_t(x,u)) \bbE\left[  \exp\bigl(\risk \scrV_{t+1}(f_t(x,u,w_t) )\bigr)  \right] \rmd u \right] . \label{eq:bellman_renyi_app} 
\end{align}

By using $ \scrQ_t $, the above equation can be written as
\begin{align}
	\scrV_t(x) &= \inf_{\pi_t(\cdot|x) \in L^{1-\varepsilon\eta}(\bbU)}  \frac{1}{\risk} \log \left[\int_{\bbU} \pi_t (u | x) \exp(\risk \scrQ_t (x,u)) \rmd u \right]- \varepsilon \calH_{1-\varepsilon\risk} (\pi_t(\cdot | x) )  .
\end{align}
Since we assumed that $ \scrV_{t+1} $ is bounded below, $ \scrQ_t $ is also bounded below. By combining this with Assumption~\ref{ass:Q_bounded}, it holds that $ \scrQ_{t} (x,\cdot) \in \underline{\calB}_{-(\varepsilon^{-1} - \eta), -\varepsilon^{-1}} (\bbU) $.
Thus, by Lemma~\ref{lem:duality_unbounded_app}, the unique optimal policy that minimizes the right-hand side of the above equation is 
\begin{align}
    \pi_t^\star (u | x) = \frac{\exp\left(-\frac{\scrQ_t(x,u)}{\varepsilon}\right)}{\int_{\bbU} \exp\left(-\frac{\scrQ_t(x,u')}{\varepsilon}\right) \rmd u'}
\end{align}
and
\begin{align}
	\scrV_t(x) = \frac{-1}{\varepsilon^{-1}-\risk} \log \left[ \int_{\bbU} \exp \left(-(\varepsilon^{-1}-\risk) \scrQ_t(x,u)\right) \rmd u \right] .
\end{align}
Lastly, since $ \scrQ_{t} $ is bounded below, $ \scrV_{t} $ is also bounded below.
This completes the induction step, and we obtain Theorem~\ref{thm:opt_policy_renyi_reg}.

\section{Proof of Proposition~\ref{prop:policy_gradient}}\label{app:policy_gradient}
By using the relation $ \nabla_\theta \log p_\theta (\tau) = \nabla_\theta p_\theta (\tau) / p_\theta (\tau) $, we obtain
\begin{align}
	\nabla_\theta J(\theta) &= \int p_\theta (\tau) \exp(\eta C_\theta (\tau)) \bigl(\eta \nabla_\theta C_\theta (\tau) + \nabla_\theta \log p_\theta (\tau)\bigr) \rmd \tau .\nonumber
\end{align}
In addition, by the expression
\begin{align}
	p_\theta (\tau) = p(x_0) \prod_{t=0}^{\ft-1} p(x_{t+1} | x_t,u_t) \pi^{(\theta)} (u_t | x_t) ,\nonumber
\end{align}
we get
\begin{align}
	&\nabla_\theta J(\theta) = \int p_\theta (\tau) \exp(\eta C_\theta (\tau)) \left( \eta \sum_{t=0}^{\ft-1} \nabla_\theta \log \pi^{(\theta)} (u_t|x_t) + \sum_{t=0}^{\ft-1} \nabla_\theta \log \pi^{(\theta)} (u_t|x_t) \right) \rmd \tau \nonumber\\
	&= (\eta + 1) \bbE_{p_\theta (\tau)} \left[ \left( \sum_{t=0}^{\ft-1} \nabla_\theta \log \pi^{(\theta)} (u_t | x_t) \right) \exp \left( \eta c_\ft(x_\ft) + \eta \sum_{t=0}^{\ft-1} \bigl( c_t (x_t ,u_t) + \log \pi^{(\theta)} (u_t | x_t) \bigr) \right) \right] . \label{eq:grad_all}
\end{align}
Note that for any $ h : (\bbX)^{t+1} \times (\bbU)^{t+1} \rightarrow \bbR $, it holds that
	\begin{align*}
	\bbE\left[ h(x_{0:t}, u_{0:t})  \right] &= \int h(x_{0:t}, u_{0:t})  p(x_0) \prod_{s = 0}^{\ft-1} p(x_{s + 1} | x_s , u_s) \pi^{(\theta)} (u_s | x_s) \rmd x_{0:\ft} \rmd u_{0:\ft-1} \\
	&= \int h(x_{0:t}, u_{0:t})  p(x_0) \prod_{s = 0}^{\ft-2} p(x_{s + 1} | x_s , u_s) \pi^{(\theta)} (u_s | x_s) \\
	&\quad \times \left[ \int p(x_{\ft} | x_{\ft-1}, u_{\ft-1}) \pi^{(\theta)} (u_{\ft-1} | x_{\ft-1}) \rmd x_\ft \rmd u_{\ft-1}  \right]   \rmd x_{0:\ft-1} \rmd u_{0:\ft-2} \\
	&= \int h(x_{0:t}, u_{0:t})  p(x_0) \prod_{s = 0}^{\ft-2} p(x_{s + 1} | x_s , u_s) \pi^{(\theta)} (u_s | x_s)\rmd x_{0:\ft-1} \rmd u_{0:\ft-2} \\
	& \vdots \\
	&=\int h(x_{0:t}, u_{0:t}) \pi^{(\theta)} (u_t | x_t) p(x_0) \prod_{s = 0}^{t-1} p(x_{s + 1} | x_s , u_s) \pi^{(\theta)} (u_s | x_s) \rmd x_{0:t} \rmd u_{0:t} .
	\end{align*}
It follows from the above that
\begin{align}
	&\bbE_{p_\theta (\tau)} \left[ \nabla_\theta \log \pi^{(\theta)} (u_t | x_t) \exp\left( \eta \sum_{s = 0}^{t-1} \bigl( c_s(x_s,u_s) + \log \pi^{(\theta)} (u_s | x_s) \bigr) \right) \right] \nonumber\\
	&= \int \nabla_\theta \log \pi^{(\theta)} (u_t | x_t) \exp\left( \eta \sum_{s = 0}^{t-1} \bigl( c_s(x_s,u_s) + \log \pi^{(\theta)} (u_s | x_s) \bigr) \right) \nonumber\\
    &\qquad \times \pi^{(\theta)} (u_t | x_t) p(x_0) \prod_{s = 0}^{t-1} p(x_{s + 1} | x_s , u_s) \pi^{(\theta)} (u_s | x_s) \rmd x_{0:t} \rmd u_{0:t} \nonumber\\
	&= \int \nabla_\theta \pi^{(\theta)} (u_t | x_t) \exp\left( \eta \sum_{s = 0}^{t-1} \bigl( c_s(x_s,u_s) + \log \pi^{(\theta)} (u_s | x_s) \bigr) \right) \nonumber\\
    & \qquad \times p(x_0) \prod_{s = 0}^{t-1} p(x_{s + 1} | x_s , u_s) \pi^{(\theta)} (u_s | x_s) \rmd x_{0:t} \rmd u_{0:t} \nonumber\\
	&= \int \left( \nabla_\theta \int \pi^{(\theta)} (u_t | x_t) \rmd u_t \right)\exp\left( \eta \sum_{s = 0}^{t-1} \bigl( c_s(x_s,u_s) + \log \pi^{(\theta)} (u_s | x_s) \bigr) \right) \nonumber\\ 
    &\qquad \times p(x_0) \prod_{s = 0}^{t-1} p(x_{s + 1} | x_s , u_s) \pi^{(\theta)} (u_s | x_s) \rmd x_{0:t} \rmd u_{0:t-1}  \nonumber\\
	&= 0. \label{eq:zero}
\end{align}
By combining this with \eqref{eq:grad_all}, we get
\begin{align}
	&\nabla_\theta J(\theta) \nonumber\\
    &= (\eta + 1) \bbE_{p_\theta (\tau)} \left[ \sum_{t=0}^{\ft-1} \nabla_\theta \log \pi^{(\theta)} (u_t  | x_t)  \exp\left( \eta c_\ft(x_\ft) +  \eta \sum_{s = t}^{\ft-1} \bigl( c_s(x_s,u_s) + \log \pi^{(\theta)} (u_s | x_s) \bigr) \right) \right] .
\end{align}

Lastly, for any function $ b : \bbR^n \rightarrow \bbR $, it holds that
\begin{align*}
	&\bbE_{p_\theta (\tau)} [ \nabla_\theta \log \pi^{(\theta)} (u_t | x_t) b(x_t) ] = \int p_\theta(x_t,u_t) \frac{\nabla_\theta \pi^{(\theta)} (u_t | x_t)}{\pi^{(\theta)} (u_t | x_t)} b(x_t) \rmd x_t \rmd u_t \\
	&= \int p(x_t) b(x_t) \nabla_\theta \pi^{(\theta)} (u_t | x_t) \rmd u_t \rmd x_t = 0 .
\end{align*}
This completes the proof.

\section{Proof of Proposition~\ref{prop:opt_factor}}\label{app:opt_factor}
By definition, 
\begin{align}
	\pi_t^\bullet = \argmin_{\pi_t\in \calP(\bbU)} \frac{1}{\eta} \log \left[ \int p^\pi (\tau) \left( \frac{\prod_{s=0}^{\ft-1}\pi_s(u_s|x_s)}{p(O_\ft | x_\ft) \prod_{s=0}^{\ft-1} p(\calO_s | x_s , u_s) } \right)^\eta \rmd \tau \right] .
\end{align}
The term between the brackets is
\begin{align}
	&\int p^\pi (x_{0:t},u_{0:t}) \nonumber\\
    & \times  \left( \int p^\pi(x_{t+1:\ft},u_{t+1:\ft} | x_t, u_t) \left[ \frac{\prod_{s=0}^{\ft-1} \pi_s (u_s | x_s)}{p(O_\ft | x_\ft) \prod_{s=0}^{\ft-1} p(\calO_s | x_s , u_s) } \right]^\eta \rmd x_{t+1:\ft} \rmd u_{t+1:\ft}  \right) \rmd x_{0:t} \rmd u_{0:t} \nonumber\\
	&= \int p^\pi(x_{0:t},u_{0:t}) \left[ \frac{\prod_{s=0}^{t-1} \pi_s (u_s | x_s)}{\prod_{s = 0}^{t-1} p(\calO_s | x_s , u_s)} \right]^\eta \nonumber\\
	& \times  \underbrace{\left( \int p^\pi (x_{t+1:\ft}, u_{t+1:\ft} | x_t, u_t) \left[ \frac{\prod_{s=t}^{\ft-1} \pi_s (u_s | x_s)}{p(\calO_t | x_\ft) \prod_{s = t}^{\ft-1} p(\calO_s | x_s , u_s)} \right]^\eta \rmd x_{t+1:\ft} \rmd u_{t+1:\ft} \right)}_{=: M} \rmd x_{0:t} \rmd u_{0:t} , \nonumber
\end{align}
where
\begin{align}
	M &= \pi_t (u_t | x_t)^\eta \int p^\pi(x_{t+1:\ft}, u_{t+1:\ft} | x_t, u_t) \left[ \frac{\prod_{s=t+1}^{\ft-1} \pi_s(u_s | x_s)}{p(\calO_\ft | x_\ft)\prod_{s = t}^{\ft-1} p(\calO_s | x_s , u_s)} \right]^\eta \rmd x_{t+1:\ft} \rmd u_{t+1:\ft} . \nonumber
\end{align}
In addition, by the expression $ p^\pi(x_{0:t}, u_{0:t}) = p(x_0) \pi_t(u_t | x_t) \prod_{s = 0}^{t-1} p(x_{s + 1} | x_s , u_s) \pi_s(u_s | x_s) $,
\begin{align}
	\pi_t^\bullet &= \argmin_{\pi_t} \frac{1}{\eta} \log \Biggl[ \int \pi_t(u_t | x_t)^{1+\eta} \nonumber\\
    &\qquad\qquad\qquad \times \bbE_{p^\pi(x_{t+1:\ft}, u_{t+1:\ft} | x_t, u_t)} \Biggl[ \Biggl( \frac{\prod_{s = t+1}^{\ft-1} \pi_s(u_s | x_s)}{p(\calO_t | x_\ft) \prod_{s = t}^{\ft-1} p(\calO_s | x_s , u_s)}   \Biggr)^\eta \Biggr] \rmd x_t \rmd u_t \Biggr] . \label{eq:pi_bullet_1}
\end{align}
Now, define
\begin{align}
	\what{\pi}_t (u_t | x_t) &:= \frac{1}{Z_t(x_t)} \left( \bbE_{p^\pi(x_{t+1:\ft}, u_{t+1:\ft} | x_t, u_t)} \left[ \left( \frac{\prod_{s = t+1}^{\ft-1} \pi_s(u_s | x_s)}{p(\calO_t | x_\ft) \prod_{s = t}^{\ft-1} p(\calO_s | x_s , u_s)}   \right)^\eta \right] \right)^{-1/\eta} ,\\
	Z_t(x_t) &:= \int \left( \bbE_{p^\pi(x_{t+1:\ft}, u_{t+1:\ft} | x_t, u_t)}  \left[ \left( \frac{\prod_{s = t+1}^{\ft-1} \pi_s (u_s | x_s)}{p(\calO_t | x_\ft) \prod_{s = t}^{\ft-1} p(\calO_s | x_s , u_s)}   \right)^\eta \right] \right)^{-1/\eta} \rmd u_t .
\end{align}
Then, \eqref{eq:pi_bullet_1} can be rewritten as
\begin{align}
	\pi_t^\bullet  &= \argmin_{\pi_t } \frac{1}{\eta} \log \left[\int_\bbX Z_t(x_t)^\eta \int_\bbU \what{\pi}_t(u_t | x_t) \left(\frac{\pi_t(u_t |x_t)}{\what{\pi}_t(u_t | x_t)}\right)^{1+\eta}  \rmd u_t  \rmd x_t \right] .
\end{align}
By Jensen's inequality, for any $ \risk > -1 $, $\eta \neq 0 $, it holds that
\begin{align}
    &\frac{1}{\eta } \log \left[\int_\bbX Z_t(x_t)^\eta \int_\bbU \what{\pi}_t(u_t | x_t) \left(\frac{\pi_t(u_t |x_t)}{\what{\pi}_t(u_t | x_t)}\right)^{1+\eta}  \rmd u_t  \rmd x_t \right] \nonumber\\
    &\ge \frac{1}{\eta} \log \left[\int_\bbX Z_t(x_t)^\eta \left( \int_\bbU \what{\pi}_t(u_t | x_t) \frac{\pi_t(u_t |x_t)}{\what{\pi}_t(u_t | x_t)}  \rmd u_t \right)^{1+\eta}  \rmd x_t \right] \nonumber \\
    &= \frac{1}{\eta} \log \left[ \int_\bbX Z_t(x_t)^\eta \rmd x_t \right] , \label{eq:inf}
\end{align}
where the equality holds if and only if $ \pi(\cdot | x_t) \ll \what{\pi}_t (\cdot | x_t) $, and $ \pi_t (\cdot | x_t)/ \what{\pi}_t (\cdot | x_t) $ is constant $ \what{\bbP}_{x_t} $-almost everywhere. Here, $ \what{\bbP}_{x_t} $ is the probability distribution associated with $ \what{\pi}_t(\cdot | x_t) $. Hence, the infimum \eqref{eq:inf} is attained only by $ \pi_t = \what{\pi}_t $. This completes the proof.

\section{Details of the experiment}\label{app:experiment}
The implementation of the risk-sensitive SAC (RSAC) algorithm follows the stable-baselines3~\cite{stable-baselines3} version of the SAC algorithm, which means that the RSAC algorithm also implements some tricks including reparameterization, minibatch sampling with a replay buffer, target networks, and double Q-network. Now, we introduce a series of hyperparameters listed in Table \ref{tbl:hyperparameters} shared for both SAC and RSAC algorithms.
\begin{table}[h]
    \centering
    \caption{SAC and RSAC Hyperparameters}\label{tbl:hyperparameters}
    \begin{tabular}{ll}
    \toprule
          Parameter & Value \\
    \midrule
          optimizer & Adam~\cite{kingma2014adam}\\
          learning rate & $10^{-3}$  \\
          discount factor & $0.99$  \\
          regularization coefficient  & $0.1$ \\
          target smoothing coefficient  & $0.005$ \\
          replay buffer size & $10^5$ \\
          number of critic networks  & $2$ \\
          number of hidden layers (all networks) & $2$ \\
          number of hidden units per layer  & $256$ \\
          number of samples per minibatch  & $256$ \\
          activation function  & ReLU \\
    \bottomrule
    \end{tabular}
\end{table}

As mentioned in Section~\ref{sec:experiment}, there were no significant differences in the control performance obtained or the behavior during training shown in Fig.~\ref{graph:training process} with those hyperparameters. However, when $\eta$ is too small or too large, the training process becomes unstable due to the gradient vanishing problem and the gradient exponential growth problem, respectively, leading to training failure. To this end, we compare the robustness of the trained policies with RSAC ($\eta \in \{-0.02,-0.01,0.01,0.02\}$) and the standard SAC, which corresponds to $ \eta = 0 $, in the experiment. For each learned policy, we do trail for $20$ times. For each trail, we take $100$ sampling paths to calculate the average episode cost. In Fig.~\ref{fig:robust}, the error bars depict the max and min values, and the points depict the mean value among the $20$ trails. We change the length of the pole $l$ in the Pendulum-v1 environment to test the robustness of the learned policies ($l=1.0$ m in the original environment).
For the training, we used an Ubuntu 20.04 server (GPU: NVIDIA GeForce RTX 2080Ti).
\fin{The code is available at \url{https://github.com/kaito-1111/risk-sensitive-sac.git}.}

\begin{figure}[h]
    \centering
    \begin{overpic}[width=8.2cm]{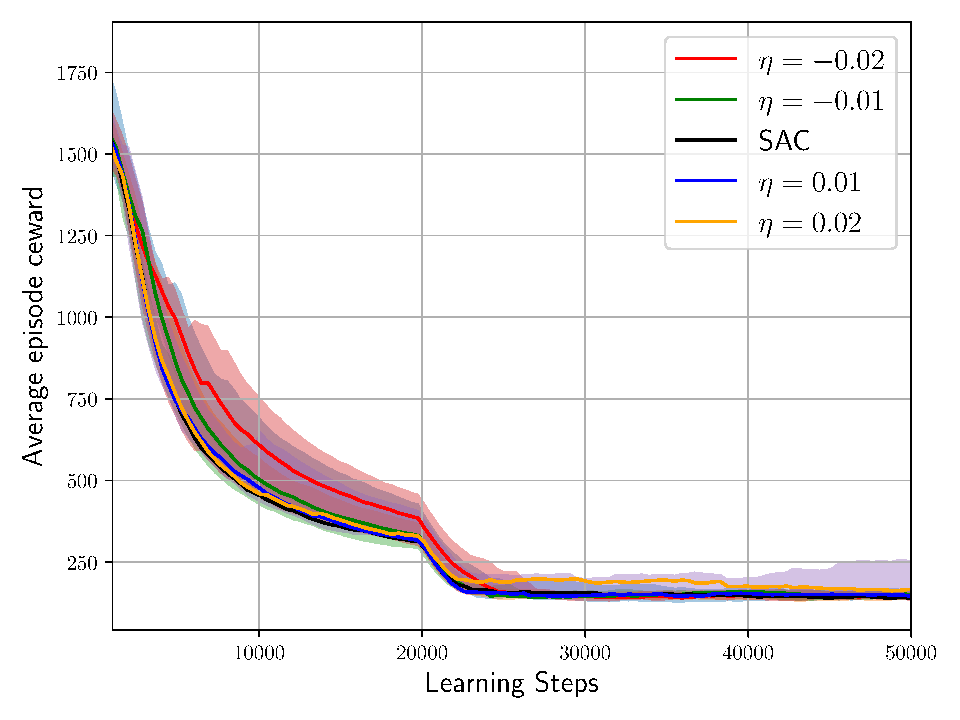}
        \put(1, 25){\scriptsize \rotatebox{90}{\colorbox{white}{~~~~Average episode cost~~~~}}}
        \put(44, 3){\scriptsize \colorbox{white}{Learning steps}}
    \end{overpic}
    \caption{Training process of RSAC (with different $\eta$) and SAC in terms of average episode cost.} \label{graph:training process}
\end{figure}


\end{document}